\newcommand{\bproof}{\begin{proof}}
\newcommand{\eproof}{\end{proof}}
\DeclareMathOperator\trace{tr}
\begin{document}
\bibliographystyle{plain}
\makeatletter

\renewcommand{\theequation}{\thesection.\arabic{equation}}
\numberwithin{equation}{section}
\renewcommand{\bar}{\overline}
\newtheorem{conjecture}{Conjecture}[section]
\newtheorem{theorem}{Theorem}[section]
\newtheorem{question}{Question}[section]
\newtheorem{proposition}{Proposition}[section]
\newtheorem{lemma}{Lemma}[section]
\newtheorem{corollary}{Corollary}[section]
\newtheorem{definition}{Definition}[section]
\newtheorem{algorithm}{Algorithm}[section]
\newtheorem{problem}{\em Problem}[section]
\newtheorem{remark}{Remark}[section]
\newtheorem{example}{Example}[section]
\newtheorem{case}{Case}[section]
\newtheorem{assumption}{Assumption}[section]

\renewcommand\proofname{\bf Proof} 
\def\eop{$\rule{1.3ex}{1.3ex}$}
\renewcommand\qedsymbol\eop  
\numberwithin{equation}{section}
\makeatletter

\newcommand{\hlam}{{\hat \lambda}}
\newcommand{\lh}{{\hat \lambda}}
\newcommand{\blam}{{\bar \lambda}}
\newcommand{\hS}{{\hat S}}
\newcommand{\zR}{{\R \backslash \{0\}}}
\renewcommand{\bar}{\overline}
\newcommand{\U}{{\cal U}}
\newcommand{\zRn}{{(\R \backslash \{0\})^n}}
\newcommand{\beq}{\begin{equation}}
\newcommand{\eeq}{\end{equation}} 
\newcommand{\bz}{{\bf z}} \newcommand{\bx}{{\bf x}}
\newcommand{\bt}{{\bf t}} \newcommand{\bi}{\begin{itemize}}
\newcommand{\be}{\begin{enumerate}} \newcommand{\ei}{\end{itemize}}
\newcommand{\ee}{\end{enumerate}} \newcommand{\calH}{{\cal H}}
\newcommand{\E}{{\cal E}} \newcommand{\J}{{\cal J}}
\newcommand{\Ga}{{\Gamma}} \newcommand{\ga}{{\gamma}}
\newcommand{\Om}{{\Omega}} \newcommand{\om}{{\omega}}
\newcommand{\gam}{{\gamma}} \newcommand{\La}{{\Lambda}}
\newcommand{\hbeta}{{\hat \beta}} \newcommand{\Gam}{{\Gamma}}
\newcommand{\homega}{{\hat \om}} \newcommand{\bS}{{\mathbb S}}
\newcommand{\R}{{\mathbb R}} \newcommand{\N}{{\mathbb N}}
\newcommand{\calE}{{\cal E}} \newcommand{\calG}{{\cal G}}
\newcommand{\calV}{{\cal V}} \newcommand{\calK}{{\cal K}}
\newcommand{\calN}{{\cal N}} \newcommand{\calU}{{\cal U}}
\newcommand{\calT}{{\cal T}} \newcommand{\calY}{{\cal Y}}
\newcommand{\calO}{{\cal O}} \newcommand{\calX}{{\cal X}}
\newcommand{\calW}{{\cal W}} \newcommand{\calI}{{\cal I}}
\newcommand{\W}{{\cal W}} \newcommand{\G}{{\cal G}}
\newcommand{\K}{{\cal K}} \newcommand{\OO}{{\bf O}}
\newcommand{\hK}{{\hat K}} \newcommand{\X}{{\cal X}}
\newcommand{\M}{{\cal M}} \newcommand{\KG}{{\cal K(\calG)}}
\newcommand{\lam}{{\lambda}} \newcommand{\tlam}{{\tilde{\lambda}}}
\newcommand{\calM}{{\cal M}} \newcommand{\calA}{{\cal A}}
\newcommand{\calB}{{\cal B}} \newcommand{\calL}{{\cal L}}
\newcommand{\calD}{{\cal D}} \newcommand{\calR}{{\cal R}}
\newcommand{\pp}{{\cal P}} \newcommand{\hc}{{\hat c}}
\newcommand{\ck}{{c_K}} \newcommand{\hL}{{\hat L}}
\newcommand{\tL}{{\bar L}} \newcommand{\sK}{{\SSS K}}
\newcommand{\hg}{{g_K}} \newcommand{\tf}{{f_K}}
\newcommand{\hy}{{y_K}} \newcommand{\haty}{{\hat y}}
\newcommand{\hG}{{\hat \Gam}} \newcommand{\vt}{{\vec t}}
\newcommand{\vv}{{\vec v}} \newcommand{\lb}{{\langle}}
\newcommand{\rb}{{\rangle}} \newcommand{\by}{{\bf y}}
\newcommand{\btau}{{\bf \tau}} \newcommand{\bu}{{\bf u}}
\newcommand{\bv}{{\bf v}} \newcommand{\tby}{\tilde{{\bf y}}}
\newcommand{\Sb}{{\bf S}} \newcommand{\Mb}{{\bf M}}
\newcommand{\Ob}{{\bf O}} \newcommand{\SSS}{\scriptscriptstyle}
\def\boldf#1{\hbox{\rlap{$#1$}\kern.4pt{$#1$}}}
\newcommand{\balpha}{{\boldf \alpha}} \newcommand{\wh}{\hat w}
\newcommand{\Wh}{\hat W} \newcommand{\wb}{\bar w}
\newcommand{\Wb}{\bar W} \newcommand{\xb}{\bar x}
\newcommand{\cb}{\bar c} \newcommand{\trans}{^{\scriptscriptstyle
\top}} \newcommand{\tW}{\tilde{W}} \newcommand{\tw}{\tilde{w}}
\newcommand{\tbeta}{{\tilde \beta}}
\newcommand{\betak}{{\beta^{(k)}}}

\newcommand{\figsheight}{4.0cm}
\renewcommand\baselinestretch{1}


\begin{titlepage}
\advance\topmargin by 0.5in
\begin{center}

\vspace{.5truecm}
{\Large Regularizers for Structured Sparsity}
\vspace{.88truecm}

\end{center}

\begin{center}

{\bf Charles A. Micchelli}$^{(1),(2)}$
\\ \vspace{.2truecm}
{\bf Jean M. Morales}$^{(3)}$
\\ \vspace{.2truecm}
{\bf Massimiliano Pontil}$^{(3)}$

\vspace{.8truecm}

\noindent (1)
Department of Mathematics \\
City University of Hong Kong \\
83 Tat Chee Avenue, Kowloon Tong \\
Hong Kong \\
\vspace{.35truecm}
(2)
Department of Mathematics and Statistics \\
State University of New York \\
The University at Albany \\
1400 Washington Avenue \\
Albany, NY, 12222, USA

\vspace{.35truecm}

\noindent (3) Department of Computer Science \\
University College London \\
Gower Street, London WC1E \\ England, UK \\
E-mail: {\em \{m.pontil,j.morales\}@cs.ucl.ac.uk}

\vspace{.65truecm}


\end{center}

\begin{abstract}
\noindent 
We study the problem of learning a sparse linear
regression vector under additional conditions on the structure of
its sparsity pattern. This problem is relevant in machine learning,
statistics and signal processing. It is well known that a linear
regression can benefit from knowledge that the underlying
regression vector is sparse. The combinatorial problem of
selecting the nonzero components of this vector can be ``relaxed'' 
by regularizing the squared error with a convex penalty function
like the $\ell_1$ norm. However, in many applications, additional
conditions on the structure of the regression vector and its
sparsity pattern are available. Incorporating this information
into the learning method may lead to a significant decrease of
the estimation error.

In this paper, we present a family of convex penalty functions,
which encode prior knowledge on the structure of the vector 
formed by the absolute values of the regression coefficients. 
This family subsumes the $\ell_1$ norm and is flexible
enough to include different models of sparsity patterns, which are
of practical and theoretical importance. We establish the basic
properties of these penalty functions and discuss some examples
where they can be computed explicitly. Moreover, we present a
convergent optimization algorithm for solving regularized least
squares with these penalty functions. Numerical simulations
highlight the benefit of structured sparsity and the advantage
offered by our approach over the Lasso method and other related
methods.
\end{abstract}
\end{titlepage}


\section{Introduction}
The problem of sparse estimation is becoming increasing important
in statistics, machine learning and signal processing.  In its
simplest form, this problem consists in estimating a regression
vector $\beta^* \in \R^n$ from a set of linear measurements $y \in
\R^m$, obtained from the model \beq y = X \beta^* + \xi
\label{eq:model} \eeq where $X$ is an $m \times n$ matrix, which
may be fixed or randomly chosen and $\xi \in \R^m$ is a vector
which results from the presence of noise.

An important rational for sparse estimation comes from the observation
that in many practical applications the number of parameters $n$ is
much larger than the data size $m$, but the vector $\beta^*$ is known
to be sparse, that is, most of its components are equal to zero. Under
this sparsity assumption and certain conditions on the data matrix $X$, 
it has been shown that regularization with the
$\ell_1$ norm, commonly referred to as the Lasso method \cite{lasso}, provides an
effective means to estimate the underlying regression vector, see for
example \cite{BRT,bunea2007soi,lounici2008snc,vandegeer2008hdg} and
references therein. Moreover, this method can reliably select the sparsity pattern
of $\beta^*$ \cite{lounici2008snc}, hence providing a valuable tool
for feature selection.


In this paper, we are interested in sparse estimation under
additional conditions on the sparsity pattern of the vector
$\beta^*$.  In other words, not only do we expect this vector to
be sparse but also that it is {\em structured sparse}, namely
certain configurations of its nonzero components are to be
preferred to others. This problem arises is several applications,
ranging from functional magnetic resonance imaging
\cite{gramfort2009,xiang2009}, to scene recognition in vision
\cite{harzallah2009}, to multi-task learning
\cite{AEP,kim09,oboz09} and to bioinformatics \cite{rapaport2008}, 
see \cite{Jenatton} for a discussion.

The prior knowledge that we consider in this paper is that the
vector $|\beta^*|$, whose components are the absolute value of the
corresponding components of $\beta^*$, should belong to some
prescribed convex subset $\La$ of the positive orthant. For
certain choices of $\La$ this implies a constraint on the sparsity
pattern as well. For example, the set $\La$ may include vectors
with some desired monotonicity constraints, or other constraints
on the ``shape'' of the regression vector. 
Unfortunately, the constraint that $|\beta^*| \in \La$ is nonconvex
and its implementation is computational challenging. To overcome this
difficulty, we propose a family of penalty functions, which are based
on an extension of the $\ell_1$ norm used by the Lasso method and
involves the solution of a smooth convex optimization problem. These
penalty functions favor regression vectors $\beta$ such that $|\beta| \in \Lambda$, thereby 
incorporating the structured sparsity constraints.

Precisely, we propose to estimate $\beta^*$ as a solution of the
convex optimization problem
\beq\min \left\{\|X\beta-y\|^2_2 + 2\rho \Omega(\beta|\La) : \beta \in \R^n\right\}\label{eq:method1}\eeq
where $\|\cdot\|_2$ denotes the Euclidean norm, $\rho$ is a positive parameter and the penalty function takes the form
\beq
\Omega(\beta|\La) = \inf \left\{ \frac{1}{2}\sum_{i \in \N_n} \left(\frac{\beta_i^2}{\lambda_i} + \lambda_i\right): \lam \in \La\right\}.
\nonumber
\eeq

As we shall see, a key property of the penalty function is that it
exceeds the $\ell_1$ norm of $\beta$ when $|\beta| \notin \La$, and it
coincides with the $\ell_1$ norm otherwise.
This observation suggests a heuristic interpretation of the
method \eqref{eq:method1}: among all vectors $\beta$ which have a fixed
value of the $\ell_1$ norm, the penalty function $\Omega$ will
encourage those for which $|\beta| \in \La$. Moreover, when
$|\beta| \in \La$ the function $\Omega$ reduces to the $\ell_1$ norm and, so,
the solution of problem $\eqref{eq:method1}$ is expected to be
sparse. The penalty function therefore will encourage certain desired
sparsity patterns. Indeed, the sparsity pattern of $\beta$
is contained in that of the auxiliary vector $\lam$ at the optimum and, so, 
if the set $\La$ allows only for certain sparsity patterns of $\lambda$, the same
property will be ``transferred'' to the regression vector $\beta$. 

There has been some recent research interest on structured
sparsity, see \cite{huang2009,jacob,Jenatton,lounici2010oracle,mosci2010,Yuan09,group_lasso} and
references therein. Closest to our approach are penalty methods
built around the idea of mixed $\ell_1$-$\ell_2$ norms. In
particular, the group Lasso method \cite{group_lasso} assumes that
the components of the underlying regression vector $\beta^*$ can
be partitioned into prescribed groups, such that the restriction
of $\beta^*$ to a group is equal to zero for most of the groups.
This idea has been extended in \cite{Jenatton,binyu} by 
considering the possibility that the groups overlap according to
certain hierarchical or spatially related structures.
Although these methods have proved valuable in applications, they have
the limitation that they can only handle more restrictive classes of
sparsity, for example patterns forming only a single connected region.
Our point of view is different from theirs and provides a means to
designing more flexible penalty functions which maintain convexity
while modeling richer model structures. For example, we will
demonstrate that our family of penalty functions can model sparsity
patterns forming multiple connected regions of coefficients.

The paper is organized in the following manner. In Section \ref{sec:2}
we establish some important properties of the penalty function. In
Section \ref{sec:box} we address the case in which the set $\La$ is a box.
In Section \ref{sec:WP} we derive the form of the penalty function
corresponding to the wedge with decreasing coordinates and in Section
\ref{sec:GP} we extends this analysis to the case in which the
constraint set $\La$ is constructed from a directed graph. In Section
\ref{sec:comp} we discuss useful duality relations and in Section
\ref{sec:algo} we address the issue of solving the problem
\eqref{eq:method1} numerically by means of an alternating
minimization algorithm. Finally, in Section \ref{sec:exp} we provide
numerical simulations with this method, showing the advantage offered
by our approach.

A preliminary version of this paper appeared in the proceedings of the
Twenty-Fourth Annual Conference on Neural Information Processing Systems (NIPS 2010) 
\cite{MMP-10}. The new version contains Propositions \ref{prop:0},~\ref{prop:comb} and \ref{prop:dual}, 
the description of the graph penalty in Section \ref{sec:GP}, Section \ref{sec:comp}, 
a complete proof of Theorem \ref{thm:aa} and an experimental comparison with the method of \cite{huang2009}.


\section{Penalty function}
\label{sec:2}
In this section, we provide some general comments on the penalty
function which we study in this paper. 

We first review our notation. We denote with $\R_+$ and $\R_{++}$ the nonnegative 
and positive real line, respectively. For every $\beta \in \R^n$ we define $|\beta| \in
\R_+^n$ to be the vector formed by the absolute values of the components of $\beta$, that is, 
$|\beta| = (|\beta_i|: i \in \N_n)$, where $\N_n$ is the set of
positive integers up to and including $n$. 
Finally, we define the $\ell_1$ norm of vector $\beta$ as $\|\beta\|_1 = \sum_{i\in \N_n} |\beta_i|$ 
and the $\ell_2$ norm as $\|\beta\|_2 = \sqrt{\sum_{i \in \N_n} \beta_i^2}$. 

Given an $m \times n$ input data matrix $X$ and an output vector $y \in \R^m$, obtained
from the linear regression model $y=X\beta^*+\xi$ discussed earlier, we consider
the convex optimization problem
\beq
\inf \left\{\|X\beta-y\|^2_2 +
2\rho\, \Gamma(\beta,\lam)
: \beta \in \R^n, \lambda\in\La
 \right\}
\label{eq:primal}
\eeq
where $\rho$ is a positive parameter, $\La$ is a prescribed convex
subset of the positive orthant $\R^n_{++}$ and the function $\Gamma :
\R^n \times \R_{++}^n \rightarrow \R$ is given by the formula
$$\Gamma(\beta,\lam) = \frac{1}{2}\sum_{i\in \N_n}
\left(\frac{\beta_i^2}{\lambda_i} + \lambda_i\right).$$
Note that in \eqref{eq:primal}, for a fixed $\beta \in \mathbb{R}^n$,
the infimum over $\lambda=(\lam_i: i \in \N_n)$ in general is not
attained, however, for a fixed $\lambda \in \La$, the infimum over
$\beta$ is always attained.

Since the auxiliary vector $\lam$ appears only in the second term of the objective function 
of problem \eqref{eq:primal}, 
and our goal is to estimate $\beta^*$, we may also directly consider
the regularization problem
\beq
\min \left\{\|X\beta-y\|^2_2 + 2\rho\, \Omega(\beta|\La) :
\beta \in \R^n\right\},
\label{eq:method}
\eeq
where the penalty function takes the form
\beq
\Omega(\beta|\La) = \inf \left\{\Gamma(\beta,\lam): \lam \in \La\right\}.
\label{eq:gen-omega}
\eeq
Note that $\Gamma$ is convex on its domain because each of its
summands are likewise convex functions. Hence, when the set $\La$ is
convex it follows that $\Omega(\cdot|\La)$ is a convex function and
\eqref{eq:method} is a convex optimization problem.

An essential idea behind our construction of the penalty function is
that, for every $\lam \in \R_{++}$, the quadratic function
$\Gamma(\cdot,\lam)$ provides a smooth approximation to $|\beta|$
from above, which is exact at $\beta= \pm\lam$. We indicate this
graphically in Figure \ref{fig:1}-a. This fact follows immediately
by the arithmetic-geometric mean inequality, which states, for every
$a,b \geq 0$ that $(a+b)/2 \geq \sqrt{ab}$.
\begin{figure}[t]
\begin{center}
  \begin{tabular}{cc}
    \includegraphics[width=0.38\textwidth]{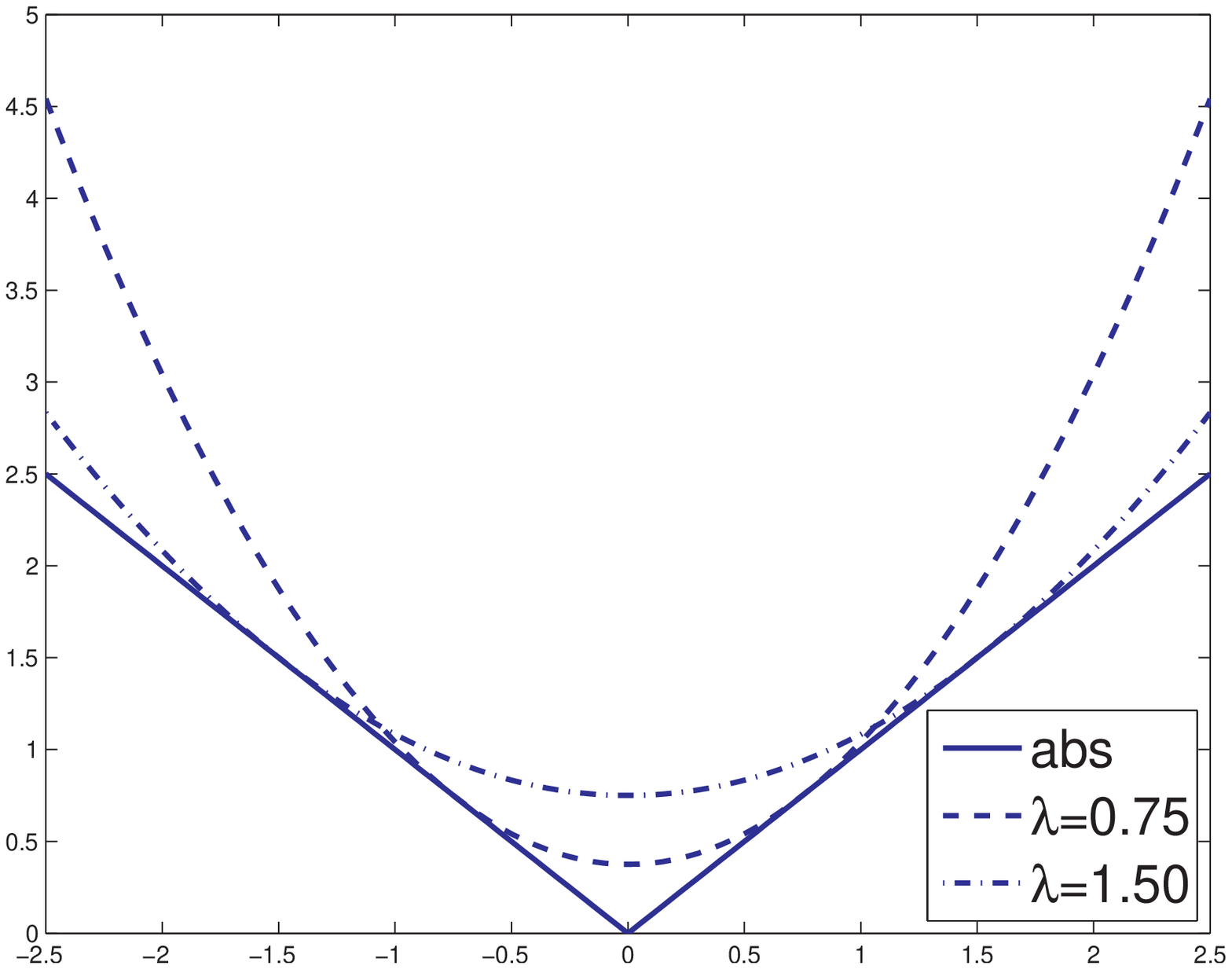} & \hspace{2.0truecm}
    \includegraphics[width=0.38\textwidth]{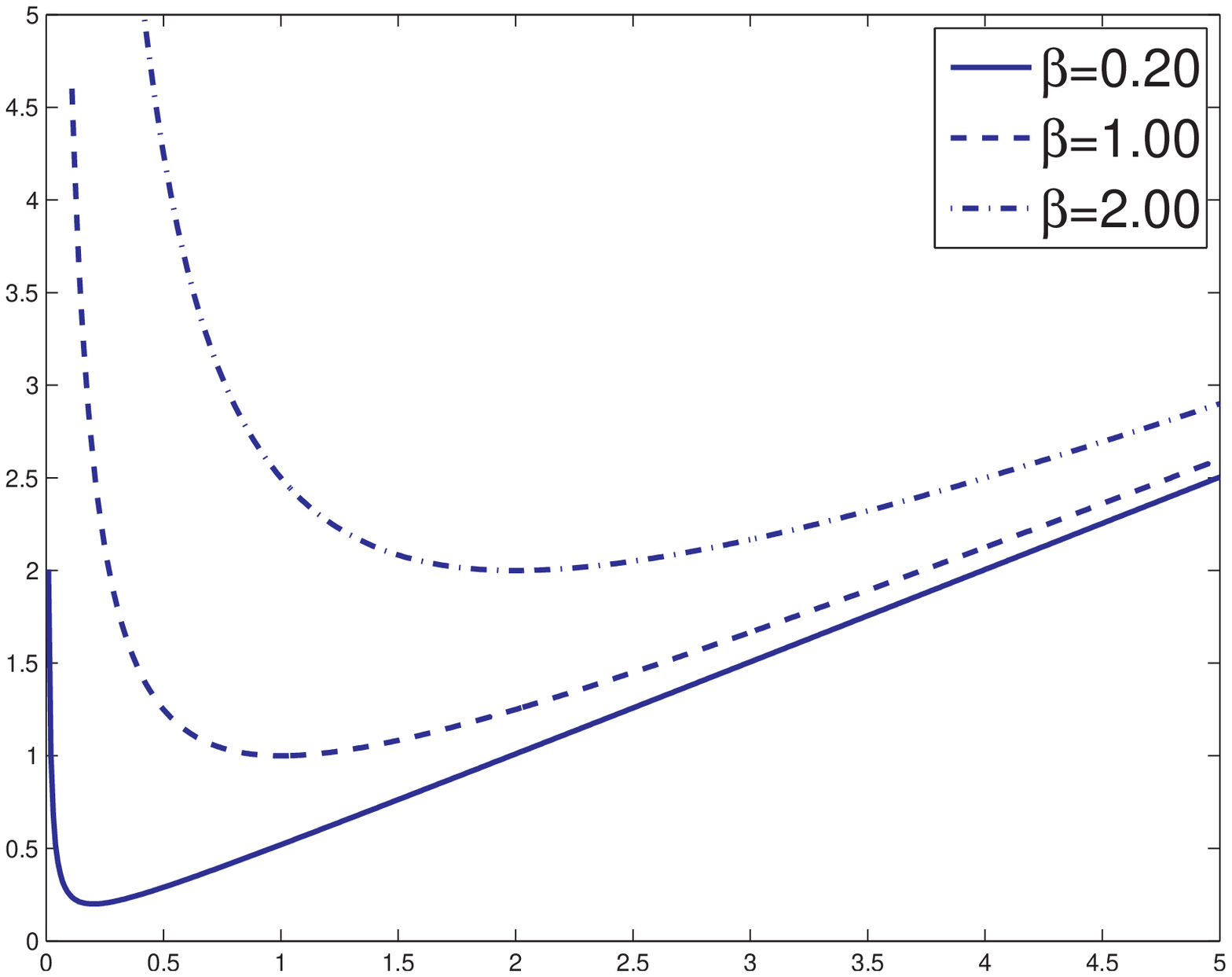} \\
    (a) & \hspace{2.0truecm}(b) \\
\end{tabular}
  \caption {(a): Function $\Gamma(\cdot,\lam)$ for some values of $\lambda>0$; (b): Function
$\Gamma(\beta,\cdot)$ for some values of $\beta \in \R$.}
  \label{fig:1}
\end{center}
\end{figure}

A special case of the formulation \eqref{eq:method} with $\La =
\R_{++}^n$ is the Lasso method \cite{lasso}, which is defined to be a solution of
the optimization problem
\beq
\min \left\{\|y - X \beta\|^2_2 + 2\rho \|\beta\|_1: \beta \in \R^n\right\}.
\nonumber \eeq
Indeed, using again the arithmetic-geometric mean inequality it follows that $\Omega(\beta|\R_{++}^n) =
\|\beta\|_1$. Moreover, if for every $i \in \N_n$ $\beta_i \neq
0$, then the infimum is attained for $\lam = |\beta|$.
This important special case motivated us to consider the general method described above.
The utility of \eqref{eq:gen-omega} is that upon inserting it into \eqref{eq:method}
there results an optimization problem over $\lam$ and $\beta$ with a continuously
differentiable objective function. Hence, we have succeeded in expressing a
nondifferentiable convex objective function by one which is continuously
differentiable on its domain.



Our first observation concerns the differentiability of $\Omega$. In
this regard, we provide a sufficient condition which ensures this
property of $\Omega$, which, although seemingly cumbersome covers
important special cases. To present our result, for any real numbers $a<b$, we define the parallelepiped $[a,b]^n = \{x: x = (x_i:i \in \N_n), a \leq x_i \leq b,~i \in \N_n\}$.
\begin{definition}
We say that the set $\La$ is admissible if it is convex and, for
all $a,b \in \R$ with $0<a<b$, the set $\La_{a,b} := [a,b]^n \cap
\La$ is a nonempty, compact subset of the interior of $\La$.
\label{def:1}
\end{definition}

\begin{proposition}
If $\beta \in (\R \backslash \{0\})^n$ and $\La$ is an admissible
subset of~$\R^n_{++}$, then the infimum above is uniquely achieved
at a point $\lam(\beta) \in \La$ and the mapping $\beta \mapsto
\lam(\beta)$ is continuous. Moreover, the function
$\Omega(\cdot|\La)$ is continuously differentiable and its partial
derivatives are given, for any $i \in \N_n$, by the formula \beq
\label{eq:part-der} \frac{\partial \Omega(\beta|\La)}{\partial
\beta_i} =  \frac{\beta_i}{\lam_i(\beta)}. \eeq \label{prop:0}
\end{proposition}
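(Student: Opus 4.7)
The plan is to establish the three claims—existence and uniqueness of the minimizer, continuity of $\beta\mapsto\lam(\beta)$, and continuous differentiability of $\Omega(\cdot|\La)$ with the stated partials—in that order, each step feeding the next.

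For existence and uniqueness, I first observe that when every $\beta_i \ne 0$ the map $\lam \mapsto \Gamma(\beta,\lam)$ is strictly convex on $\R^n_{++}$, since each summand $\beta_i^2/\lam_i$ is strictly convex in $\lam_i$. To force attainment I pick any $\tilde\lam \in \La$ and set $M := \Gamma(\beta,\tilde\lam)$; any $\lam \in \La$ with $\Gamma(\beta,\lam)\le M$ then obeys $\lam_i \le 2M$ and $\lam_i \ge \beta_i^2/(2M)$, so it lies in $[a,b]^n$ for $a:=\min_i \beta_i^2/(2M)>0$ and $b:=2M$. Admissibility says $\La_{a,b}$ is a nonempty compact subset of $\La$, so by continuity of $\Gamma(\beta,\cdot)$ the infimum is attained there, uniquely by strict convexity; denote the minimizer $\lam(\beta)$.

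Continuity of $\lam(\cdot)$ I handle by a standard subsequence argument. Given $\beta^{(k)} \to \beta$ in $(\R\setminus\{0\})^n$, the bound $\Gamma(\beta^{(k)},\lam(\beta^{(k)}))\le \Gamma(\beta^{(k)},\tilde\lam)$ with a fixed $\tilde\lam \in \La$ yields, via the same estimate, uniform $a,b$ so that $\lam(\beta^{(k)}) \in \La_{a,b}$ for all large $k$. Any subsequential limit $\lam^*$ stays in $\La_{a,b}\subset\La$ and, by passing to the limit in $\Gamma(\beta^{(k)},\lam(\beta^{(k)}))\le \Gamma(\beta^{(k)},\lam)$ for each fixed $\lam \in \La$, satisfies $\Gamma(\beta,\lam^*)\le \Gamma(\beta,\lam)$. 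Uniqueness forces $\lam^* = \lam(\beta)$, so the whole sequence converges.

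Differentiability at $\beta$ then follows from a sandwich/envelope argument. For $h$ small enough that $\beta+h$ remains in $(\R\setminus\{0\})^n$, optimality of $\lam(\beta)$ and $\lam(\beta+h)$ respectively give
\[
\Omega(\beta+h|\La)-\Omega(\beta|\La) \le \Gamma(\beta+h,\lam(\beta))-\Gamma(\beta,\lam(\beta)) = \sum_{i\in\N_n}\frac{2\beta_i h_i + h_i^2}{2\lam_i(\beta)},
\]
\[
\Omega(\beta+h|\La)-\Omega(\beta|\La) \ge \Gamma(\beta+h,\lam(\beta+h))-\Gamma(\beta,\lam(\beta+h)) = \sum_{i\in\N_n}\frac{2\beta_i h_i + h_i^2}{2\lam_i(\beta+h)}.
\]
The upper bound equals $\sum_i \beta_i h_i/\lam_i(\beta) + O(|h|^2)$, and by the continuity of $\lam(\cdot)$ just established the lower bound equals $\sum_i \beta_i h_i/\lam_i(\beta) + o(|h|)$ as $h \to 0$. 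Hence $\Omega(\cdot|\La)$ is Fr\'echet differentiable at $\beta$ with $\partial\Omega/\partial\beta_i = \beta_i/\lam_i(\beta)$, and continuity of the partials is immediate because $\beta \mapsto \beta_i/\lam_i(\beta)$ is continuous on $(\R\setminus\{0\})^n$ (the denominator is bounded below by the local $a$ from admissibility). The main obstacle is really the bookkeeping around admissibility—trapping all the relevant minimizers simultaneously in a compact subset of $\La$; once the uniform $[a,b]^n$-box is in hand, strict convexity, Bolzano-Weierstrass, and the two-sided envelope estimate finish the rest.
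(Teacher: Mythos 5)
Your proof is correct, but for the differentiability claim it takes a genuinely different route from the paper's. The localization step is the same in both: from $\Gamma(\beta,\lam)\le M$ you extract $\beta_i^2/(2M)\le\lam_i\le 2M$, so every relevant minimizer is trapped in $\La_{a,b}$, and the compactness guaranteed by admissibility gives existence; strict convexity of $\Gamma(\beta,\cdot)$ on $\R_{++}^n$ (valid because no $\beta_i$ vanishes) gives uniqueness, and the subsequence argument for continuity of $\lam(\cdot)$ matches the paper's. The divergence is in how the derivative formula is obtained: the paper proves a version of Danskin's theorem in the appendix, applies it to the min-function $\Omega(\cdot|\La)=\min\{\Gamma(\cdot,\lam):\lam\in Y(\beta)\}$ on a neighborhood of $\beta$, and uses uniqueness of the minimizer to collapse the resulting minimum of directional derivatives over the solution set to the single value $\beta_i/\lam_i(\beta)$. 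You bypass Danskin entirely with the two-sided envelope estimate, bounding $\Omega(\beta+h)-\Omega(\beta)$ above by $\Gamma(\beta+h,\lam(\beta))-\Gamma(\beta,\lam(\beta))$ and below by $\Gamma(\beta+h,\lam(\beta+h))-\Gamma(\beta,\lam(\beta+h))$, then using the continuity of $\lam(\cdot)$ together with the uniform lower bound $\lam_i\ge a>0$ near $\beta$ to squeeze both sides to $\sum_i\beta_ih_i/\lam_i(\beta)+o(\|h\|_2)$. Your argument is more elementary and self-contained and delivers Fr\'echet differentiability directly; the Danskin machinery the paper sets up is heavier but remains applicable when the minimizer is not unique, where it still produces one-sided directional derivatives. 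Both approaches hinge on the same admissibility-based compactness, and your bookkeeping of the uniform box for nearby $\beta$ is adequate to justify each limiting step.
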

We postpone the proof of this proposition to the appendix. We note
that, since $\Omega(\cdot|\La)$ is continuous, we may compute it
at a vector $\beta$, some of whose components are zero, as a
limiting process. Moreover, at such a vector the function
$\Omega(\cdot|\La)$ is in general not differentiable, for example
consider the case $\Omega(\beta|\R^n_{++}) = \|\beta\|_1$.

The next proposition provides a justification of the penalty
function as a means to incorporate structured sparsity and
establish circumstances for which the penalty function is a norm.
To state our result, we denote by ${\bar \La}$ the closure of the
set $\La$.

\begin{proposition}
For every $\beta \in \R^n$, we have that $\|\beta\|_1 \leq
\Omega(\beta|\La)$ and the equality holds {\em if and only if}
$|\beta| := (|\beta_i|: i \in \N_n) \in \bar{\La}$.
Moreover, if $\La$ is a nonempty convex cone then the
function $\Omega(\cdot|\La)$ is a norm and we have that $\Omega(\beta|\La) \leq \omega \|\beta\|_1$,
where $\omega := \max\{\Omega(e_k|\La): k \in \N_n\}$ and $\{e_k: k \in \N_n\}$ is the canonical basis of $\R^n$.
\label{prop:ss}
\end{proposition}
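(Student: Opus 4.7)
The cornerstone of the argument is the elementary algebraic identity
\[
\tfrac{1}{2}\!\left(\tfrac{\beta_i^2}{\lambda_i}+\lambda_i\right)-|\beta_i|=\frac{(|\beta_i|-\lambda_i)^2}{2\lambda_i},\qquad \lambda_i>0,
\]
which is the AM--GM inequality rewritten as an equality with explicit nonnegative slack. Summing over $i\in\N_n$ yields
\[
\Gamma(\beta,\lam)-\|\beta\|_1=\sum_{i\in\N_n}\frac{(|\beta_i|-\lambda_i)^2}{2\lambda_i}\ge 0,
\]
so taking the infimum over $\lam\in\La$ gives the lower bound $\Omega(\beta|\La)\ge\|\beta\|_1$.

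For the equality characterization I would argue both directions from this identity. If $|\beta|\in\bar\La$, choose $\lam^{(k)}\in\La$ with $\lam^{(k)}\to|\beta|$; then for each $i$ the $i$th summand tends to $0$ (when $\beta_i\ne 0$ use continuity, when $\beta_i=0$ the summand is simply $\lambda^{(k)}_i/2$), so $\Gamma(\beta,\lam^{(k)})\to\|\beta\|_1$ and hence $\Omega(\beta|\La)\le\|\beta\|_1$. Conversely, if $\Omega(\beta|\La)=\|\beta\|_1$, pick a minimizing sequence $\lam^{(k)}\in\La$; the identity forces each term $(|\beta_i|-\lambda^{(k)}_i)^2/(2\lambda^{(k)}_i)\to 0$, from which I conclude $\lambda^{(k)}_i\to|\beta_i|$ for every $i$ (the case $|\beta_i|>0$ rules out $\lambda^{(k)}_i\to 0$ since that would blow the ratio up), and therefore $|\beta|\in\bar\La$. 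This converse is the only slightly delicate point; it is the main obstacle because one must handle separately the coordinates where $\beta_i=0$, using that $\Lambda\subset\R^n_{++}$ forces $\lambda^{(k)}_i>0$ but admits $\lambda^{(k)}_i\to 0$.

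For the cone case, I verify the three norm axioms. Nonnegativity and definiteness follow from the lower bound $\Omega(\beta|\La)\ge\|\beta\|_1$ together with the observation that when $\beta=0$ the cone property lets one take $\lam^{(k)}=t_k\lam^{(0)}\to 0$ for any fixed $\lam^{(0)}\in\La$, giving $\Omega(0|\La)=0$. Absolute homogeneity $\Omega(c\beta|\La)=|c|\Omega(\beta|\La)$ for $c\ne 0$ follows from the change of variables $\lam=|c|\mu$, which leaves $\La$ invariant (as $\La$ is a cone) and produces $\Gamma(c\beta,|c|\mu)=|c|\Gamma(\beta,\mu)$; the case $c=0$ is immediate. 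Subadditivity is then a standard consequence of convexity plus positive homogeneity: for any $\alpha,\beta\in\R^n$,
\[
\tfrac{1}{2}\Omega(\alpha+\beta|\La)=\Omega\!\left(\tfrac{\alpha+\beta}{2}\,\Big|\,\La\right)\le\tfrac{1}{2}\Omega(\alpha|\La)+\tfrac{1}{2}\Omega(\beta|\La).
\]

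Finally, for the upper bound I decompose $\beta=\sum_{i\in\N_n}\beta_i e_i$ and apply the triangle inequality and homogeneity just established:
\[
\Omega(\beta|\La)\le\sum_{i\in\N_n}\Omega(\beta_i e_i|\La)=\sum_{i\in\N_n}|\beta_i|\,\Omega(e_i|\La)\le\omega\|\beta\|_1,
\]
where finiteness of each $\Omega(e_i|\La)$ is ensured by picking any $\hat\lam\in\La$ (necessarily with $\hat\lambda_i>0$) and optimizing the one-parameter family $t\hat\lam\in\La$ over $t>0$, giving an explicit finite value.
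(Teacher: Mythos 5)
Your proposal is correct and follows essentially the same route as the paper: the arithmetic--geometric mean inequality for the lower bound, an approximating sequence $\lam^{(k)}\to|\beta|$ for one direction of the equality case, a minimizing sequence for the converse, positive homogeneity via the change of variables plus convexity for the triangle inequality, and the structure of the $\ell_1$ unit ball for the upper bound. Two of your steps are in fact slightly more explicit than the paper's: writing AM--GM with the slack $(|\beta_i|-\lambda_i)^2/(2\lambda_i)$ makes the converse direction (each coordinate of the minimizing sequence must converge to $|\beta_i|$, including the $\beta_i=0$ coordinates) cleaner than the paper's brief ``some subsequence converges'' remark, and your upper bound via $\Omega(\beta|\La)\le\sum_i|\beta_i|\,\Omega(e_i|\La)$ is a direct computation where the paper instead invokes that a convex function attains its maximum over the $\ell_1$ ball at an extreme point; both variants are equivalent in substance.
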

\begin{proof}
By the arithmetic-geometric mean inequality we have that $\|\beta\|_1
\leq \Gamma(\beta,\lam)$, proving the first assertion. If $|\beta|
\in {\bar \La}$, there exists a sequence $\{\lam^k: k \in \N\}$ in
$\La$, such that $\lim_{k \rightarrow \infty} \lam^k = |\beta|$.
Since $\Omega(\beta|\La) \leq \Gamma(\beta,\lam^k)$ it readily
follows that $\Omega(\beta|\La) \leq \|\beta\|_1$. Conversely, if
$|\beta| \in {\bar \La}$, then there is a sequence $\{\lam^k: k
\in \N\}$ in $\La$, such that $\Gamma(\beta,\lam^k) \leq \|\beta_1\|+
1/k$. This inequality implies that some subsequence of this
sequence converges to a $\bar{\lam} \in {\bar \La}$. Using
arithmetic-geometric mean inequality we conclude that $\bar{\lam} = |\beta|$ and
the result follows. To
prove the second part, observe that if $\La$ is a nonempty convex
cone, namely, for any $\lam \in \La$ and $t \geq 0$ it holds that
$t \lam \in \La$, we have that $\Omega$ is positive homogeneous.
Indeed, making the change of variable $\lam' = \lam/|t|$ we see
that $\Omega(t \beta|\La) = |t| \Omega(\beta|\La)$. Moreover, the
above inequality, $\Omega(\beta|\La) \geq \|\beta\|_1$, implies
that if $\Omega(\beta|\La) = 0$ then $\beta=0$. The proof of the
triangle inequality follows from the homogeneity and convexity of
$\Omega$, namely $\Omega(\alpha + \beta|\La) = 2
\Omega\left((\alpha+\beta)/2|\La\right) \leq \Omega(\alpha|\La) +
\Omega(\beta|\La)$.

Finally, note that
$\Omega(\beta|\La) \leq \omega \|\beta\|_1$ if and only if $\omega =
\max\{\Omega(\beta|\La): \|\beta\|_1 = 1\}$. Since $\Omega$ is convex
the maximum above is achieved at an extreme point of the $\ell_1$ unit
ball.
\end{proof}
This proposition indicates a heuristic interpretation of the
method \eqref{eq:method}: among all vectors $\beta$ which have a fixed
value of the $\ell_1$ norm, the penalty function $\Omega$ will 
encourage those for which $|\beta| \in \La$. Moreover, when 
$|\beta| \in \La$ the function $\Omega$ reduces to the $\ell_1$ norm and, so,
the solution of problem $\eqref{eq:method}$ is expected to be 
sparse. The penalty function therefore will encourage certain desired
sparsity patterns.

The last point can be better understood by looking at problem
\eqref{eq:primal}. For every solution $(\hbeta,\lh)$, 
the sparsity pattern of $\hbeta$ is contained in the sparsity pattern
of $\lh$, that is, the indices associated with nonzero components of
$\hbeta$ are a subset of those of $\lh$. Indeed, if $\lh_i=0$ it must
hold that $\hbeta_i = 0$ as well, since the objective would diverge
otherwise (because of the ratio $\beta_i^2/\lambda_i$). Therefore, if
the set $\La$ favors certain sparse solutions of $\lh$, the same
sparsity pattern will be reflected on $\hbeta$. Moreover, the $\sum_{i
  \in \N_{n}} \lambda_i$ term appearing in the expression for
$\Gamma(\beta, \lambda)$ favors sparse $\lambda$ vectors. For
example, a constraint of the form $\lam_1\geq \dots \geq \lam_n$
favors consecutive zeros at the end of $\lambda$ and nonzeros
everywhere else. This will lead to zeros at the terminal components of
$\beta$ as well. Thus, in many cases like this, it is easy to
incorporate a convex constraint on $\lambda$, whereas it may not be
possible to do the same with $\beta$.


Next, we note that a normalized version of the group Lasso penalty
\cite{group_lasso} is included in our setting as a special case.
If, for some $k \in \N_n$, $\{J_\ell: \ell \in \N_k\}$ forms a partition of
the index set $\N_n$, the corresponding group Lasso penalty is
defined as \beq \label{eq:GL} \Omega_{\rm GL}(\beta) = \sum_{\ell
\in \N_k} \sqrt{|J_\ell|}~\|\beta_{|J_\ell}\|_2, \eeq where, for
every $J \subseteq \N_n$, we use the notation $\beta_{|J} =
(\beta_j: j \in J)$. It is an easy matter to verify that
$\Omega_{\rm GL} = \Omega(\cdot|\La)$ for $\La = \{\lam: \lam \in
\R_{++}^n, \lam_j = \theta_\ell,~j \in J_\ell,~\ell \in
\N_k,~\theta_\ell > 0\}$.

The next proposition presents a useful construction which may be
employed to generate new penalty functions from available ones. It is
obtained by composing a set $\Theta \subseteq \R_{++}^k$ with a linear transformation,
modeling the sum of the components of a vector, across the elements of a prescribed
partition ${\cal P} = \{P_\ell: \ell \in \N_k\}$ of $\N_n$. To describe our result we
introduce the {\em group average map} $A_{\cal P}:
\R^n \rightarrow \R^k$ induced by ${\cal P}$. It is defined, for each $\beta \in \R^n$, as
$A_{\cal P}(\beta) = (\|\beta_{|P_{\ell}}\|_1 : \ell \in \N_k)$.

\begin{proposition} If $\Theta \subseteq \R_{++}^k$, $\beta \in \R^n$
and ${\cal P}$ is a partition of $\N_n$ then $$ \Omega(\beta|A_{\cal P
}^{-1}(\Theta)) = \Omega(A_{\cal P}(\beta)|\Theta). $$
\label{prop:comb}
\end{proposition}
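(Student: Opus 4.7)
The plan is to reduce the infimum defining $\Omega(\beta|A_{\cal P}^{-1}(\Theta))$ to a two-stage optimization, where the inner stage can be evaluated by Cauchy--Schwarz and yields exactly the summands defining $\Omega(A_{\cal P}(\beta)|\Theta)$.

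First I would rewrite the constraint: since $\lambda \in \R^n_{++}$, we have $A_{\cal P}(\lambda)_\ell = \|\lambda_{|P_\ell}\|_1 = \sum_{i \in P_\ell} \lambda_i$, so $\lambda \in A_{\cal P}^{-1}(\Theta)$ if and only if the vector $\theta \in \R^k_{++}$ with $\theta_\ell := \sum_{i \in P_\ell} \lambda_i$ lies in $\Theta$. This lets me split the infimum by first fixing $\theta \in \Theta$ and then minimizing over all $\lambda \in \R^n_{++}$ with these prescribed group sums. Since $\Gamma(\beta,\lambda)$ decouples across the blocks $P_\ell$, I can handle each block independently: the $\lambda_i$ contribution to the $\lambda_i$-part of $\Gamma$ is just the constant $\theta_\ell$, so I only need to evaluate, for each $\ell \in \N_k$,
\[
\inf \left\{ \sum_{i \in P_\ell} \frac{\beta_i^2}{\lambda_i} \; : \; \lambda_{|P_\ell} \in \R_{++}^{|P_\ell|}, \; \sum_{i \in P_\ell}\lambda_i = \theta_\ell \right\}.
\]

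The key step is to recognize this as a classical Cauchy--Schwarz minimization: for any positive $\lambda_i$'s summing to $\theta_\ell$,
\[
\theta_\ell \sum_{i \in P_\ell} \frac{\beta_i^2}{\lambda_i} = \left(\sum_{i \in P_\ell}\lambda_i\right)\left(\sum_{i \in P_\ell} \frac{\beta_i^2}{\lambda_i}\right) \geq \left(\sum_{i \in P_\ell} |\beta_i|\right)^2 = A_{\cal P}(\beta)_\ell^{\,2},
\]
with equality when $\lambda_i \propto |\beta_i|$. When $\beta_{|P_\ell} \neq 0$ this is attained by $\lambda_i = \theta_\ell |\beta_i|/\|\beta_{|P_\ell}\|_1$ (still in $\R_{++}$ for the nonzero coordinates, and obtainable as an infimum by letting the $\lambda_i$ with $\beta_i = 0$ tend to $0$ while keeping the sum fixed). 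When $\beta_{|P_\ell}=0$, both sides vanish in the limit. Thus the inner infimum equals $A_{\cal P}(\beta)_\ell^{\,2}/\theta_\ell$.

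Substituting this back gives
\[
\Omega(\beta|A_{\cal P}^{-1}(\Theta)) = \inf_{\theta \in \Theta} \frac{1}{2}\sum_{\ell \in \N_k} \left( \frac{A_{\cal P}(\beta)_\ell^{\,2}}{\theta_\ell} + \theta_\ell \right) = \Omega(A_{\cal P}(\beta)|\Theta),
\]
which is the desired identity. The only delicate point, and the one I would be most careful about, is the case where $\beta$ has zero entries inside some $P_\ell$: there the optimal $\lambda_i$ would want to be $0$, violating $\lambda \in \R^n_{++}$, so equality must be justified as an infimum rather than a minimum by a standard approximation argument sending those $\lambda_i$ to $0^+$. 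Everything else is bookkeeping.
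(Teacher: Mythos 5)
Your proof is correct and follows essentially the same route as the paper's: decompose the infimum over $A_{\cal P}^{-1}(\Theta)$ into an outer infimum over $\theta\in\Theta$ and an inner one over $\lambda$ with prescribed group sums, then evaluate the inner infimum by Cauchy--Schwarz. Your extra care about blocks where $\beta$ vanishes (equality attained only as an infimum) is a detail the paper's proof glosses over, but it does not change the argument.
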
 \begin{proof} The idea of the proof depends on two
basic observations. The first uses the set theoretic formula $$ A_\J^{-1}(\Theta) =
\bigcup_{\theta \in \Theta} A_\J^{-1}(\theta).$$ From
this decomposition we obtain that \beq
\label{eq:charlie} \Om(\beta|A_\J^{-1}(\Theta)) = \inf \left\{\inf \left\{
\Ga(\beta,\lam): \lam \in A^{-1}_\J(\theta) \right\}
: \theta \in \Theta\right\}. \eeq Next, we write $\theta =
(\theta_\ell: \ell \in \N_k) \in \Theta$ and decompose the inner
infimum as the sum $$ \sum_{\ell \in \N_k} \inf \left\{\frac{1}{2}
\sum_{j \in J_\ell} \left(\frac{\beta_j^2}{\lam_j}+\lam_j\right):
\sum_{j \in J_\ell} \lam_j =
\theta_\ell, \lam_j > 0, j \in J_\ell \right\}.$$ Now, the second essential step in the proof
evaluates the infimum in the second sum by the Cauchy-Schwarz
inequality  to obtain that $$ \inf \left\{\Ga(\beta|\lam) : \lam \in
A^{-1}_\J(\theta) \right\}=\sum_{\ell \in \N_k}
\frac{1}{2}
\left(\frac{\|\beta_{|J_\ell}\|_1^2}{\theta_\ell}+\theta_\ell\right).$$
We now substitute this formula into the right hand side of equation
\eqref{eq:charlie} to finish the proof.\end{proof}

When the set $\La$ is a nonempty convex cone, to emphasize that
the function $\Omega(\cdot|\La)$ is a norm we denoted it by
$\|\cdot\|_\La$. We end this section with the identification of
the dual norm of $\|\cdot\|_\La$, which is defined as
$$
\|\beta\|_{*,\La} = \max \left\{ \beta\trans u : u \in \R^n,\|u\|_\La=1 \right\}.
$$
\begin{proposition}
If $\La$ is a nonempty convex cone then there holds the equation
\beq
\|\beta\|_{*,\La} = \sup\left\{\sqrt{\frac{\sum_{i\in \N_n} \lam_i \beta_i^2}{\sum_{i\in \N_n} \lam_i}}: \lam \in \La\right\}.
\nonumber
\eeq
\label{prop:dual}
\end{proposition}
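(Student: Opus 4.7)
The approach is to introduce $\lambda$ as an auxiliary variable in the dual norm problem, dispose of $u$ by Cauchy--Schwarz, and then exploit the cone structure to eliminate the magnitude of $\lambda$.

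First, by positive homogeneity of $\|\cdot\|_\Lambda$ (established in Proposition~\ref{prop:ss}), I would rewrite the dual norm using the closed unit ball, $\|\beta\|_{*,\Lambda} = \sup\{\beta^\top u : \|u\|_\Lambda \leq 1\}$. Unfolding the definition $\|u\|_\Lambda = \inf\{\Gamma(u,\lambda): \lambda \in \Lambda\}$, the constraint $\|u\|_\Lambda \leq 1$ is equivalent to the existence of some $\lambda \in \Lambda$ with $\Gamma(u,\lambda)\leq 1$. Swapping the two suprema then yields
$$\|\beta\|_{*,\Lambda} \;=\; \sup_{\lambda\in\Lambda}\;\sup\left\{\beta^\top u \,:\, \sum_{i\in\N_n}\frac{u_i^2}{\lambda_i} \,\leq\, 2 - \sum_{i\in\N_n}\lambda_i\right\},$$
where the inner set is an ellipsoid, empty unless $\sum_i \lambda_i \leq 2$.

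Next, for fixed $\lambda \in \Lambda$ with $\sum_i \lambda_i \leq 2$, Cauchy--Schwarz applied to the pairing $\beta^\top u = \sum_i (\sqrt{\lambda_i}\,\beta_i)(u_i/\sqrt{\lambda_i})$ shows that the inner supremum equals $\sqrt{\sum_i \lambda_i \beta_i^2}\cdot\sqrt{2 - \sum_i \lambda_i}$, and is attained at $u_i$ proportional to $\lambda_i\beta_i$. The final step uses that $\Lambda$ is a cone: every $\lambda \in \Lambda$ can be written as $t\mu$ with $\mu \in \Lambda$ and $t>0$, turning the expression into $\sqrt{t\bigl(2-t\sum_i\mu_i\bigr)\sum_i \mu_i \beta_i^2}$. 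Elementary one-variable optimization over $t>0$ (the quadratic in $t$ is maximized at $t = 1/\sum_i \mu_i$) reduces this to $\sqrt{\sum_i \mu_i\beta_i^2 \big/ \sum_i \mu_i}$. Taking the supremum over $\mu\in\Lambda$ delivers the claimed identity.

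The main obstacle is bookkeeping: verifying that the sup-swap is legitimate, that the cone scaling eliminates the apparently restrictive constraint $\sum_i \lambda_i \leq 2$, and that the optimizing $t$ lies in the feasible range. All three issues dissolve at once, since for any $\mu \in \Lambda$ the optimizing choice $t = 1/\sum_i \mu_i$ satisfies $t\sum_i \mu_i = 1 \leq 2$, so the cone scaling never exits the feasible region, and the ratio $\sum_i \mu_i\beta_i^2/\sum_i \mu_i$ is clearly invariant under $\mu\mapsto t\mu$, which is exactly the reason we may replace the constrained sup by an unconstrained sup over $\Lambda$.
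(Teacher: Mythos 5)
Your proposal is correct, and it reaches the identity by a genuinely different route from the paper. The paper characterizes $\|\beta\|_{*,\La}$ as the smallest constant $\varphi$ for which $\frac{\varphi}{2}\sum_{i}\left(u_i^2/\lam_i+\lam_i\right)-\beta\trans u\geq 0$ holds for all $u$ and all $\lam\in\La$; eliminating $u$ by an unconstrained quadratic minimization (stationarity at $u_i=\lam_i\beta_i/\varphi$) directly yields $\varphi^2\geq \sum_i\lam_i\beta_i^2/\sum_i\lam_i$, and the sup over $\lam$ finishes the proof. Because that formulation is already scale-free, the paper never has to confront a normalization constraint on $\lam$. You instead work primally: you decompose the unit ball $\{u:\|u\|_\La\leq 1\}$ into the ellipsoids $\{u:\sum_i u_i^2/\lam_i\leq 2-\sum_i\lam_i\}$, solve each linear maximization by Cauchy--Schwarz, and then must explicitly burn the cone structure to remove the residual constraint $\sum_i\lam_i\leq 2$ via the one-variable optimization over $t$. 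Your route is slightly longer but more transparent about where the hypothesis that $\La$ is a cone enters (the paper uses it only implicitly, to guarantee that $\|\cdot\|_\La$ is a norm and that the resulting ratio is scale-invariant), and it produces the maximizing $u$ and the optimal scaling of $\lam$ as a byproduct. One small point worth tightening: the constraint $\|u\|_\La\leq 1$ is an infimum condition and is not literally equivalent to the existence of $\lam\in\La$ with $\Gamma(u,\lam)\leq 1$ when the infimum equals $1$ but is not attained; this is harmless here because $\sup\{\beta\trans u:\|u\|_\La\leq 1\}=\sup\{\beta\trans u:\|u\|_\La<1\}$ by homogeneity, and strict inequality does guarantee a witness $\lam$, but the sentence as written elides that step.
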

\begin{proof}
By definition, $\varphi =\|\beta\|_{*,\La}$ is the smallest constant $\varphi$ such that, for every $\lam \in \La$ and $u \in \R^n$,
it holds that
$$
\frac{\varphi}{2} \sum_{i\in \N_n} \left(\frac{u_i^2}{\lam_i} + \lam_i \right)- \beta\trans u \geq 0.
$$
Minimizing the left hand side of this inequality for $u \in \R^n$ yields the equivalent inequality
$$
\varphi^2 \geq \frac{\sum_{i\in \N_n} \lam_i \beta_i^2}{\sum_{i \in \N_n} \lam_i}.
$$
Since this inequality holds for every $\lam \in \La$, the result follows by taking the supremum of the
right hand side of the above inequality over this set.
\end{proof}
The formula for the dual norm suggests that we introduce the set ${\tilde \La} = \{\lam: \lam \in \La, \sum_{i \in \N_n} \lam_i = 1\}$. With this notation we see that the dual norm becomes
$$
\|\beta\|_{*,\La} = \sup\left\{\sqrt{\sum_{i\in \N_n} \lam_i \beta_i^2}: \lam \in {\tilde \La}\right\}.
$$
Moreover, a direct computation yields an alternate form for the original norm given by the equation
$$
\|\beta\|_{\La} = \inf\left\{\sqrt{\sum_{i\in \N_n} \frac{\beta_i^2}{\lam_i}}: \lam \in {\tilde \La}\right\}.
$$



\section{Box penalty} \label{sec:box}
We proceed to discuss some examples
of the set $\La \subseteq
\R_{++}^n$ which may be used in the design of the penalty function
$\Omega(\cdot|\La)$.

The first example, which is presented in this section, corresponds to
the prior knowledge that the magnitude of the components of the
regression vector should be in some prescribed intervals.
We choose $a =(a_i:i \in \N_n)$, $b = (b_i:i \in \N_n)
\in \R^n$, $0 < a_i \leq b_i$ and define the corresponding box as $B[a,b] :=
\{(\lam_i: i \in \N_n): \lam_i \in [a_i,b_i],~i \in \N_n\}.$
The theorem below establishes the form of the box penalty.
To state our result,
we define, for every $t \in \R$, the function $t_+ = \max(0,t)$.
\begin{theorem}
\label{thm:box}
We have that
$$
\Omega(\beta|B[a,b]) = \|\beta\|_1 + \sum_{i \in \N_n} \left(\frac{1}{2a_i} (a_i-|\beta_i|)_+^2 + \frac{1}{2b_i}(|\beta_i|-b_i)_+^2\right).
$$
Moreover, the components of the vector $\lam(\beta) := {\rm argmin}\{\Gamma(\beta,\lam): \lam \in B[a,b]\}$ are given
by the equations $\lam_i(\beta) = |\beta_i|+ (a_i-|\beta_i|)_+ - (|\beta_i|-b)_+$, $i \in \N_n$.
\end{theorem}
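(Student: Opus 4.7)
The plan is to exploit the product structure of the problem: both the objective $\Gamma(\beta,\lam) = \frac{1}{2}\sum_i (\beta_i^2/\lam_i + \lam_i)$ and the constraint set $B[a,b] = \prod_{i \in \N_n} [a_i,b_i]$ separate across coordinates. Consequently, minimizing $\Gamma(\beta,\cdot)$ over $B[a,b]$ reduces to the $n$ independent one-dimensional problems
\[
\min_{\lam_i \in [a_i,b_i]} f_i(\lam_i), \qquad f_i(\lam_i) := \tfrac{1}{2}\Bigl(\tfrac{\beta_i^2}{\lam_i} + \lam_i\Bigr),
\]
and the penalty is the sum of their optimal values.

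First, I would note that $f_i$ is strictly convex on $\R_{++}$, with $f_i'(\lam_i) = \frac{1}{2}(1 - \beta_i^2/\lam_i^2)$, so its unconstrained minimizer is $\lam_i = |\beta_i|$ with minimal value $|\beta_i|$ (by the arithmetic--geometric mean inequality, which also gives equality exactly at this point). The constrained minimizer on $[a_i,b_i]$ is therefore the projection of $|\beta_i|$ onto $[a_i,b_i]$, namely $\lam_i(\beta) = \max\bigl(a_i, \min(b_i, |\beta_i|)\bigr)$, which is easily rewritten using the positive part as the claimed formula
\[
\lam_i(\beta) = |\beta_i| + (a_i - |\beta_i|)_+ - (|\beta_i| - b_i)_+.
\]

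Next, I would substitute $\lam_i(\beta)$ back into $f_i$ and evaluate by case analysis on the sign of $|\beta_i| - a_i$ and $|\beta_i| - b_i$. In the middle regime $a_i \le |\beta_i| \le b_i$, both hinges vanish and $f_i(\lam_i(\beta)) = |\beta_i|$, matching the formula. In the regime $|\beta_i| < a_i$, the hinge $(|\beta_i| - b_i)_+$ vanishes and a direct expansion gives $|\beta_i| + \frac{1}{2a_i}(a_i - |\beta_i|)^2 = \frac{1}{2}(\beta_i^2/a_i + a_i) = f_i(a_i)$; the regime $|\beta_i| > b_i$ is symmetric with $a_i$ replaced by $b_i$. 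Summing over $i$ delivers the claimed closed form for $\Omega(\beta|B[a,b])$.

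There is essentially no obstacle; the only minor point worth mentioning is the boundary case $\beta_i = 0$, for which the infimum in the definition of $\Omega$ need not be attained in $\R_{++}$ but is attained on $B[a,b]$ (at $\lam_i = a_i$), and the formula is consistent by continuity. Admissibility of $B[a,b]$ in the sense of Definition \ref{def:1} also justifies applying Proposition \ref{prop:0} whenever $\beta \in (\R\setminus\{0\})^n$, so that the minimizer is unique and the partial derivatives of $\Omega(\cdot|B[a,b])$ are $\beta_i/\lam_i(\beta)$, in agreement with differentiating the closed-form expression.
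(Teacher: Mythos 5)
Your proposal is correct and follows essentially the same route as the paper: reduce by separability to the one-dimensional problem, use strict convexity of $\lam \mapsto \tfrac{1}{2}(\beta_i^2/\lam + \lam)$ to identify the constrained minimizer as the clamp of $|\beta_i|$ to $[a_i,b_i]$, and verify the closed form by case analysis, with the $\beta_i=0$ case handled by continuity. No gaps.
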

\begin{proof}
Since $\Omega(\beta| B[a,b]) = \sum_{i\in \N_n}
\Omega(\beta_i|[a_i,b_i])$ it suffices to
establish the result in the case $n=1$. We shall show that if
$a,b,\beta \in \R$, $a\leq b$ then
\beq
\label{eq:dec11}
\Omega(\beta| [a,b])
= |\beta| + \frac{1}{2a} (a-|\beta|)_+^2 +
\frac{1}{2b}(|\beta|-b)_+^2.
\eeq
Since both sides of the above equation are continuous functions of
$\beta$ it suffices to prove this equation for $\beta \in \R
\backslash \{0\}$. In this case, the function $\Gamma(\beta,\cdot)$ is
strictly convex, and so, has a unique minimum in $\R_{++}$ at $\lam
= |\beta|$, see also Figure \ref{fig:1}-b. Moreover, if $|\beta|\leq a$ the minimum occurs at
$\lam =a$, whereas
if $|\beta|\geq b$, it occurs at $\lam=b$. This establishes the formula for
$\lam(\beta)$.
Consequently, we have that $$ \Omega(\beta|[a,b]) =
\left\{ \begin{array}{lll} \nonumber |\beta|,  & {\rm if}~ |\beta|
\in [a,b] \vspace{.2truecm} \\ \nonumber
\frac{1}{2}\left(\frac{\beta^2}{a}+ a\right),  & {\rm if}~|\beta|
< a \\ \nonumber \frac{1}{2}\left(\frac{\beta^2}{b}+ b\right),  &
{\rm if}~|\beta| > b. \end{array}\right. $$ Equation
\eqref{eq:dec11} now follows by a direct computation.
\end{proof}
We also refer to \cite{jacobT,Owen} for related penalty functions.
Note that the function in equation \eqref{eq:dec11} is a
concatenation of two quadratic functions, connected together with
a linear function. Thus, the box penalty will favor sparsity only
for $a=0$, case that is defined by a limiting argument.

\section{Wedge penalty}
\label{sec:WP}
In this section, we consider the case that the coordinates of the
vector $\lam \in \La$ are ordered in a nonincreasing fashion. As we
shall see, the corresponding penalty function favors regression vectors
which are likewise nonincreasing.

We define the wedge $$W = \{\lambda:
\lam=(\lam_i:i \in \N_n) \in \R_{++}^n, \lam_i \geq \lam_{i+1},~i
\in \N_{n-1}\}. $$ Our next result describes the form of the penalty
$\Omega$ in this case. To explain this result we require some
preparation.  We say that a partition $\J = \{J_\ell:
\ell \in \N_k\}$ of $\N_n$ is {\em contiguous} if for all
$i \in J_\ell,j \in J_{\ell+1}$, $\ell \in \N_{k-1}$,
it holds that $i <
j$. For example, if $n=3$, partitions
$\{\{1,2\},\{3\}\}$ and $\{\{1\},\{2\},\{3\}\}$ are
contiguous but $\{\{1,3\},\{2\}\}$ is not.
\begin{definition} Given any two disjoint subsets $J,K
\subseteq \N_n$ we define the region in $\R^n$ \beq
Q_{J,K}=\left\{\beta: \beta \in \R^n, \frac{\|\beta_{|J}\|_2^2}{|J|}
> \frac{\|\beta_{|K}\|_2^2}{|K|} \right\}. \eeq \end{definition}
\noindent Note that the boundary of this region is determined by
the zero set of a homogeneous polynomial of degree two. We also
need the following construction. \begin{definition}
\label{def:4.2} For every $S\subseteq\N_{n-1}$ we set $k =
|S|+1$ and label the elements of $S$ in increasing order as $S =
\{j_\ell: \ell \in \N_{k-1}\}$. We associate with the set $S$ a
contiguous partition of $\N_n$, given by $\J(S)= \{J_\ell: \ell
\in \N_k\}$, where we define $J_\ell:=[j_{\ell-1}+1,j_\ell] \cap
\N_n,$ $\ell \in \N_k$, and set $j_0=0$ and $j_k = n$.
\end{definition} Figure \ref{fig:3} illustrates an example of a
contiguous partition along with the set $\J(S)$.
\begin{figure}
\begin{center}
\includegraphics[width=0.8\textwidth]{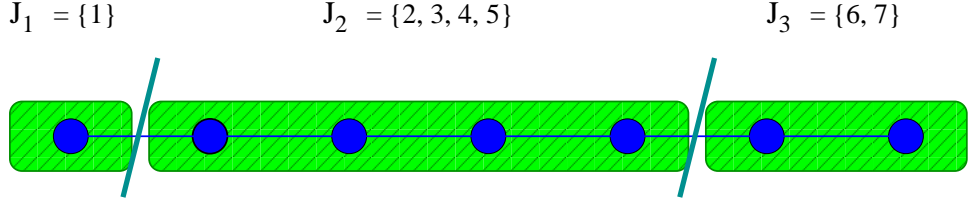}
\caption{Partition of $\beta = (1.0732, -0.4872, 0.2961, -1.3692, 1.4731, -0.0073, -0.2133)$.}
\label{fig:3}
\end{center}
\end{figure}

A subset $S$ of $\N_{n-1}$
also induces two regions in $\R^n$ which play a central role in the
identification of the wedge penalty. First, we describe the region
which ``crosses over'' the induced partition $\J(S)$. This is defined
to be the set
\beq O_S :=\bigcap
\left\{Q_{J_\ell,J_{\ell+1}}: \ell \in \N_{k-1}\right\}.
\label{eq:across} \eeq
In other words, $\beta \in O_S$ if the average of the square of its components
within each region $J_\ell$
strictly decreases with $\ell$. The next region which is essential in our analysis is the ``stays within'' region,
induced by the partition $\J(S)$. This region 
is defined as \beq I_S := \bigcap
\left\{ {\bar Q}_{J_\ell,J_{\ell,q}}: q \in J_\ell, \ell \in \N_k \right\}
\label{eq:within} \eeq
where ${\bar Q}$ denotes the closure of the set $Q$ and we use the 
notation $J_{\ell,q} := \{j: j \in J_\ell, j \leq q\}$. 
In other words, all vectors $\beta$ within this region have the
property that, for every set $J_\ell \in \J(S)$, the average of the
square of a first segment of components of $\beta$ within this set is
not greater than the average over $J_\ell$. We note
that if $S$ is the empty set the above notation should be interpreted
as $O_S = \R^n$ and $$ I_S = \bigcap \{\bar{Q}_{\N_n,\N_q}: q \in
\N_n\}.  $$

\noindent From the cross-over and stay-within sets we
define the region $$ P_S = O_S \cap I_S. $$ Alternatively, we shall describe below the
set $P_S$ in terms of two vectors induced by a vector $\beta \in
\R^n$ and the set $S \subseteq \N_{n-1}$. These vectors play the role of the
Lagrange multiplier and the minimizer $\lambda$ for the wedge
penalty in the theorem below.
\begin{definition}
\label{def:3}
For every vector $\beta \in (\R\backslash \{0\})^n$ and every subset
$S \subseteq \N_{n-1}$ we let $\J(S)$ be the induced contiguous
partition of $\N_n$ and define two vectors $\zeta(\beta,S) \in
\R_+^{n+1}$ and $\delta(\beta,S) \in \R_{++}^n$ by
\beq
\label{eq:111}
\zeta_q(\beta,S) = \left\{ \begin{array}{lll} \nonumber 0,  & {\rm if}~ q \in S \cup \{0,n\}, \\ \nonumber \\ \nonumber
|J_{\ell,q}| - |J_\ell| \frac{\|\beta_{|J_{\ell,q}}\|_2^2}{\|\beta_{|J_{\ell}}\|_2^2}
,  & {\rm if}~ q \in J_\ell, \ell \in \N_k\nonumber
\end{array}\right.
\eeq
and
\beq
\label{eq:222}
\delta_q(\beta,S) = \frac{\|\beta_{|J_\ell}\|_2}{\sqrt{|J_\ell|}},~q \in J_\ell, \ell \in \N_k.
\eeq
\end{definition}
\noindent Note that the components of $\delta(\beta,S)$ are constant on
each set $J_\ell$, $\ell \in \N_k$.

\begin{lemma}
\label{lem:small}
For every $\beta \in (\R \backslash \{0\})^n$ and $S \subseteq \N_{k-1}$ we have that
\begin{enumerate}
\item[(a)]
$\beta \in P_S$ {\em if and only if} $\zeta(\beta,S) \geq 0$ and $\delta(\beta,S) \in {\rm int}(W)$;
\item[(b)] If $\delta(\beta,S_1) = \delta(\beta,S_2)$ and $\beta \in O_{S_1} \cap O_{S_2}$ then $S_1 = S_2$.
\end{enumerate}
\end{lemma}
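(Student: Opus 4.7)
The plan is to reduce both parts to a direct translation between the definitions of $\zeta$, $\delta$ and the defining inequalities of $O_S$ and $I_S$. For part (a), the key observation is that, by \eqref{eq:222}, $\delta(\beta,S)$ is constant on each block $J_\ell$ of the contiguous partition $\J(S)$, with common value $\|\beta_{|J_\ell}\|_2/\sqrt{|J_\ell|}$. Consequently, $\delta(\beta,S) \in {\rm int}(W)$ is equivalent to strict decrease between consecutive blocks (positivity is automatic from $\beta \in (\R \backslash \{0\})^n$), that is, $\|\beta_{|J_\ell}\|_2^2/|J_\ell| > \|\beta_{|J_{\ell+1}}\|_2^2/|J_{\ell+1}|$ for every $\ell \in \N_{k-1}$; this is exactly $\beta \in Q_{J_\ell,J_{\ell+1}}$ for each $\ell$, i.e.\ $\beta \in O_S$.

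For the remaining condition $\zeta(\beta,S) \geq 0$, the components indexed by $q \in S \cup \{0,n\}$ vanish by construction, while for $q \in J_\ell$ outside this set, formula \eqref{eq:111} gives $\zeta_q(\beta,S) \geq 0$ if and only if $|J_{\ell,q}|\,\|\beta_{|J_\ell}\|_2^2 \geq |J_\ell|\,\|\beta_{|J_{\ell,q}}\|_2^2$, which in turn rearranges to $\beta \in \bar{Q}_{J_\ell,J_{\ell,q}}$. Intersecting over all admissible $q$ and $\ell$ recovers exactly the definition \eqref{eq:within} of $I_S$. Combined with the previous paragraph, this yields $\beta \in P_S = O_S \cap I_S$, establishing part (a).

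For part (b), I would argue by contraposition. Suppose $S_1 \neq S_2$ and pick $j$ in their symmetric difference; without loss of generality, $j \in S_1 \setminus S_2$. Then $j$ is a break point of $\J(S_1)$, so $j$ and $j+1$ lie in two distinct consecutive blocks of $\J(S_1)$; whereas $j$ is an interior index of some single block $J^{(2)}_m$ of $\J(S_2)$, so $j$ and $j+1$ both lie in $J^{(2)}_m$. The hypothesis $\beta \in O_{S_1}$ forces the strict inequality $\delta_j(\beta,S_1) > \delta_{j+1}(\beta,S_1)$, whereas the constancy of $\delta(\beta,S_2)$ on $J^{(2)}_m$ forces the equality $\delta_j(\beta,S_2) = \delta_{j+1}(\beta,S_2)$, contradicting $\delta(\beta,S_1) = \delta(\beta,S_2)$.

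The only delicate bookkeeping is tracking the index edge cases (most notably $q = j_\ell \in S \cup \{n\}$, where $\zeta_q = 0$ by definition and the inequality is automatic, and $q = j_{\ell-1}+1$, where $|J_{\ell,q}| = 1$), and invoking $\beta \in (\R \backslash \{0\})^n$ to guarantee that all partial norms appearing as denominators in the rearrangements are nonzero; once these are checked, both parts are immediate.
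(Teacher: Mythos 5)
Your proof is correct and is exactly the definition-chase the paper intends: the paper's own proof of this lemma is a single sentence asserting that part (a) ``follows directly from the definition of the requisite quantities'' and that part (b) follows from $\delta(\beta,S)$ being constant on each block of $\J(S)$ and strictly decreasing across blocks, which is precisely the contraposition argument you spell out. The only caveat --- a looseness in the paper's statement rather than in your argument --- is that $\delta(\beta,S)\in{\rm int}(W)$ must be read, as you implicitly do, as ``strictly decreasing from each block of $\J(S)$ to the next'' rather than as the literal topological interior of $W$, since $\delta(\beta,S)$ is constant within any block of size greater than one and so never lies in the true interior unless $S=\N_{n-1}$.
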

\begin{proof}
The first assertion follows directly from the definition of the
requisite quantities. The proof of the second assertion is a direct
consequence of the fact that the vector $\delta(\beta,S)$ is a constant
on any element of the partition $\J(S)$ and strictly decreasing from
one element to the next in that partition.
\end{proof}

For the theorem below we introduce, for every $S \in \N_{n-1}$ the sets
$$
U_S := P_S \cap (\R \backslash \{0\})^n.
$$
We shall establishes not only that the collection of sets $\U := \{U_S: S \subseteq
\N_{n-1}\}$ form a {\em partition} of $(\R \backslash \{0\})^n$, that is, their union is
$(\R \backslash \{0\})^n$ and two distinct elements of
$\U$ are disjoint, but also explicitly determine the wedge
penalty on each element of $\U$.

\begin{theorem} \label{thm:sp} The collection of sets $\U := \{U_S: S \subseteq \N_{n-1}\}$ form a partition of $(\R\backslash \{0\})^n$. For each $\beta \in (\R\backslash \{0\})^n$ there is a unique
$S \subseteq \N_{n-1}$ such that $\beta \in {\cal U}_S$, and
\beq \|\beta\|_{W} = \sum_{\ell \in
\N_{k}} \sqrt{|J_\ell|} \|\beta_{|J_\ell}\|_2,
\label{eq:omline}
\eeq where $k=|S|+1$.
Moreover, the components of the vector $\lam(\beta) := {\rm argmin}\{\Gamma(\beta,\lam): \lam \in W\}$ are given
by the equations $\lam_j(\beta) = \mu_\ell,~j \in J_\ell,~\ell \in \N_k$, where
\beq \mu_\ell =
\frac{\|\beta_{|J_\ell}\|_2}{\sqrt{|J_\ell|}}.
\label{eq:minmu}
\eeq
\end{theorem}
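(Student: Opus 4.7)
The plan is to derive the result from the KKT conditions for the strictly convex program $\inf\{\Gamma(\beta,\lam):\lam\in W\}$, and to match the resulting Lagrange multipliers and primal block values with the quantities in Definition~\ref{def:3}. Fix $\beta\in(\R\backslash\{0\})^n$. Because $W$ is a convex subset of $\R_{++}^n$ and $\Gamma(\beta,\cdot)$ is strictly convex and coercive -- it diverges as any $\lam_i\to 0^+$ or as $\|\lam\|\to\infty$ -- the infimum is attained at a unique $\lam^*\in W$, so both $\lam(\beta)$ and $\|\beta\|_W$ are well defined. With non-negative multipliers $\zeta_q$ for the constraints $\lam_q-\lam_{q+1}\geq 0$ (and the convention $\zeta_0:=\zeta_n:=0$), stationarity at $\lam^*$ reads
$$
\zeta_q-\zeta_{q-1} \;=\; \frac{1}{2}\Bigl(1-\frac{\beta_q^2}{(\lam_q^*)^2}\Bigr),\qquad q\in\N_n,
$$
while complementary slackness forces $\zeta_q=0$ at every $q\in\N_{n-1}$ with $\lam_q^*>\lam_{q+1}^*$.

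Next, I let $S\subseteq\N_{n-1}$ be the set of strict-decrease positions of $\lam^*$ and $\J(S)=\{J_\ell:\ell\in\N_k\}$ the induced contiguous partition of $\N_n$. By construction $\lam^*$ is constant on each block $J_\ell$, say $\lam_i^*=\mu_\ell$ for $i\in J_\ell$, with $\mu_1>\mu_2>\cdots>\mu_k$. Minimizing the block contribution $\frac{1}{2}\sum_{i\in J_\ell}(\beta_i^2/\mu_\ell+\mu_\ell)$ in $\mu_\ell>0$ yields $\mu_\ell=\|\beta_{|J_\ell}\|_2/\sqrt{|J_\ell|}$ with value $\sqrt{|J_\ell|}\,\|\beta_{|J_\ell}\|_2$; summing over $\ell$ gives \eqref{eq:omline} and recovers \eqref{eq:minmu}. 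Hence $\lam^*=\delta(\beta,S)$, and the strict monotonicity of the $\mu_\ell$ is precisely $\delta(\beta,S)\in{\rm int}(W)$. Telescoping the stationarity relation from the start of $J_\ell$ (where $\zeta_{j_{\ell-1}}=0$) down to any $q\in J_\ell$, and using $\mu_\ell^2=\|\beta_{|J_\ell}\|_2^2/|J_\ell|$, I obtain
$$
\zeta_q \;=\; \frac{1}{2}\Bigl(|J_{\ell,q}|-|J_\ell|\,\frac{\|\beta_{|J_{\ell,q}}\|_2^2}{\|\beta_{|J_\ell}\|_2^2}\Bigr) \;=\; \frac{1}{2}\,\zeta_q(\beta,S),
$$
so the dual feasibility $\zeta\geq 0$ is precisely $\zeta(\beta,S)\geq 0$. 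By Lemma~\ref{lem:small}(a), these two conditions together are equivalent to $\beta\in P_S$, whence $\beta\in U_S$.

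Since KKT is sufficient for this convex program, the converse also holds: whenever $\beta\in U_S$ for some $S$, the block-constant vector $\delta(\beta,S)$ together with multipliers $\frac{1}{2}\zeta(\beta,S)$ satisfies the full KKT system and therefore coincides with the unique minimizer $\lam^*$. Uniqueness of $S$ is then immediate: if $\beta\in U_{S_1}\cap U_{S_2}$, then $\delta(\beta,S_1)=\lam^*=\delta(\beta,S_2)$ and $\beta\in O_{S_1}\cap O_{S_2}$, so Lemma~\ref{lem:small}(b) forces $S_1=S_2$; combined with the existence argument above, this shows that the family $\{U_S:S\subseteq\N_{n-1}\}$ is a partition of $(\R\backslash\{0\})^n$. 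The main obstacle will be the KKT bookkeeping -- identifying the multipliers with $\frac{1}{2}\zeta(\beta,S)$ via the telescoping calculation and verifying that $\zeta(\beta,S)\geq 0$ and strict monotonicity of $\delta(\beta,S)$ jointly constitute a complete KKT certificate -- after which the formulas \eqref{eq:omline},~\eqref{eq:minmu}, the partition property, and the uniqueness of $S$ all drop out.
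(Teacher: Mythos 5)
Your proof is correct and follows essentially the same route as the paper: write the KKT conditions for the strictly convex program over the wedge, read off the contiguous partition from the active constraints, telescope the stationarity equations over each block to identify the multipliers with $\zeta(\beta,S)$ (up to the harmless factor $\tfrac12$ coming from how the objective is normalized) and the minimizer with $\delta(\beta,S)$, then reverse the computation and invoke Lemma~\ref{lem:small} together with uniqueness of the minimizer to get the partition property. Your explicit coercivity argument for attainment of the minimum is a small point the paper leaves implicit, but otherwise the two arguments coincide.
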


\begin{proof} First, let us observe that there are $n-1$ inequality
constraints defining $W$. It readily follows that all vectors
in this constraint set are {\it regular}, in the sense of
optimization theory, see \cite[p. 279]{Bert}. Hence, we can appeal to
\cite[Prop. 3.3.4,~p. 316 and Prop. 3.3.6,~p. 322]{Bert}, which state that $\lam \in \R_{++}^n$ is a solution to the minimum problem determined
by the wedge penalty, if and only if there exists a vector
$\alpha=(\alpha_i: i \in \N_{n-1})$ with nonnegative components such
that \beq -\frac{\beta^2_j}{\lam_j^2} + 1 + \alpha_{j-1}-\alpha_j =
0,~~~j \in \N_n \label{eq:1st-hh}, \eeq where we set $\alpha_0 =
\alpha_n = 0.$ Furthermore, the following complementary slackness
conditions hold true \beq
\label{eq:2nd-hh}
\alpha_{j} (\lam_{j+1} -
\lam_j) = 0,~j \in \N_{n-1}.
\eeq
To unravel these equations,
we let ${\hat S}:= \{j: \lam_{j} > \lam_{j+1}, j \in \N_{n-1}\}$, which is
the subset of indexes corresponding to the constraints that are not tight.
When $k \geq 2$, we express this set in the form $\{j_\ell: \ell \in \N_{k-1}\}$
where $k = |{\hat S}|+1$.
As explained in Definition \ref{def:4.2}, the set $\hS$ induces the partition $\J(\hS) = \{J_\ell: \ell \in \N_k\}$ of
$\N_n$.
When $k=1$ our notation should be interpreted to mean that $\hS$ is empty
and the partition $\J(\hS)$ consists only of $\N_n$. In this case, it is easy to
solve equations \eqref{eq:1st-hh} and \eqref{eq:2nd-hh}. In fact,
all components of the vector $\lam$ have a common value, say $\mu > 0$, and by summing both sides of equation \eqref{eq:1st-hh} over $j \in \N_n$ we obtain that
$$
\mu^2 = \frac{\|\beta\|_2^2}{n}.
$$
Moreover, summing both sides of the same equation over $j \in \N_q$ we obtain
that
$$\alpha_q = - \frac{\sum_{j \in \N_q} \beta_j^2}{\mu^2} + q
$$ and,
since $\alpha_q \geq 0$ we conclude that $\beta \in I_\hS=P_\hS$.

We now consider the case that $k \geq 2$. Hence, the vector $\lam$ has
equal components on each subset $J_\ell$, which we denote by
$\mu_\ell$, $\ell \in \N_{k-1}$.  The definition of the set $\hS$
implies that the sequence $\{\mu_\ell: \ell \in \N_k\}$ is strictly decreasing and equation
\eqref{eq:2nd-hh} implies that $\alpha_j = 0$, for every $j \in \hS$.
Summing both sides of equation \eqref{eq:1st-hh} over $j \in J_\ell$
we obtain that \beq
\label{eq:aux} -\frac{1}{\mu_\ell^2} \sum_{j \in J_\ell} \beta_j^2 +
|J_\ell| = 0 \eeq from which equation \eqref{eq:minmu} follows.
Since the $\mu_\ell$ are strictly decreasing, we conclude that $\beta \in O_\hS$.
Moreover, choosing $q \in J_\ell$ and
summing both sides of equations \eqref{eq:1st-hh} over $j \in J_{\ell,q}$
we obtain that
$$
0 \leq \alpha_q = -\frac{\|\beta_{|J_{\ell,q}}\|_2^2}
{\mu_\ell^2} +|J_{\ell,q}|
$$
which implies that $\beta \in {\bar Q}_{J_\ell,J_{\ell,q}}$. Since this holds for
every $q \in J_\ell$ and $\ell \in N_k$ we conclude that $\beta \in I_\hS$ and therefore, it follows that $\beta \in U_S$.

In summary, we have shown that $\alpha = \zeta(\beta,\hS)$, $\lam =
\delta(\beta,\hS)$, and $\beta \in U_\hS$. In particular, this implies
that the collection of sets $\U$ covers $\zRn$. Next, we show that the
elements of ${\cal U}$ are disjoint. To this end, we observe that, the
computation described above can be {\em reversed}. That is to say,
conversely for {\em any} $\hS
\subseteq \N_{n-1}$ and $\beta \in U_\hS$
we conclude that $\delta(\beta,\hS)$ and $\zeta(\beta,\hS)$ solve the
equations \eqref{eq:1st-hh} and \eqref{eq:2nd-hh}. Since the wedge
penalty function is {\em strictly convex} we know that equations
\eqref{eq:1st-hh} and \eqref{eq:2nd-hh} have a unique solution. Now, if
$\beta \in U_{S_1} \cap U_{S_2}$ then it must follow that
$\delta(\beta,S_1) =
\delta(\beta,S_2)$. Consequently, by part (b) in Lemma \ref{lem:small}
we conclude that $S_1=S_2$.
\end{proof}
Note that the set $S$ and the associated partition $\J$ appearing
in the theorem is identified by examining the optimality conditions of the optimization 
problem \eqref{eq:gen-omega} for $\La=W$. There are $2^{n-1}$ possible partitions. Thus, for a given $\beta \in \zRn$, determining the corresponding partition is a challenging problem.  We explain how to do
this in Section \ref{sec:algo}.

An interesting property of the Wedge penalty, which is indicated by
Theorem \ref{thm:sp}, is that it has the form of a group Lasso penalty
as in equation \eqref{eq:GL}, with groups not fixed {\em a-priori} but
depending on the location of the vector $\beta$. The groups are the
elements of the partition $\J$ and are identified by certain convex
constraints on the vector $\beta$. For example, for $n=2$ we obtain
that $\Omega(\beta|W) =
\|\beta\|_1$ if $|\beta_1| > |\beta_2|$ and $\Omega(\beta|W) =
\sqrt{2}\|\beta\|_2$ otherwise.
For $n=3$, we have that
$$
\Omega(\beta|W) = \left\{ \begin{array}{lll} \nonumber
\|\beta\|_1,  & {\rm if~} |\beta_1| >
|\beta_2| > |\beta_3| & ~~\J = \{\{1\},\{2\},\{3\}\}
 \\ \\
\nonumber \sqrt{2(\beta_1^2 +
\beta_2^2)}+|\beta_3|,  & {\rm if~} |\beta_1| \leq |\beta_2| ~~{\rm and~~}
\frac{\beta_1^2+\beta_2^2}{2} > \beta_3^2 & ~~\J = \{\{1,2\},\{3\}\}
\\ \\ \nonumber
\nonumber
 |\beta_1|+ \sqrt{2(\beta_2^2 +
\beta_3^2)}, & {\rm if~} |\beta_2| \leq
|\beta_3|~~{\rm and~~}\beta_1^2 >
\frac{\beta_2^2+\beta_3^2}{2} & ~~\J = \{\{1\},\{2,3\}\}
\\ \\\nonumber \nonumber
 \sqrt{3(\beta_1^2 + \beta_2^2 +\beta_3^2)},  &
 {\rm otherwise} & ~~\J = \{\{1,2,3\}\}
\nonumber \end{array}\right. $$
\begin{figure}[!t]
\begin{center}
\begin{tabular}{ccccc}
\includegraphics[width=0.168\textwidth]{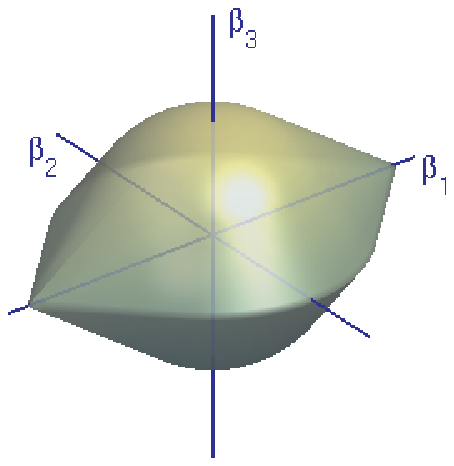} &
\includegraphics[width=0.168\textwidth]{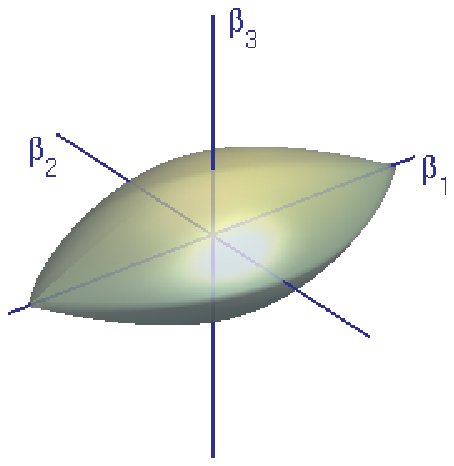} &
\includegraphics[width=0.168\textwidth]{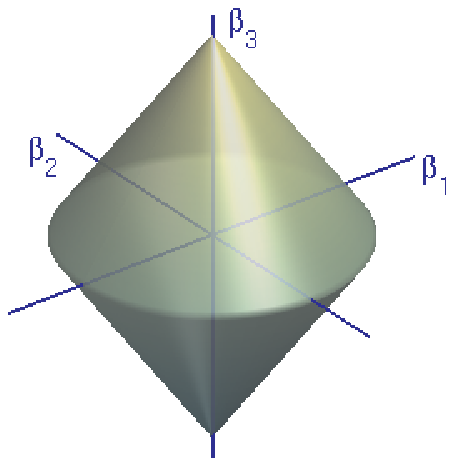} &
\includegraphics[width=0.168\textwidth]{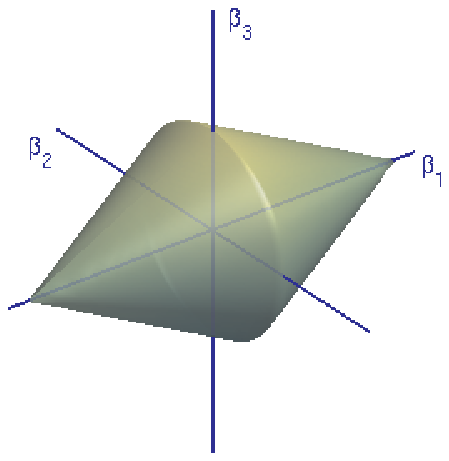} &
\includegraphics[width=0.168\textwidth]{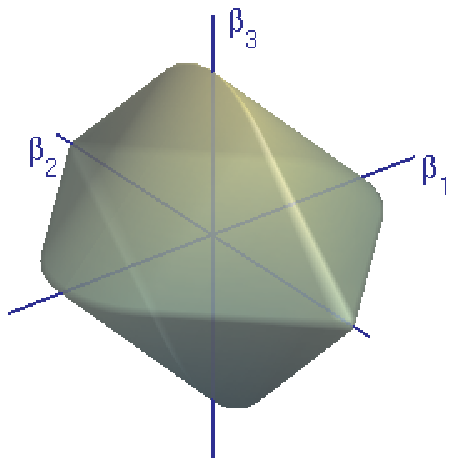} \\
(a) & (b) & (c) & (d) & (e)\\
\end{tabular}
\caption{Unit ball of different penalty functions: (a) Wedge penalty $\Omega(\cdot|W)$; (b) hierarchical group Lasso;
(c) group Lasso with groups $\{\{1,2\},\{3\}\}$; (d) group Lasso with groups $\{\{1\},\{2,3\}\}$;
(e) the penalty $\Omega(\cdot|W^2)$.}
\label{fig:4}
\end{center}
\end{figure}

\noindent where we have also displayed the partition $\J$ involved
in each case. We also present a graphical representation of the
corresponding unit ball in Figure \ref{fig:4}-a. For comparison we
also graphically display the unit ball for the hierarchical group
Lasso with groups $\{1,2,3\},\{2,3\},\{3\}$ and two group Lasso in
Figure \ref{fig:4}-b,c,d, respectively.

The wedge may equivalently be expressed as the constraint that the
difference vector $D^{1}(\lam):=(\lam_{j+1}-\lam_{j}: j \in \N_{n-1})$
is less than or equal to zero. This alternative interpretation suggests the $k$-th order difference operator,
which is given by the formula
$$
D^{k}(\lam) =
\left(\lam_{j+k}+\sum_{\ell \in \N_k} (-1)^\ell \binom{k}{\ell} \lam_{j+k-\ell}: j \in
\N_{n-k}\right)
$$
and the corresponding $k$-th wedge
\beq
W^{k} := \{\lam: \lam \in
\R^n_{++},~D^{k}(\lam) \geq 0\}.
\label{eq:k-wedge}
\eeq
The associated penalty $\Omega(\cdot|W^{k})$ encourages vectors
whose sparsity pattern is concentrated on at most $k$ different
contiguous regions. Note that $W^{1}$ is not the wedge $W$ considered earlier.
Moreover, the $2$-wedge includes vectors which have a convex ``profile'' and whose
sparsity pattern is concentrated either on the first elements of the vector, on the last, or on both.

\section{Graph penalty}
\label{sec:GP}
In this section we present an extension of the wedge set which is
inspired by previous work on the group Lasso estimator with
hierarchically overlapping groups
\cite{binyu}. It models vectors whose magnitude is ordered according to
a graphical structure.

Let $G=(V,E)$ be a directed graph, where $V$ is the set of $n$
vertices in the graph and $E \subseteq V \times V$ is the edge set,
whose cardinality is denoted by $m$. If $(v,w) \in E$ we say that
there is a directed edge from vertex $v$ to vertex $w$. The graph is
identified by the $m \times n$ {\em incidence matrix}, which we define
as $$A_{e, v} = \left\{ \begin{array}{rll} \nonumber 1, & {\rm if}~
e=(v,w) \in E,~w \in V,
\vspace{.2truecm} \\ \nonumber
-1,  &
{\rm if}~e=(w,v) \in E,~ w \in V, \nonumber \vspace{.2truecm}
\\
0, & {\rm otherwise}.
\end{array}\right.
$$ We consider the penalty $\|\cdot\|_{\La_G}$ for the convex cone $\La_G = \{\lam:
\lam \in \R^n_{++}, A\lam \geq 0\}$ and assume, from now on, that $G$ is
acyclic (DAG), that is, $G$ has no directed loops.  In particular, this
implies that, if $(v,w) \in E$ then $(w,v) \notin E$.  The wedge
penalty described above is a special case of the graph penalty corresponding to a line graph.
Let us now discuss some aspects of the graph penalty for an arbitrary DAG. As we shall see, our remarks lead
to an explicit form of the graph penalty when $G$ is a tree.

If $(v,w) \in E$ we say that vertex $w$ is a child of vertex $v$
and $v$ is a parent of $w$. For every vertex $v \in V$, we let
$C(v)$ and $P(v)$ be the set of children and parents of $v$,
respectively. When $G$ is a tree, $P(v)$ is the empty set if $v$
is the root node and otherwise $P(v)$ consists of only one
element, the parent of $v$, which we denote by $p(v)$.

Let $D(v)$ be the set of descendants of $v$, that is, the set of
vertices which are connected to $v$ by a directed path starting in $v$,
and let $A(v)$ be the set of ancestors of $v$, that is, the set of
vertices from which a directed path leads to $v$. We use the
convention that $v \in D(v)$ and $v \notin A(v)$.

Every connected subset $V' \subseteq V$ induces a subgraph of $G$
which is also a DAG.
If $V_1$ and $V_2$ are
disjoint connected subsets of $V$, we say that they are connected if there is
at least one edge connecting a pair of vertices in $V_1$ and $V_2$, in either one or the other direction.
Moreover, we say that $V_2$ is below $V_1$ --- written $V_2 \Downarrow V_1$ --- if $V_1$
and $V_2$ are connected and every edge connecting them departs from
a node of $V_1$.

\begin{definition}
Let $G$ be a DAG. We say that $C
\subseteq E$ is a cut of $G$ if it induces a partition ${\cal V}(C) =
\{V_\ell: \ell \in
\N_k\}$ of the vertex set $V$ such that $(v,w) \in C$ if and only if
vertices $v$ and $w$ belong to two different elements of the
partition.
\end{definition}
In other words, a cut separates a connected graph in two or more
connected components such that every pair of vertices
corresponding to a disconnected edge, that is an element of $C$,
are in two different components.  We also denote by ${\cal C}(G)$
the set of cuts of $G$, and by $D_\ell(v)$ the set of descendants
of $v$ within set $V_\ell$, for every $v \in V_\ell$ and $\ell \in
\mathbb{N}_k$. 

Next, for every $C \in {\cal C}(G)$, we define the regions in $\R^n$
by the equations
\beq O_C
= \bigcap \left\{Q_{V_1,V_2}:~ V_1,V_2 \in {\cal V}(C), V_2 \Downarrow V_1 \right\}
\label{eq:across-dag} \eeq and \beq
I_C = \bigcap \left\{{\bar Q}_{D_\ell(v), V_\ell}: \ell \in \N_k, v \in V_\ell\right\}.
\label{eq:within-dag}
\eeq
These sets are the graph equivalent of the sets defined by equations
\eqref{eq:across} and \eqref{eq:within} in the special case of the
wedge penalty in Section \ref{sec:WP}. We also set $P_C = O_C \cap
I_C$.

Moreover, for every $C \in {\cal C}(G)$, we define the sets
$$
U_C := P_C \bigcap (\R \backslash \{0\})^n.
$$
As of yet, we cannot extend Theorem \ref{thm:sp} to the case of an
arbitrary DAG. However, we can accomplish this when $G$ is a tree.

\begin{lemma}
Let $G=(V,E)$ be a tree, let $A$ be the associated incidence matrix and
let $z=(z_v: v \in V) \in \R^n$. The following facts are
equivalent:
\begin{enumerate}
\item[(a)] For every $v \in V$ it holds that
$$
\sum_{u \in D(v)}  z_u \geq 0.
$$
\item[(b)] The linear system $A\trans \alpha = -z$ admits a non-negative solution for $\alpha = (\alpha_e: e \in E)\in \R^{m}$.
\end{enumerate}
\label{lem:jean}
\end{lemma}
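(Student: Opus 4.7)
The plan is to exploit the tree-structural fact that for each non-root vertex $v$, the unique edge $(p(v),v)$ is the only edge of $G$ with exactly one endpoint in the descendant set $D(v)$. Consequently, summing the relation $(A\trans\alpha)_u = -z_u$ over $u \in D(v)$ causes every edge internal to $D(v)$ to cancel, leaving an identity that directly equates the single boundary multiplier $\alpha_{(p(v),v)}$ with the descendant sum $\sum_{u \in D(v)} z_u$. This single correspondence drives both directions of the equivalence.

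For (b) $\Rightarrow$ (a), fix $v \in V$ and sum using the formula $(A\trans\alpha)_u = \sum_{w \in C(u)} \alpha_{(u,w)} - \sum_{w \in P(u)} \alpha_{(w,u)}$. Every edge $(a,b)$ with both $a,b \in D(v)$ contributes $+\alpha_{(a,b)}$ when $u=a$ and $-\alpha_{(a,b)}$ when $u=b$, hence cancels, while the boundary edge $(p(v),v)$ survives with contribution $-\alpha_{(p(v),v)}$. Therefore
$$\sum_{u \in D(v)} z_u \;=\; -\sum_{u \in D(v)}(A\trans\alpha)_u \;=\; \alpha_{(p(v),v)} \;\geq\; 0,$$
and at the root all edges are internal so the sum vanishes identically.

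For (a) $\Rightarrow$ (b), the same identity prescribes the construction: for each non-root $v$ set $\alpha_{(p(v),v)} := \sum_{u \in D(v)} z_u$, which is non-negative by hypothesis. I would then verify $(A\trans\alpha)_v = -z_v$ by a leaf-to-root induction, using the disjoint decomposition $D(v) = \{v\} \cup \bigcup_{w \in C(v)} D(w)$, which is exactly where the tree hypothesis (unique parent, no shared descendants) enters. At a leaf, $D(v) = \{v\}$ gives $\alpha_{(p(v),v)} = z_v$ and hence $(A\trans\alpha)_v = -z_v$; at an internal non-root $v$,
$$(A\trans\alpha)_v \;=\; \sum_{w \in C(v)} \alpha_{(v,w)} - \alpha_{(p(v),v)} \;=\; \sum_{w \in C(v)} \sum_{u \in D(w)} z_u - \sum_{u \in D(v)} z_u \;=\; -z_v.$$

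The step I expect to be the main obstacle is closing the bookkeeping at the root $r$, since the recursion assigns no multiplier to any edge leaving $D(r) = V$, and the root equation reduces to the global compatibility $\sum_{u \in V} z_u = 0$. On the (b) side this is automatic because $\mathbf{1} \in \ker A$ forces $A\trans\alpha$ to have componentwise sum zero; on the (a) side the same equality is precisely what the telescoping derivation above yields at $v=r$. Once this consistency is pinned down, the remainder is routine edge-accounting that uses nothing beyond the acyclic, single-parent structure of the tree.
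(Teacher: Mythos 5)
Your argument is essentially the paper's own: both rest on the single tree identity $\sum_{u\in D(v)} A_{eu} = -\delta_{e,(p(v),v)}$ (the edge $(p(v),v)$ is the unique edge with exactly one endpoint in $D(v)$), and both convert the system $A\trans\alpha=-z$ into the telescoped form $\alpha_{(p(v),v)}=\sum_{u\in D(v)} z_u$. The only difference is explicitness: the paper declares the two systems ``equivalent'' and stops, whereas you actually invert the transformation by a leaf-to-root induction through $D(v)=\{v\}\cup\bigcup_{w\in C(v)}D(w)$. That extra detail is a welcome filling-in, not a different route.

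The obstacle you flag at the root is, however, a genuine gap, and you should not expect to ``pin it down'' from (a) alone: as literally stated the equivalence fails there. Since each row of $A$ has one $+1$ and one $-1$, any vector of the form $A\trans\alpha$ has components summing to zero, so (b) forces $\sum_{u\in V}z_u=0$; condition (a) applied at the root only gives $\sum_{u\in V}z_u\ge 0$. Concretely, for the two-vertex tree with edge $(1,2)$ and $z=(1,0)$, (a) holds but $A\trans\alpha=(\alpha,-\alpha)=(-1,0)$ has no solution of any sign. Your sentence asserting that the required equality ``is precisely what the telescoping derivation yields at $v=r$'' on the (a) side is therefore incorrect --- the telescoping yields that equality only under hypothesis (b). The honest repair is to add the hypothesis $\sum_{u\in V}z_u=0$ (equivalently, to read (a) at the root as an equality); with it, your induction closes at the root because $(A\trans\alpha)_r=\sum_{u\in V\setminus\{r\}}z_u=-z_r$. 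The paper's proof silently skips this point, and in the only place the lemma is used (the proof of Proposition \ref{prop:tree}, where $z=(|V_\ell|\beta_v^2/\|\beta_{|V_\ell}\|_2^2-1: v\in V_\ell)$) the total sum vanishes by construction, so nothing downstream is affected.
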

\begin{proof}
The incident matrix of a tree has the property that, for every $v \in
V$ and $e \in E$,
\beq
\label{eq:h1h}
\sum_{u \in D(v)} A_{eu} = -\delta_{e,(p(v),v)}
\eeq
where, for every $e,e' \in E$, $\delta_{e,e'} = 1$ if $e=e'$ and zero otherwise.
The linear system in (b) can be written componentwise as
$$
\sum_{e \in E} A_{eu} \alpha_e = -z_u.
$$
Summing both sides of this equation over $u \in D(v)$ and using equation \eqref{eq:h1h}, we obtain the equivalent equations
$$
\alpha_{(p(v),v)} = \sum_{u \in D(v)} z_u.
$$
The result follows.
\end{proof}

\begin{definition}
\label{def:3t} Let $G=(V,E)$ be a DAG. For every vector $\beta \in
(\R\backslash \{0\})^n$ and every cut $C \in {\cal C}(G)$ we let
${\cal V}(C)=\{V_\ell: \ell \in \N_k\}$, $k \in \N_n$ be the
partition of~$V$ induced by $C$, and define two vectors
$\zeta(\beta,C) \in \R_+^{n-1}$ and $\delta(\beta,C) \in
\R_{++}^n$. The components of $\zeta(\beta,C)$ are given as \beq
\label{eq:111t} \zeta_e(\beta,C) = \left\{ \begin{array}{lll}
\nonumber 0,  & {\rm if}~ e \in C, \\ \nonumber \\ \nonumber
|V_\ell|
\frac{\|\beta_{|D_\ell(u)}\|_2^2}{\|\beta_{|V_{\ell}}\|_2^2}-|D_\ell(u)|
,  & {\rm if}~ e = (u, v), u \in V_\ell, v \in D_\ell(u),~\ell \in
\N_k\nonumber
\end{array}\right.
\eeq
whereas the components of $\delta(\beta,C)$ are given by
\beq
\label{eq:222t}
\delta_v(\beta,C) = \frac{\|\beta_{|V_\ell}\|_2}{\sqrt{|V_\ell|}},~v \in V_\ell,~\ell \in \N_k.
\eeq
\end{definition}
Note that the notation we adopt in this definition differs from that used in the case of line graph, given in Definition
\ref{def:3}. However, Definition \ref{def:3t} leads to a more appropriate presentation of our results for a tree.

\begin{proposition}
Let $G=(V,E)$ be a tree and $A$ the associated incidence matrix.
For every $\beta \in (\R\backslash\{0\})^n$ and every cut $C \in {\cal C}(G)$ we have that
\begin{enumerate}
\item[(a)]
$\beta \in P_C$ {\em if and only if} $\zeta(\beta,C) \geq 0$, $A \delta(\beta,C) \geq 0$ and
$\delta_v(\beta,C) > \delta_w(\beta,C)$, for all $v \in V_1, w \in V_2$, $(v,w) \in E$, $V_1,V_2 \in {\cal V}(C)$;
\item[(b)] If $\delta(\beta,C_1) = \delta(\beta,C_2)$ and $\beta \in O_{C_1} \cap O_{C_2}$ then $C_1 = C_2$.
\end{enumerate}
\label{prop:tree}
\end{proposition}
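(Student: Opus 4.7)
The plan for part (a) is to translate the conditions defining $O_C$ and $I_C$ one-to-one into statements about the auxiliary vectors $\zeta(\beta,C)$ and $\delta(\beta,C)$. The key starting observation is that, by definition, $\delta(\beta,C)$ is constant on each component $V_\ell$ of $\mathcal{V}(C)$. Consequently, for every non-cut edge $e=(v,w)$ the endpoints lie in the same $V_\ell$, so $(A\delta)_e=\delta_v-\delta_w=0$, and the requirement $A\delta\geq 0$ reduces to requiring $\delta_v\geq\delta_w$ on cut edges alone. On a cut edge $e=(v,w)$ with $v\in V_1,\ w\in V_2$ we have $V_2\Downarrow V_1$ by definition of $C$, and the defining inequality of $Q_{V_1,V_2}$ reads $\|\beta_{|V_1}\|_2^2/|V_1|>\|\beta_{|V_2}\|_2^2/|V_2|$, i.e.\ $\delta_v^{\,2}>\delta_w^{\,2}$, which since $\delta>0$ is equivalent to the strict inequality stated in (a). So $\beta\in O_C$ is exactly the combination of $A\delta\geq 0$ with the strict inequality on cut edges.

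Next I would handle $I_C$. For $\ell\in\N_k$ and $v\in V_\ell$, membership of $\beta$ in $\bar Q_{D_\ell(v),V_\ell}$ is $\|\beta_{|D_\ell(v)}\|_2^2/|D_\ell(v)|\geq\|\beta_{|V_\ell}\|_2^2/|V_\ell|$, and clearing denominators gives $|V_\ell|\|\beta_{|D_\ell(v)}\|_2^2/\|\beta_{|V_\ell}\|_2^2-|D_\ell(v)|\geq 0$, which is precisely $\zeta_e(\beta,C)\geq 0$ for the unique non-cut edge $e$ associated with the subtree rooted at $v$ inside $V_\ell$. On cut edges $\zeta_e=0$ by definition, hence $\zeta(\beta,C)\geq 0$ is equivalent to $\beta\in I_C$. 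Intersecting the two equivalences yields (a); all implications run in both directions, so no separate converse is needed.

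For part (b) I would argue by contradiction. Write $\delta:=\delta(\beta,C_1)=\delta(\beta,C_2)$ and suppose $C_1\neq C_2$, so that there is an edge $e=(v,w)\in C_1\triangle C_2$; without loss of generality $e\in C_1\setminus C_2$. Because $e\in C_1$ the endpoints lie in distinct components $V_a,V_b\in\mathcal{V}(C_1)$ with $V_b\Downarrow V_a$, so by the $O_{C_1}$-part of the hypothesis $\delta_v>\delta_w$. Because $e\notin C_2$ the endpoints lie in the same component of $\mathcal{V}(C_2)$, on which $\delta$ is constant, so $\delta_v=\delta_w$, contradicting the strict inequality. Hence $C_1=C_2$.

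The only genuine obstacle I anticipate is bookkeeping: one must fix a consistent correspondence between a non-cut edge $e$ and the subtree whose squared-norm ratio appears in $\zeta_e$, and one must be careful that $\bar Q_{J,K}$ puts the ``larger average'' on the $J$-side so that the sign of $\zeta_e$ matches the direction of the $I_C$ inequality (this sign convention is opposite to that of the line graph in Definition~\ref{def:3}, as noted after Definition~\ref{def:3t}). Once this dictionary is set up, the argument is an unpacking of definitions and does not require the cone characterization of Lemma~\ref{lem:jean}; that lemma will instead be used downstream when analyzing the KKT system for the tree penalty.
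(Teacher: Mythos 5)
Your proof is correct and follows essentially the same route as the paper: the $O_C$ part and part (b) are handled identically, and your direct unpacking of $\zeta(\beta,C)\geq 0 \Leftrightarrow \beta\in I_C$ is the very computation the paper performs by invoking Lemma \ref{lem:jean} with $z=\bigl(|V_\ell|\beta_v^2/\|\beta_{|V_\ell}\|_2^2-1 : v\in V_\ell\bigr)$, since $\zeta$ is exactly the explicit nonnegative solution $\alpha_{(p(v),v)}=\sum_{u\in D(v)}z_u$ constructed in that lemma's proof. Bypassing the lemma and working with the closed form directly is legitimate here (Definition \ref{def:3t} already defines $\zeta$ by that formula), and your observation that the single connecting edge between two adjacent components of a tree forces $V_b\Downarrow V_a$ is the small point that makes part (b) go through.
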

\begin{proof}
We immediately see that $\beta \in O_C$ if and only if $A
\delta(\beta,C) \geq 0$ and $\delta_v(\beta,C) > \delta_w(\beta,C)$
for all $v \in V_1, w \in V_2$, $(v,w) \in E$, $V_1,V_2 \in {\cal
V}(C)$. Moreover, by applying Lemma \ref{lem:jean} on each element
$V_\ell$ of the partition induced by $C$ and choosing $z =
(|V_\ell|\frac{\beta^2_v}{\|\beta_{|V_\ell}\|_2^2} -1 : v \in
V_\ell)$, we conclude that $\zeta(\beta,C) \geq 0$ if and only if
$\beta \in I_C$. This proves the first assertion.

The proof of the second assertion is a direct consequence of the fact that the vector $\delta(\beta,C)$ is a constant
on any element of the partition ${\cal V}(C)$ and strictly decreasing from
one element to the next in that partition.
\end{proof}

\begin{theorem} \label{thm:tree} Let $G=(V,E)$ be a tree. The collection of sets ${\cal U} := \{U_C: C \in {\cal C}(G)\}$ form a partition of $(\R \backslash \{0\})^n$.
Moreover, for every $\beta \in (\R \backslash \{0\})^n$ there is a
unique $C \in {\cal C}(G)$ such that \beq \|\beta\|_{\La_G} =
\sum_{V_\ell \in \cal{V}(C)} \sqrt{|V_\ell|} \|\beta_{|V_\ell}\|_2
\eeq
and the vector $\lam(\beta)=(\lam_v(\beta): v \in V)$ has components
given by $\lam_v(\beta) = \mu_\ell,~v \in V_\ell$, $\ell \in \N_k$,
where \beq
\mu_\ell = \sqrt{\frac{1}{n_\ell}\sum\limits_{w \in V_\ell}
\beta_w^2}. \label{eq:minmu-tree} \eeq \end{theorem}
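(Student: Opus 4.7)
The plan is to parallel the KKT-based argument used for Theorem \ref{thm:sp}, replacing the line-graph combinatorics by the tree-level calculus supplied by Lemma \ref{lem:jean} and Proposition \ref{prop:tree}. I would first set up optimality conditions for the smooth, strictly convex problem $\min\{\Gamma(\beta,\lam):\lam\in\La_G\}$. Since the $m$ inequality constraints $A\lam\ge 0$ are affine, every feasible $\lam$ is regular in the sense of \cite{Bert}, so KKT multipliers $\alpha=(\alpha_e:e\in E)\in\R^m_+$ exist and satisfy the stationarity equation
\beq
-\frac{\beta_v^2}{\lam_v^2}+1 \;=\; (A\trans\alpha)_v,\qquad v\in V,
\label{eq:plan-stat}
\eeq
together with the complementary slackness conditions $\alpha_e(A\lam)_e=0$ for every $e\in E$.

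From the optimal pair $(\lam,\alpha)$ I would extract the candidate cut $\hat{C}=\{e\in E:(A\lam)_e>0\}$. Slackness forces $\alpha_e=0$ for $e\in\hat{C}$, while on each remaining edge $e=(v,w)$ one has $\lam_v=\lam_w$, so $\lam$ is constant on every connected piece $V_\ell$ of the induced partition ${\cal V}(\hat{C})$, say $\lam_v=\mu_\ell$. Summing \eqref{eq:plan-stat} over $v\in V_\ell$ makes the right-hand side vanish, because cut edges contribute zero through $\alpha_e=0$ and every internal edge contributes $\sum_{v\in V_\ell}A_{e,v}=1-1=0$ by the definition of the incidence matrix; this recovers formula \eqref{eq:minmu-tree}. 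The strict positivity of $(A\lam)_e$ on $\hat{C}$, read as $\mu_\ell>\mu_{\ell'}$ whenever $V_{\ell'}\Downarrow V_\ell$, then shows $\beta\in O_{\hat{C}}$. To obtain $\beta\in I_{\hat{C}}$, I would restrict \eqref{eq:plan-stat} to a sub-tree $V_\ell$, yielding $A_\ell\trans\alpha_\ell=-z$ with $z_v=\beta_v^2/\mu_\ell^2-1$, and invoke Lemma \ref{lem:jean}: nonnegative $\alpha_\ell$ exist precisely when $\sum_{u\in D_\ell(v)}z_u\ge 0$ for every $v\in V_\ell$, and after substituting \eqref{eq:minmu-tree} these are the inequalities $\beta\in\bar Q_{D_\ell(v),V_\ell}$ defining $I_{\hat{C}}$. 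Altogether $\beta\in P_{\hat{C}}\cap(\R\backslash\{0\})^n=U_{\hat{C}}$.

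The construction is reversible: for any $C\in{\cal C}(G)$ and any $\beta\in U_C$, the formulas \eqref{eq:222t} and \eqref{eq:111t} for $\delta(\beta,C)$ and $\zeta(\beta,C)$ (the latter supplying, via Lemma \ref{lem:jean}, nonnegative multipliers on the internal edges and zero on $C$) verify the KKT system, and strict convexity of $\Gamma(\beta,\cdot)$ makes its solution unique. Hence $\beta\in U_{C_1}\cap U_{C_2}$ forces $\delta(\beta,C_1)=\delta(\beta,C_2)$ with $\beta\in O_{C_1}\cap O_{C_2}$, and Proposition \ref{prop:tree}(b) gives $C_1=C_2$; combined with the existence half established in the preceding paragraph, $\cal U$ partitions $(\R\backslash\{0\})^n$. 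Plugging $\lam_v=\mu_\ell$ into $\Gamma(\beta,\lam)$ and collecting terms on each $V_\ell$ produces $\sqrt{|V_\ell|}\|\beta_{|V_\ell}\|_2$, which yields the displayed formula for $\|\beta\|_{\La_G}$.

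The main obstacle, relative to the wedge case of Theorem \ref{thm:sp}, is verifying the nonnegativity of the multipliers inside each sub-tree: this is no longer the telescoping one-dimensional fact that worked for the line graph and must be purchased via the tree-specific identity \eqref{eq:h1h} underlying Lemma \ref{lem:jean}. The breakdown of that identity for an arbitrary DAG, where a vertex may have several parents, is exactly what keeps the theorem confined to trees.
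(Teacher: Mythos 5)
Your proposal is correct and follows essentially the same route as the paper's own proof: KKT stationarity and complementary slackness for the polyhedral cone $\La_G$, reading off the cut from the slack constraints, summing the stationarity equations over each $V_\ell$ to get $\mu_\ell$ and over descendant sets (equivalently, via Lemma \ref{lem:jean}) to get $\beta\in I_C$, and then reversing the computation with Proposition \ref{prop:tree} and strict convexity to obtain disjointness. The only cosmetic difference is that the paper derives the multiplier formula $\alpha_{(p(u),u)}=\|\beta_{|D_\ell(u)}\|_2^2/\mu_\ell^2-|D_\ell(u)|$ by direct summation rather than citing Lemma \ref{lem:jean} for the forward implication, which amounts to the same calculation.
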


\begin{proof}
The proof of this theorem proceeds in a fashion similar to that of Theorem \ref{thm:sp}.
In this regard, Lemma \ref{lem:jean} is crucial.
By KKT theory (see e.g. \cite[Theorems 3.3.4,3.3.7]{Bert}),
$\lambda$ is an optimal solution of the graph penalty if and only if
there exists $\alpha \geq 0$ such that, for every $v \in V$
$$
-\frac{\beta^2_v}{\lam_v^2} + 1 -
\sum_{e \in E} \alpha_e A_{ev} = 0
$$
and the following complementary conditions hold true
\beq
\alpha_{(v,w)} (\lam_w - \lam_v)
= 0,~v \in V, w \in C(v).
\label{eq:comp-tree}
\eeq
We rewrite the first equation as
\beq
\alpha_{(p(v),v)} -\sum_{w \in C(v)} \alpha_{(v,w)} = \frac{\beta^2_v}{\lam_v^2} -1.
\label{eq:1st-bis}
\eeq
Now, if $\lam \in \La_G$ solves equations \eqref{eq:comp-tree} and
\eqref{eq:1st-bis}, then it induces a cut $C \subset E$ and a
corresponding partition ${\cal V}(C) =\{V_\ell: \ell \in \N_k\}$ of
$V$ such that $\lam_v= \mu_\ell$ for every $v \in V_\ell$. That is,
$\lam_v = \lam_w$ for every $v,w \in V_\ell$, $\ell \in \N_k$, and
$\alpha_{e} = 0$ for every $e \in C$. Therefore, summing equations
\eqref{eq:1st-bis} for $v \in V_\ell$ we get that $$
\mu_\ell = \frac{\|\beta_{|V_\ell}\|_2}{\sqrt{|V_\ell|}}.
$$
Moreover, since $\mu_\ell > \mu_q$, if $V_q \Downarrow V_\ell$ we see that $\beta \in O_C$.
Next, for every $\ell \in \N_k$ and $u \in V_\ell$ we sum both sides of equation \eqref{eq:1st-bis} for $v \in D_\ell(u)$ to obtain that
\beq
\alpha_{(p(u),u)} = \frac{\|\beta_{|D_\ell(u)}\|_2^2}{\mu_\ell^2} - |D_\ell(u)|.
\label{eq:olo}
\eeq
We see that $\beta \in I_C$ and conclude that $\beta \in U_C$.

In summary we have shown that the collection of sets ${\cal U}$ cover
$(\R \backslash \{0\})^n$.  Next, we show that the elements of ${\cal
U}$ are disjoint. To this end, we observe that, the computation
described above can be {\em reversed}. That is to say, conversely for
{\em any} partition $C=\{V_i : i \in \N_k\}$ of $V$ and $\beta \in U_C$
we conclude by Proposition \ref{prop:tree} that the vectors
$\delta(\beta,C)$ and $\zeta(\beta,C)$ solves the KKT optimality
conditions. Since this solution is unique if $\beta \in U_{C_1} \cap
U_{C_2}$ then it must follow that $\delta(\beta,C_1) =
\delta(\beta,C_2)$, which implies that $C_1=C_2$.
\end{proof}

Theorems \ref{thm:sp} and \ref{thm:tree} fall into the category of a
set $\La \subseteq \R^n$ chosen in the form of a polyhedral cone, that
is $$\La = \{\lam: \lam \in \R^n, A \lam \geq 0\} $$ where $A$ is an
$m \times n $ matrix. Furthermore, in the line graph of Theorem
\ref{thm:sp} and also the extension in Theorem \ref{thm:tree} the
matrix $A$ only has elements which are $-1,1$ or $0$. These two
examples that we considered led to explicit description of the norm
$\|\cdot\|_\La$. However, there are seemingly simple cases of a matrix
$A$ of this type where the explicit computation of the norm $\|\cdot\|_\La$
seem formidable, if not impossible. For example, if $m=2$, $n = 4$ and
$$ A = \qquad
\begin{bmatrix}
-1 & -1 & 1 & 0 \\
0 & - 1& -1 & 1
\end{bmatrix}
$$
we are led by KKT to a system of equations that, in the case of two active constraints, that is, $A\lam = 0$, are the common zeros of two {\em fourth order} polynomials in the vector $\lam \in \R^2$.

\section{Duality}
\label{sec:comp}
In this section, we comment on the utility of the class of penalty
functions considered in this paper, which is fundamentally based on
their construction as constrained infimum of quadratic functions. To
emphasize this point both theoretically and computationally, we
discuss the conversion of the regularization variational problem
over $\beta \in \R^n$, namely
\beq {\cal E}(\La) =
\inf\left\{E(\beta,\lam): \beta \in
\R^n, \lam \in \La \right\} \label{eq:ddd} \eeq
where
$$
E(\beta,\lam) := \|y-X\beta\|^2_2 + 2 \rho \Gamma(\beta,\lam),
$$
into a variational problem over $\lam \in \La$.

To explain what we have in mind, we introduce the following definition.
\begin{definition}
For every $\lam \in \R_+^n$, we define the vector $\beta(\lam) \in \R^n$ as
$$
\beta(\lam) = {\rm diag}(\lam) M(\lam)
X\trans y
$$
where $M(\lam):=({\rm diag}(\lam) X\trans X+ \rho I)^{-1}$.
\end{definition}
Note that $\beta(\lam) = {\rm argmin} \{ E(\beta,\lam): \beta \in \R^n\}$.

\begin{theorem} For $\rho >
0$, $y \in \R^m$, any $m \times n$ matrix $X$ and any nonempty convex set $\La$ we have that
\beq
{\cal E}(\La) = \min \left\{ \rho y\trans\left(X {\rm
 diag}(\lam) X\trans + \rho I\right)^{-1}y+ \rho {\rm tr} ({\rm diag}(\lam)) : \lam \in {\bar \La} \cap \R_{+}^n\right\}
\label{eq:al-reg}
\eeq
Moreover, if $\hlam$ is a solution to this problem, then $\beta(\hlam)$ is a solution to problem \eqref{eq:ddd}.
\label{thm:interchange}\end{theorem}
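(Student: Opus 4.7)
The plan is to interchange the order of minimization: first solve the inner $\beta$-problem for fixed $\lam \in \R_{++}^n$, then minimize the resulting function of $\lam$. For fixed $\lam \in \R_{++}^n$, the map $\beta \mapsto E(\beta,\lam)$ is a strictly convex coercive quadratic (a Tikhonov-type problem), so its unique minimizer satisfies the normal equation $(X\trans X + \rho\,\mathrm{diag}(\lam)^{-1})\beta = X\trans y$. Left-multiplying by $\mathrm{diag}(\lam)$ gives $\beta(\lam) = M(\lam)\,\mathrm{diag}(\lam)\,X\trans y$, which coincides with the form $\mathrm{diag}(\lam)\,M(\lam)\,X\trans y$ in the definition by the algebraic identity $(\mathrm{diag}(\lam) X\trans X + \rho I)\,\mathrm{diag}(\lam) = \mathrm{diag}(\lam)\,(X\trans X\,\mathrm{diag}(\lam) + \rho I)$.

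Next I would compute the minimum value. After the substitution $\gamma = \mathrm{diag}(\lam)^{-1/2}\beta$ and setting $Z = X\,\mathrm{diag}(\lam)^{1/2}$, the residual part of the objective becomes the classical ridge problem $\min_\gamma \|y - Z\gamma\|_2^2 + \rho\|\gamma\|_2^2$ plus the constant $\rho\,\mathrm{tr}(\mathrm{diag}(\lam))$. The push-through identity $I - Z(Z\trans Z + \rho I)^{-1} Z\trans = \rho(ZZ\trans + \rho I)^{-1}$, combined with $ZZ\trans = X\,\mathrm{diag}(\lam)\,X\trans$, immediately yields
$$
\min_\beta E(\beta,\lam) \;=\; \rho\, y\trans\bigl(X\,\mathrm{diag}(\lam)\,X\trans + \rho I\bigr)^{-1} y \;+\; \rho\,\mathrm{tr}(\mathrm{diag}(\lam)) \;=:\; f(\lam)
$$
for every $\lam \in \R_{++}^n$.

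It then remains to minimize $f$ over $\La$. The function $f$ extends continuously to all of $\R_+^n$ since $X\,\mathrm{diag}(\lam)\,X\trans + \rho I$ is uniformly positive definite for $\lam \geq 0$ and $\rho > 0$, and the obvious lower bound $f(\lam) \geq \rho\,\|\lam\|_1$ makes $f$ coercive. Because $\La \subseteq \R_{++}^n$, we have $\bar{\La} \subseteq \R_+^n$; by density and continuity, $\inf_{\La} f = \inf_{\bar{\La}} f$, and coercivity combined with the closedness of $\bar{\La}$ guarantees that this infimum is attained at some $\hlam \in \bar{\La} \cap \R_+^n$. This establishes the displayed formula for ${\cal E}(\La)$.

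Finally I would identify $\beta(\hlam)$ as a primal minimizer. When $\hlam \in \La$, this is immediate since $E(\beta(\hlam),\hlam) = f(\hlam) = {\cal E}(\La)$. When $\hlam$ lies on the boundary with some zero coordinates, I would pick a sequence $\{\lam^k\} \subseteq \La$ with $\lam^k \to \hlam$; the closed-form map $\lam \mapsto \beta(\lam)$ is continuous on $\R_+^n$, so $\beta(\lam^k) \to \beta(\hlam)$, with $\beta(\hlam)_i = 0$ wherever $\hlam_i = 0$. Passing to the limit in the chain ${\cal E}(\La) \leq \|y - X\beta(\lam^k)\|_2^2 + 2\rho\,\Omega(\beta(\lam^k)|\La) \leq f(\lam^k)$ and using continuity of the convex function $\Omega(\cdot|\La)$ collapses the inequalities to equalities, so $\beta(\hlam)$ attains ${\cal E}(\La)$. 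I expect this boundary argument to be the main subtlety, since $E(\beta,\hlam)$ is not literally defined when $\hlam$ has a zero coordinate and one must work with the continuously extended objective $f$ throughout.
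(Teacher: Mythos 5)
Your proposal is correct and follows essentially the same route as the paper: exchange the order of minimization, evaluate the inner minimum over $\beta$ in closed form (the paper calls this ``a straightforward computation''), use the bound $H(\lam)\geq \rho\,\mathrm{tr}(\mathrm{diag}(\lam))$ for coercivity and attainment on $\bar\La$, and handle a boundary minimizer $\hlam$ by a continuity/limiting argument exactly as the paper does with its continuous extension of $\Gamma(\beta(\lam),\lam)$. One small quibble: the identity $(\mathrm{diag}(\lam)X\trans X+\rho I)\,\mathrm{diag}(\lam)=\mathrm{diag}(\lam)(X\trans X\,\mathrm{diag}(\lam)+\rho I)$ gives $M(\lam)\,\mathrm{diag}(\lam)=\mathrm{diag}(\lam)(X\trans X\,\mathrm{diag}(\lam)+\rho I)^{-1}$, not $M(\lam)\,\mathrm{diag}(\lam)=\mathrm{diag}(\lam)M(\lam)$, so your argmin does not literally match the displayed definition of $\beta(\lam)$ (which appears to contain a typo); this is harmless since every step of your argument uses only that $\beta(\lam)$ is the true minimizer of $E(\cdot,\lam)$.
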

\begin{proof}
We substitute the formula for $\Omega(\beta|\La)$ into the right hand side of equation
\eqref{eq:ddd} to obtain that
\beq
\label{eq:pinf}
{\cal E}(\La) = \inf \left\{H(\lam): \lam \in \La
\right\}
\eeq
where we define
$$
H(\lam) = \min \left\{E(\beta,\lam)
: \beta
\in \R^n\right\}.
$$
A straightforward computation yields that
$$
H(\lam) = \rho y\trans\left(X {\rm
 diag}(\lam) X\trans + \rho I\right)^{-1}y+ \rho {\rm tr} ({\rm diag}(\lam)).
$$
Since $H(\lam)\geq \rho {\rm tr}({\rm diag}(\lam))$, we conclude that any minimizing sequence for the optimization
problem on the right hand side of equation \eqref{eq:pinf} must have a subsequence which converges. These remarks confirm
equation \eqref{eq:al-reg}.

We now prove the second claim. For $\lam \in \R_{++}^n$ a direct computation confirms that
$$
\Gamma(\beta(\lam),\lam) = \frac{1}{2} \left( y\trans X M(\lam) {\rm diag}(\lam) M(\lam) X\trans y + \trace(
{\rm diag}(\lam))  \right).
$$
Note that the right hand side of this equation provides a continuous extension of the left hand side to $\lam \in \R_+^n$.
For notational simplicity, we still use the left hand side to denote this {\em continuous extension}.

By a limiting argument, we conclude, for every $\lam \in {\bar \La}$, that
\beq
\label{eq:uy}\Omega(\beta(\lam)|\La) \leq \Gamma(\beta(\lam),\lam).
\eeq
We are now ready to complete the proof of the theorem. Let $\hlam$ be a solution for the optimization
problem \eqref{eq:al-reg}. By definition, it holds, for any $\beta \in \R^n$ and $\lam \in {\bar \La}$, that
$$
\|y-X\beta(\hlam)\|^2_2 + 2 \rho \Gamma(\beta(\hlam),\hlam) =H(\hlam) \leq H(\lam) \leq \|y-X\beta\|^2_2 + 2 \rho
\Gamma(\beta,\lam).
$$
Combining this inequality with inequality \eqref{eq:uy} evaluated at $\lam = \hlam$, we conclude that
$$\|y-X\beta(\hlam)\|^2_2 + 2 \rho \Omega(\beta(\hlam)|\La) \leq \|y-X\beta\|^2_2 + 2 \rho
\Gamma(\beta,\lam)
$$
from which the result follows.
\end{proof}
An important consequence of the above theorem is a method to find a solution $\hbeta$ to
the optimization problem \eqref{eq:ddd} from a solution to the optimization problem \eqref{eq:al-reg}.
We illustrate this idea in the case that $X = I$.
\begin{corollary}
It holds that
\beq
\min \left\{ \|\beta-y\|_2^2 + 2\rho \Omega(\beta|\La) : \beta \in \R^n\right\} = \rho \min\left\{ \sum_{i \in \N_n} \frac{y_i^2}{\lam_i + \rho} +\lam_i: \lam \in {\bar \La}\right\}.
\label{eq:corI}
\eeq
Moreover, if $\hlam$ is a solution of the right optimization problem
then the vector $\beta(\hlam) = (\beta_i(\hlam): i \in \N_n)$, whose
components are defined for $i \in \N_n$ as
\beq
{\beta}_i(\hlam) = \frac{\hlam_i y_i}{\hlam_i+ \rho}
\eeq
is a solution of the left optimization problem problem.
\label{cor:I}
\end{corollary}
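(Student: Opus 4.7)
The plan is to apply Theorem \ref{thm:interchange} to the special case $X=I$ and simplify each ingredient explicitly. Under this choice, both $X\trans y = y$ and $X\,\mathrm{diag}(\lam)\,X\trans + \rho I = \mathrm{diag}(\lam) + \rho I$ become diagonal, so their inverses and the quadratic form in $y$ can be written componentwise.

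First I would compute the objective in \eqref{eq:al-reg}. Since $\mathrm{diag}(\lam) + \rho I$ is the diagonal matrix with entries $\lam_i + \rho$, its inverse is diagonal with entries $1/(\lam_i + \rho)$, giving
$$
y\trans\left(X\,\mathrm{diag}(\lam)\,X\trans + \rho I\right)^{-1}y = \sum_{i \in \N_n} \frac{y_i^2}{\lam_i + \rho},
$$
while $\mathrm{tr}(\mathrm{diag}(\lam)) = \sum_{i \in \N_n} \lam_i$. Substituting these into \eqref{eq:al-reg} and factoring out $\rho$ yields exactly the right hand side of \eqref{eq:corI}, establishing equality of the two optimal values.

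Next, for the minimizer assertion, I would unwind the definition of $\beta(\lam)$ with $X=I$. The matrix $M(\hlam) = (\mathrm{diag}(\hlam) + \rho I)^{-1}$ is diagonal with entries $1/(\hlam_i + \rho)$, so
$$
\beta(\hlam) = \mathrm{diag}(\hlam)\,M(\hlam)\,y
$$
has components $\hlam_i y_i/(\hlam_i + \rho)$, matching the stated formula. By the second claim of Theorem \ref{thm:interchange}, this vector is a solution of the left-hand optimization problem in \eqref{eq:corI}.

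There is no genuine obstacle here: the corollary is a direct specialization of Theorem \ref{thm:interchange}, and the only task is the routine diagonalization afforded by $X = I$. The mildest care required is to note that the minimum on the right of \eqref{eq:corI} is taken over $\bar\La$ (rather than $\La$), which is already built into the statement of Theorem \ref{thm:interchange} and guarantees attainment.
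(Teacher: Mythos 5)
Your proof is correct and is exactly the route the paper intends: the corollary is stated there as an immediate specialization of Theorem \ref{thm:interchange} to $X=I$, and your componentwise diagonalization of the objective in \eqref{eq:al-reg} and of $\beta(\hlam)$ supplies precisely the omitted routine computation. Nothing is missing.
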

We further discuss two choices of the set $\La$ in which we are able to solve problem \eqref{eq:corI} analytically.
The first case we consider is $\La = \R_{++}^n$, which corresponds to the Lasso penalty.
It is an easy matter to see that $\hlam = (|y|-\rho)_+$ and the corresponding
regression vector is obtained by the well-known ``soft thresolding'' formula $\beta(\hlam)=(|y|-\rho)_+ {\rm sign}(y)$.
The second case is the Wedge penalty. We find that the solution of the optimization problem in the right
hand side of equation \eqref{eq:corI} is $\hlam = (\lam(y)-\rho)_+$, where $\lam(y)$ is given
in Theorem \ref{thm:sp}.
Finally, we note that Corollary \ref{cor:I} and the example following it extend to
the case that $X\trans X= I$ by replacing throughout the vector $y$ by the vector $X\trans y$.
In the statistical literature this setting is referred to as orthogonal design.

\section{Optimization method}
\label{sec:algo}
In this section, we address the issue of implementing the learning
method \eqref{eq:method} numerically.

Since the penalty function
$\Omega(\cdot|\La)$ is constructed as the infimum of a family of
quadratic regularizers, the optimization problem \eqref{eq:method}
reduces to a simultaneous minimization over the vectors $\beta$ and
$\lam$. For a fixed $\lam \in \La$, the minimum over $\beta \in \R^n$
is a standard Tikhonov regularization and can be solved directly in
terms of a matrix inversion. For a fixed $\beta$, the minimization over
$\lam \in \La$ requires computing the penalty function
\eqref{eq:gen-omega}. These observations naturally suggests an alternating minimization
algorithm, which has already been considered in special cases in
\cite{AEP}. To describe our algorithm we choose $\epsilon > 0$
and introduce the mapping $\phi^{\epsilon}:\R^n \rightarrow
\R_{++}^n$, whose $i$-th coordinate at $\beta \in \R^n$ is given by $$
\phi_i^{\epsilon}(\beta) = \sqrt{\beta_i^2 + \epsilon}.
$$
For $\beta \in (\R\backslash\{0\})^n$, we also let $\lam(\beta)=
{\rm argmin} \{ \Gamma(\beta,\lam): \lam \in \La\}$.

The alternating minimization algorithm is defined as follows: choose
$\lam_0 \in \La$ and, for $k \in \N$, define the iterates
\begin{eqnarray}
\beta^{k} &=& \beta(\lam^{k-1})
\label{ooo} \\
\lam^{k}& =& \lam(\phi^{\epsilon}(\beta^{k})).
\label{ooo1}
\end{eqnarray}
\label{algo:aa}
The following theorem establishes convergence of this algorithm.
\begin{theorem}
If the set $\La$ is admissible in the sense of Definition \ref{def:1},
then the iterations (\ref{ooo})--(\ref{ooo1})
converges to a vector $\gamma(\epsilon)$ such that $$
\gamma({\epsilon}) = {\rm argmin} \left\{
\|y-X\beta\|^2_2 + 2 \rho \Omega(\phi^{\epsilon}(\beta)|\La)
: \beta \in \R^n\right\}.  $$ Moreover,
any convergent subsequence of
the sequence $\{\gamma\left(\frac{1}{\ell}\right): \ell \in \N\}$
converges to a solution of the optimization problem \eqref{eq:method}.
\label{thm:aa}
\end{theorem}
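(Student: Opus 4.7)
The plan is to view the iteration \eqref{ooo}--\eqref{ooo1} as two-block coordinate descent on the joint function
\[ E^{\epsilon}(\beta,\lam) := \|y-X\beta\|_2^2 + 2\rho\,\Gamma\bigl(\phi^{\epsilon}(\beta),\lam\bigr), \quad (\beta,\lam)\in\R^n\times\La, \]
whose infimum in $\lam$ equals the smoothed objective $\|y-X\beta\|_2^2+2\rho\,\Omega(\phi^{\epsilon}(\beta)|\La)$. A direct Hessian computation shows that each summand $(\beta_i^2+\epsilon)/\lam_i+\lam_i$ has Hessian with positive determinant $\epsilon/\lam_i^4$ whenever $\epsilon>0$, so $E^{\epsilon}$ is strictly jointly convex on $\R^n\times\mathrm{int}(\La)$ and admits at most one minimizer.

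The first step is monotone decrease and compactness. By construction $E^{\epsilon}(\beta^{k+1},\lam^{k+1})\leq E^{\epsilon}(\beta^{k+1},\lam^{k})\leq E^{\epsilon}(\beta^{k},\lam^{k})$, so $\{E^{\epsilon}(\beta^k,\lam^k)\}$ decreases to a limit. Bounding $E^{\epsilon}$ below by $\rho\sum_{i\in\N_n}(\lam_i+\epsilon/\lam_i)$ yields, for every $k$ and $i$, a uniform upper bound on $\lam_i^k$ and a positive lower bound. Proposition~\ref{prop:ss} applied at $\phi^{\epsilon}(\beta^k)$ gives $\|\beta^k\|_1\leq\|\phi^{\epsilon}(\beta^k)\|_1\leq\Omega(\phi^{\epsilon}(\beta^k)|\La)\leq E^{\epsilon}(\beta^0,\lam^0)/(2\rho)$, so $\{\beta^k\}$ is bounded as well. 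Admissibility of $\La$ then forces $\{\lam^k\}$ into a compact subset $\La_{a,b}\subset\mathrm{int}(\La)$ for some $0<a\leq b$.

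The second step is limit-point analysis. Proposition~\ref{prop:0} makes $\beta\mapsto\lam(\phi^{\epsilon}(\beta))$ continuous on $\R^n$ (because $\phi^{\epsilon}(\beta)\in(\R\setminus\{0\})^n$ for every $\beta$), and $\lam\mapsto\beta(\lam)$ is continuous on $\La_{a,b}$ by its closed-form expression. Extracting any convergent subsequence $(\beta^{k_j},\lam^{k_j})\to(\beta^*,\lam^*)$, the monotone convergence of the objective forces the per-step decrements to vanish, so passing to the limit in the two update rules yields $\beta^*=\beta(\lam^*)$ and $\lam^*=\lam(\phi^{\epsilon}(\beta^*))$. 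These are the first-order conditions for a joint minimum of the convex $E^{\epsilon}$, so $(\beta^*,\lam^*)$ is its unique minimizer. Since every convergent subsequence has this same limit and the full sequence is bounded, the full sequence converges, and $\beta^*=\gamma(\epsilon)$ minimizes $F^{\epsilon}(\beta):=\|y-X\beta\|_2^2+2\rho\,\Omega(\phi^{\epsilon}(\beta)|\La)$.

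For the final assertion, $\phi^{\epsilon}(\beta)_i\geq|\beta_i|$ together with the fact that $\Gamma(\cdot,\lam)$ depends on $\beta$ only through $\beta^2$ gives $F^{\epsilon}\geq F^0$ pointwise, while continuity of $\Omega(\cdot|\La)$ yields $F^{\epsilon}(\beta)\to F^0(\beta)$ for every fixed $\beta$. Proposition~\ref{prop:ss} (via $\Omega\geq\|\cdot\|_1$) makes $F^0$ coercive, hence $\{\gamma(1/\ell)\}$ is bounded. For any convergent subsequence $\gamma(1/\ell_j)\to\gamma^*$, lower semicontinuity of $F^0$ and minimality of $\gamma(1/\ell_j)$ give, for every $\beta\in\R^n$,
\[ F^0(\gamma^*)\leq\liminf_j F^0(\gamma(1/\ell_j))\leq\liminf_j F^{1/\ell_j}(\gamma(1/\ell_j))\leq\lim_j F^{1/\ell_j}(\beta)=F^0(\beta), \]
so $\gamma^*$ solves \eqref{eq:method}. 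The main obstacle is the limit-point step: it simultaneously requires the strict joint convexity of $E^{\epsilon}$ to single out the fixed point, the admissibility hypothesis to keep $\lam^k$ inside a compact subset of $\mathrm{int}(\La)$ so that Proposition~\ref{prop:0} supplies continuity of the $\lam$-update, and the descent property to promote subsequential convergence into a true fixed point of both maps.
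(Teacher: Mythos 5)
Your proposal is correct and follows essentially the same route as the paper: block coordinate descent on $E_\epsilon$, monotone decrease plus the objective-value bounds to confine $\lam^k$ to a compact subset $\La_{a,b}\subset{\rm int}(\La)$, continuity of the argmin maps via Proposition \ref{prop:0}, subsequential limits as stationary points, strict joint convexity to get a unique minimizer and hence convergence of the whole sequence, and a limiting argument in $\epsilon$. If anything, you supply two details the paper leaves implicit — the Hessian verification that $E_\epsilon$ is strictly jointly convex, and the sandwich argument $F^0\leq F^{1/\ell}$ with pointwise convergence for the $\epsilon\to 0$ claim — but these are elaborations, not a different method.
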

\begin{proof}
We divide the proof into several steps. To this end, we define
$$
E_\epsilon(\beta,\lam) := \|y-X\beta\|^2_2 + 2 \rho \Gamma(\phi^\epsilon(\beta),\lam)
$$
and note that $\beta(\lam)= {\rm argmin}\{E_\epsilon(\alpha,\lam): \alpha \in \R^n\}$.

{\em Step 1.} We define two sequences, $\theta_k =
E_\epsilon(\beta^{k},\lam^{k-1})$ and $\nu_k =
E_\epsilon(\beta^{k},\lam^{k})$ and observe, for any $k \geq 2$,
that \beq \label{sand} \nu_k \leq \theta_k \leq \nu_{k-1}. \eeq
These inequalities follow directly from the definition of the
alternating algorithm, see equations \eqref{ooo} and \eqref{ooo1}.

{\em Step 2.} We define the compact set $B =\{\beta: \beta \in \R^n,
\|\beta\|_1 \leq \theta_1\}$. From the first inequality in Proposition
\ref{prop:ss} and inequality
\eqref{sand} we conclude, for every $k \in \N$, that $\beta^{k} \in B$.

{\em Step 3.} We define the function $g: \R^n \rightarrow \R$ at $\beta
\in \R^n$ as $$ g(\beta) = \min\left\{E_\epsilon(\alpha,
\lam(\phi^\epsilon(\beta))): \alpha \in \R^n \right\}.  $$ We claim
that $g$ is continuous on $B$. In fact, there exists a constant
$\kappa > 0$ such that, for every $\gamma^1,\gamma^2 \in B$, it holds
that
\beq
\label{eq:pippo1}
|g(\gamma^1)- g(\gamma^2)| \leq \kappa \|\lam(\phi^\epsilon(\gamma^1))- \lam(\phi^\epsilon(\gamma^2))\|_{\infty}.
\eeq
The essential ingredient in the proof of this inequality is the fact that there exists constant ${\bar a}$ and ${\bar b}$
such that, for all $\beta \in B$, $\lam(\phi^\epsilon(\beta)) \in [{\bar a},{\bar b}]^n$. This follows from
the inequalities developed in the proof of Proposition \ref{prop:0}.

{\em Step 4.} By step 2, there exists a subsequence $\{\beta^{k_\ell}: \ell \in \N\}$ which converges to
$\tbeta \in B$ and, for all $\beta \in \R^n$ and $\lam \in \La$, it holds that
\beq
E_\epsilon(\tbeta,\lam(\phi^\epsilon(\tbeta))) \leq E_\epsilon(\beta,\lam(\phi^\epsilon(\tbeta))),~~~~E_\epsilon(\tbeta,\lam(\phi^\epsilon(\tbeta))) \leq E_\epsilon(\tbeta,\lam).
\label{eq:huh}
\eeq
Indeed, from step 1 we conclude that there exists $\psi \in \R_{++}$ such that
$$
\lim_{k \rightarrow \infty} \theta_k = \lim_{k \rightarrow \infty} \nu_k = \psi.
$$
Since, by Proposition \ref{prop:0} $\lam(\beta)$ is continuous for $\beta \in (\R\backslash\{0\})^n$, we obtain that
$$
\lim_{\ell \rightarrow \infty} \lam^{k_\ell} = \lam(\phi^\epsilon(\tbeta)).
$$
By the definition of the alternating algorithm, we have, for all
$\beta \in \R^n$ and $\lam \in \La$, that $$
\theta_{k+1} = E_\epsilon(\beta^{k+1},\lam^k) \leq E_\epsilon(\beta,\lam^k), ~~~~~
\nu_k = E_\epsilon(\beta^k,\lam^k) \leq E_\epsilon(\beta^k,\lam).
$$
From this inequality we obtain, passing to limit, inequalities \eqref{eq:huh}.

{\em Step 5.} The vector $(\tbeta,\lam(\phi^\epsilon(\tbeta)))$ is a stationary point. Indeed, since $\La$ is admissible,
by step 3, $\lam(\phi^\epsilon(\tbeta) \in {\rm int}(\La)$. Therefore, since $E_\epsilon$ is continuously differentiable this claim follows from step 4.

{\em Step 6.} The alternating algorithm converges. This claim follows from the fact that $E_\epsilon$ is strictly convex.
Hence, $E_\epsilon$ has a unique global minimum in $\R^n \times \La$, which in virtue of inequalities \eqref{eq:huh} is attained at
$(\tbeta,\lam(\phi^\epsilon(\tbeta)))$.

The last claim in the theorem follows from the fact that the set $\{\gamma(\epsilon): \epsilon > 0\}$ is bounded and the function $\lam(\beta)$ is continuous.
\end{proof}



The most challenging step in the alternating algorithm is the
computation of the vector $\lam^{k}$. Fortunately, if $\La$ is a
second order cone, problem \eqref{eq:gen-omega} defining the penalty
function $\Omega(\cdot|\La)$ may be reformulated as a second order
cone program (SOCP), see e.g. \cite{boyd}. To see this, we introduce
an additional variable $t
\in \R^n$ and note that
$$
\Omega(\beta|\La) = \min \left\{ \sum_{i \in \N_n} t_i + \lam_i: \|(2\beta_i,t_i -\lam_i)\|_2 \leq t_i + \lam_i, t_i \geq 0,~i \in \N_n,~\lam \in \La\right\}.
$$ In particular, the examples discussed in Sections \ref{sec:WP} and
\ref{sec:GP}, the set $\La$ is formed by linear constraints and, so,
problem \eqref{eq:gen-omega} is an SOCP. We may then use available
tool-boxes to compute the solution of this problem. However, in
special cases the computation of the penalty function may be
significantly facilitated by using available analytical formulas.
Here, for simplicity we describe how to do this in the case of the
wedge penalty. For this purpose we say that a vector $\beta \in
\R^n$ is admissible if, for every $k \in \N_{n}$, it holds that $\|\beta_{|\N_k}\|_2/\sqrt{k} \leq \|\beta\|_2/\sqrt{n}$.

The proof of the next lemma is straightforward and we do not
elaborate on the details.
\begin{lemma}
If $\beta \in \R^n$ and $\delta \in \R^p$ are admissible and
$\|\beta\|_2/\sqrt{n} \leq \|\delta\|_2/\sqrt{p}$ then $(\beta,\delta)$ is
admissible.
\label{lemma:admissible}
\end{lemma}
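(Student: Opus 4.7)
The plan is to verify the defining inequality of admissibility at every cutoff index $k\in\N_{n+p}$, splitting into the two natural cases $k\le n$ and $n<k\le n+p$. Throughout, set $\gamma = (\beta,\delta)\in\R^{n+p}$, so $\|\gamma\|_2^2 = \|\beta\|_2^2+\|\delta\|_2^2$, and introduce the abbreviations $a = \|\beta\|_2^2/n$ and $b = \|\delta\|_2^2/p$; the hypothesis reads $a\le b$.

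First I would handle $k\le n$. Here $\gamma_{|\N_k} = \beta_{|\N_k}$, so admissibility of $\beta$ gives $\|\gamma_{|\N_k}\|_2^2/k \le \|\beta\|_2^2/n = a$. The remaining task is to show $a \le \|\gamma\|_2^2/(n+p)$. Cross-multiplying, this is equivalent to $(n+p)\|\beta\|_2^2 \le n(\|\beta\|_2^2+\|\delta\|_2^2)$, i.e., $p\|\beta\|_2^2 \le n\|\delta\|_2^2$, which is exactly $a \le b$.

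The main work is in the case $k = n+j$ with $j\in\N_p$. Here $\gamma_{|\N_k} = (\beta,\delta_{|\N_j})$, so the target inequality becomes
\begin{equation}
\frac{\|\beta\|_2^2 + \|\delta_{|\N_j}\|_2^2}{n+j} \;\le\; \frac{\|\beta\|_2^2+\|\delta\|_2^2}{n+p}.
\nonumber
\end{equation}
Setting $c = \|\delta_{|\N_j}\|_2^2/j$, admissibility of $\delta$ gives $c \le b$, and the hypothesis still gives $a\le b$. Writing the inequality as $(na + jc)/(n+j)\le (na + pb)/(n+p)$ and clearing denominators, the claim reduces to $(p-j)na + (n+p)jc \le (n+j)pb$. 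Using $c\le b$ bounds the left side by $(p-j)na + (n+p)jb$, and a one-line rearrangement shows the required inequality is equivalent to $(p-j)(b-a)\ge 0$, which holds since $j\le p$ and $a\le b$.

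The only slightly delicate point is keeping the bookkeeping in Case~2 clean, since three weighted averages $a,b,c$ interact; the key observation is that both admissibility of $\delta$ (giving $c\le b$) and the cross-condition (giving $a\le b$) are each indispensable, and the proof pivots on the fact that inserting a larger average $b$ for the smaller average $c$ preserves the direction of the inequality. Beyond this routine algebra, no further ingredient is needed, and concatenating the two cases yields admissibility of $(\beta,\delta)$.
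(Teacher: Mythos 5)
Your proof is correct and follows essentially the same route the authors had in mind (the paper omits the proof as ``straightforward,'' but a draft version in the source splits into the same two cases $k\le n$ and $k=n+j$ and bounds the second case by exactly the same convex-combination argument, i.e.\ replacing the partial average $c$ by $b$ and using $a\le b$). Nothing is missing.
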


\begin{figure}[t]
  \begin{center}
   \begin{tabular}{l}
    \hline
    \textbf{Initialization:} $k \leftarrow 0$ \\
    \textbf{Input:} $\beta \in \mathbb{R}^n$;~~~ \textbf{Output:} $J_1, \ldots, J_k$ \\
    \textbf{for} $t =1$ \textbf{to} $n$ \textbf{do} \\
    \quad $J_{k+1} \leftarrow \lbrace t \rbrace$; \\ \quad $k \leftarrow k + 1$ \\
    \quad \textbf{while} $k > 1$ \textbf{and}
    $\frac{\|\beta_{|J_{k-1}}\|_2}{\sqrt{|J_{k-1}|}} \leq \frac{\|\beta_{|J_k}\|_2}{\sqrt{|J_k|}}$ \vspace{.08truecm}\\
    \quad \quad $J_{k-1} \leftarrow J_{k-1} \cup J_k$ \\
\quad \quad $k \leftarrow k-1$ \\
    \quad \textbf{end} \\
    \textbf{end} \\
    \hline
    \end{tabular}
  \end{center}
  \caption{Iterative algorithm to compute the wedge penalty}
  \label{fig:partAlg}
\end{figure}

The iterative algorithm presented in Figure \ref{fig:partAlg} can be
used to find the partition $\mathcal{J} = \lbrace J_\ell: \ell \in \N_k \rbrace$ and, so, the
vector $\lam(\beta)$ described in Theorem \ref{thm:sp}.
The algorithm processes the components of vector $\beta$ in a
sequential manner. Initially, the first component forms the only set
in the partition. After the generic iteration $t-1$, where the
partition is composed of $k$ sets, the index of the next components,
$t$, is put in a new set $J_{k+1}$. Two cases can occur: the means of
the squares of the sets are in strict descending order, or this order
is violated by the last set. The latter is the only case that requires
further action, so the algorithm merges the last two sets and repeats
until the sets in the partition are fully ordered. Note that, since the only operation performed by the algorithm is the merge of admissible sets, Lemma \ref{lemma:admissible}
ensures that after each step $t$ the current partition satisfies the ``stay within'' conditions
$\frac{\|\beta_{|{K}}\|_2}{\sqrt{k}} > \frac{\|\beta_{|{J_\ell \backslash K}}\|_2}{\sqrt{|J_\ell|-k}}$,
for every $\ell \in \N_k$ and every subset $K \subset J_\ell$ formed by the first $k < |J_\ell|$
elements of $J_\ell$. Moreover, the \emph{while} loop ensures that after each step the current partition satisfies,
for every $\ell \in \N_{k-1}$, the ``cross over'' conditions
$\|\beta_{|J_\ell}\|_2\sqrt{|J_\ell|} > \|\beta_{|J_{\ell+1}}\|_2\sqrt{|J_{\ell+1}|}$.
Thus, the output of the algorithm is the partition $\mathcal{J}$ defined in Theorem \ref{thm:sp}.
In the actual implementation of the algorithm, the means of squares of each set can
be saved. This allows us to compute the mean of squares of a merged
set as a weighted mean, which is a constant time operation. Since
there are $n-1$ consecutive terms in total, this is also the maximum
number of merges that the algorithm can perform. Each merge requires
exactly one additional test, so we can conclude that the running time
of the algorithm is linear.

\section{Numerical simulations}
\label{sec:exp}
In this section we present some numerical simulations with the
proposed method. For simplicity, we consider data generated
noiselessly from $y = X
\beta^*$, where $\beta^* \in \mathbb{R}^{100}$ is the true underlying
regression vector, and $X$ is an $m\times 100$ input matrix, $m$
being the sample size. The elements of $X$ are generated i.i.d.
from the standard normal distribution, and the columns of $X$
are then normalized such that their $\ell_2$ norm is $1$. Since we
consider the noiseless case, we solve the interpolation problem $\min
\{\Omega(\beta) : y = X\beta\}$, for different choices of the penalty
function $\Omega$. In practice,
\eqref{eq:method} is solved for a tiny value of the parameter, for example, $\rho=10^{-8}$, which
we found to be sufficient to ensure that the error term in \eqref{eq:method} is negligible at the minimum. All
experiments were repeated $50$ times, generating each time a new
matrix $X$. In the figures we report the average of the model error of
the vector $\hat{\beta}$ learned by each method, as a function of the sample size $m$.
The former is defined as $\text{ME}(\hat{\beta}) =
\mathbb{E}[\|\hat{\beta} -
\beta^\ast\|^2_2] / \mathbb{E}[\|\beta^\ast\|^2_2]$. In the following, we discuss a series of experiments, corresponding
to different choices for the model vector $\beta^*$ and its sparsity
pattern. In all experiments, we solved the optimization problem
\eqref{eq:method} with the algorithm presented in Section
\ref{sec:algo}.  Whenever possible we solved step \eqref{ooo1} using
analytical formulas and resorted to the solver CVX ({\em
http://cvxr.com/cvx/}) in the other cases.  For example, in the case
of the wedge penalty, we found that the computational time of the
algorithm in Figure \ref{fig:partAlg} is $495,603,665,869,1175$ faster
than that of the solver CVX for $n=100,500,1000,2500,5000$,
respectively. Our implementation ran on a 16GM memory dual core Intel
machine. The MATLAB code is available at
http://www.cs.ucl.ac.uk/staff/M.Pontil/software.html.

\begin{figure}[t]
\begin{center}
  \begin{tabular}{cccccc}
    \includegraphics[width=0.38\textwidth]{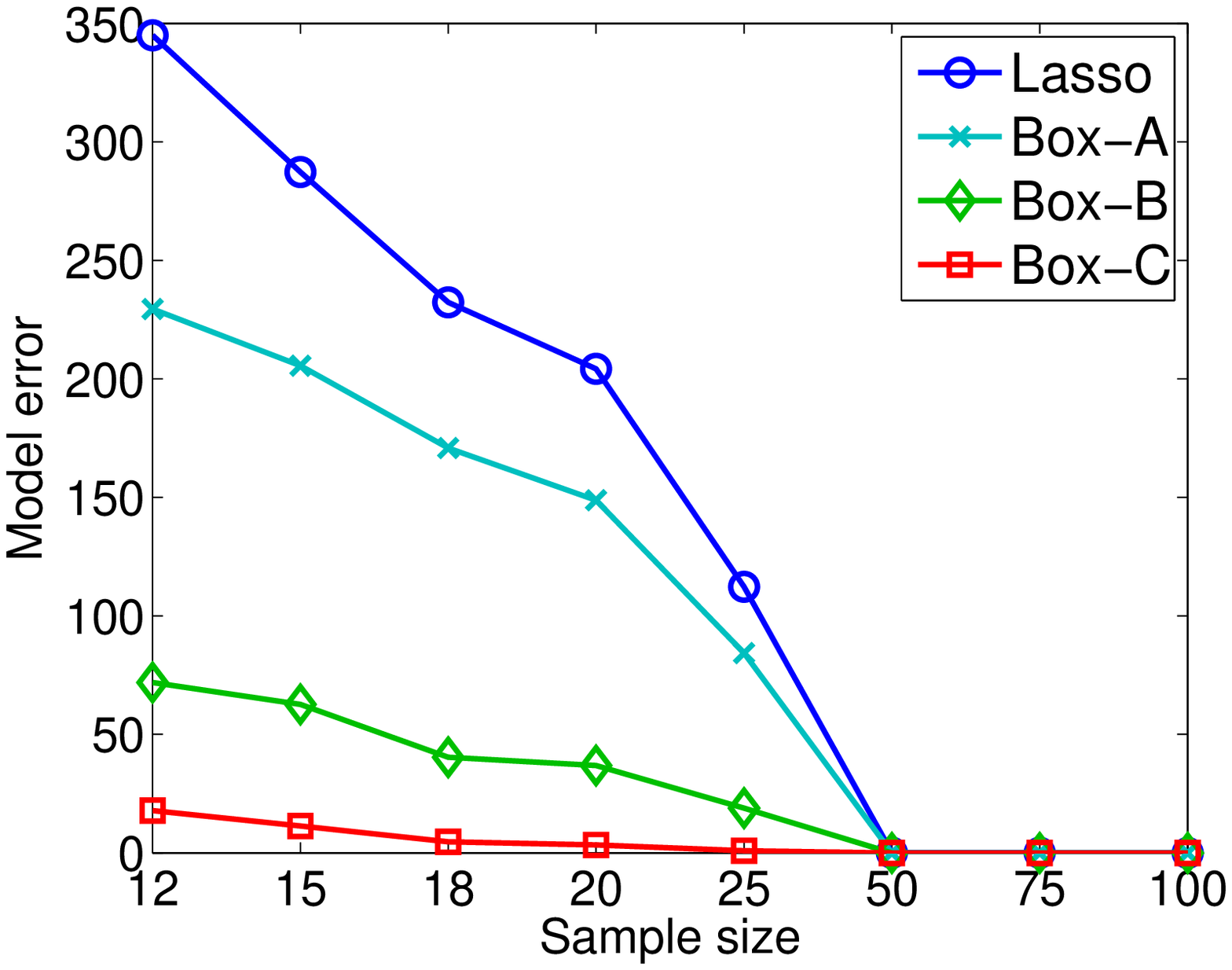} &
    \includegraphics[width=0.38\textwidth]{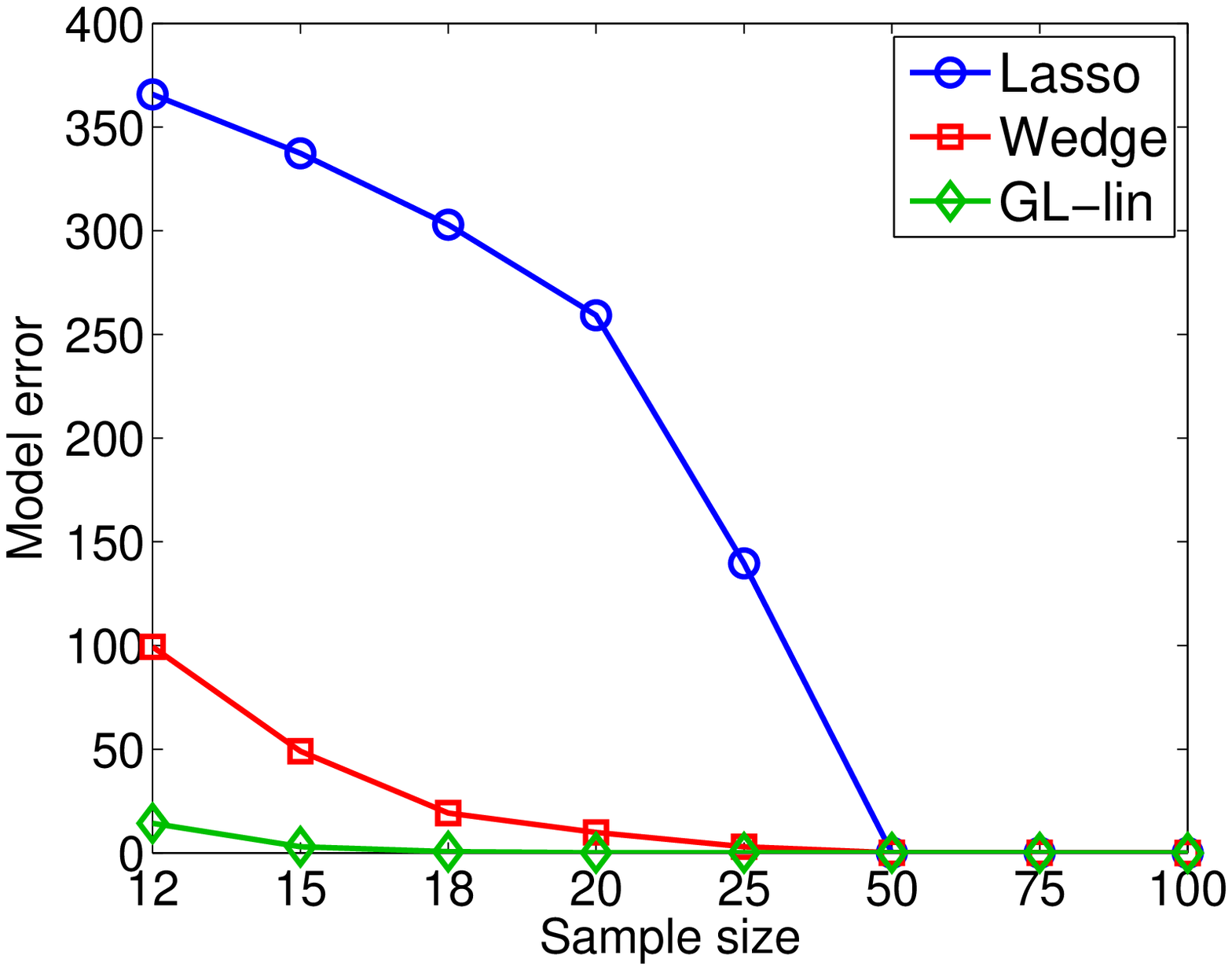} \\
    (a) & (b) \\
     \includegraphics[width=0.38\textwidth]{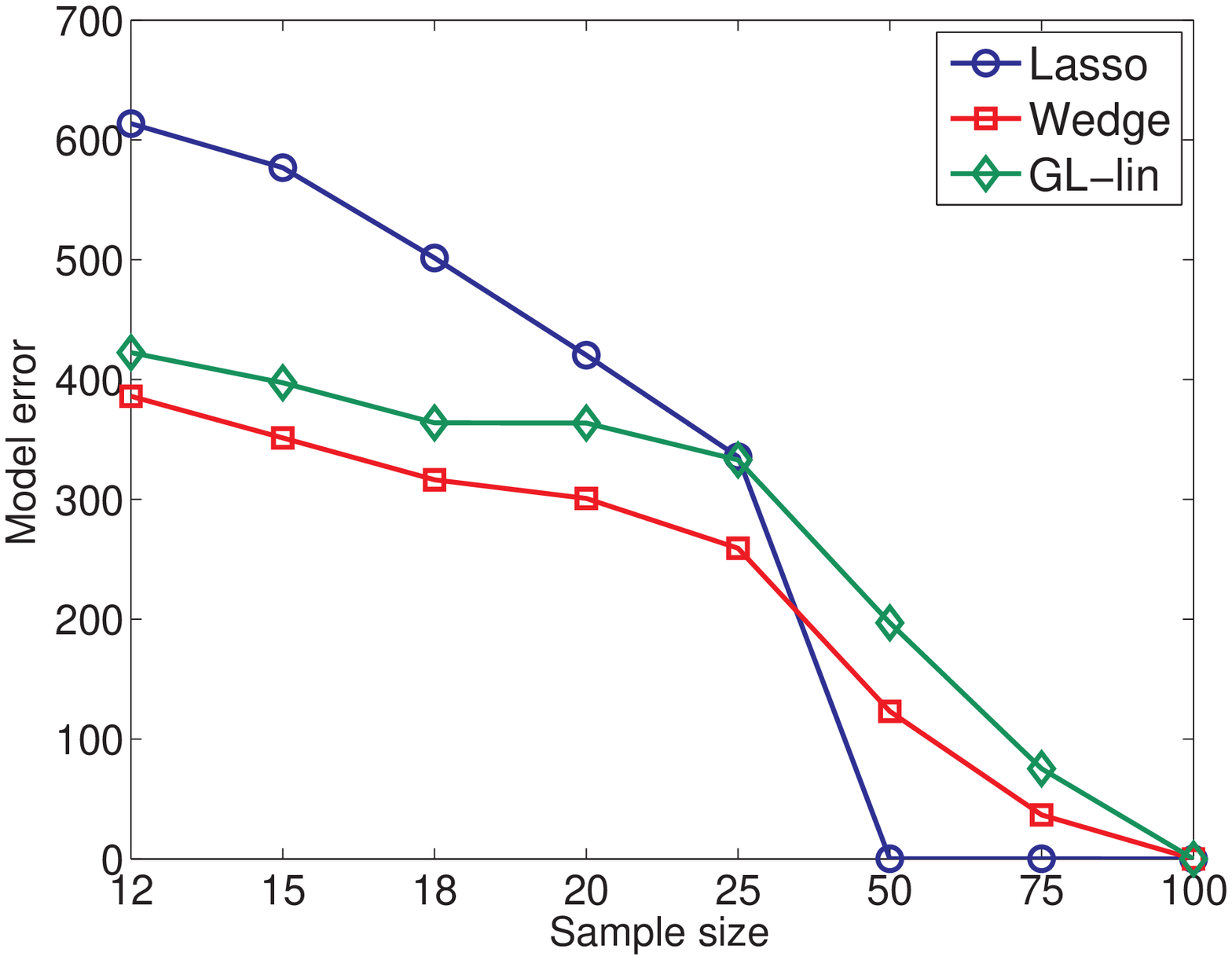} &
    \includegraphics[width=0.38\textwidth]{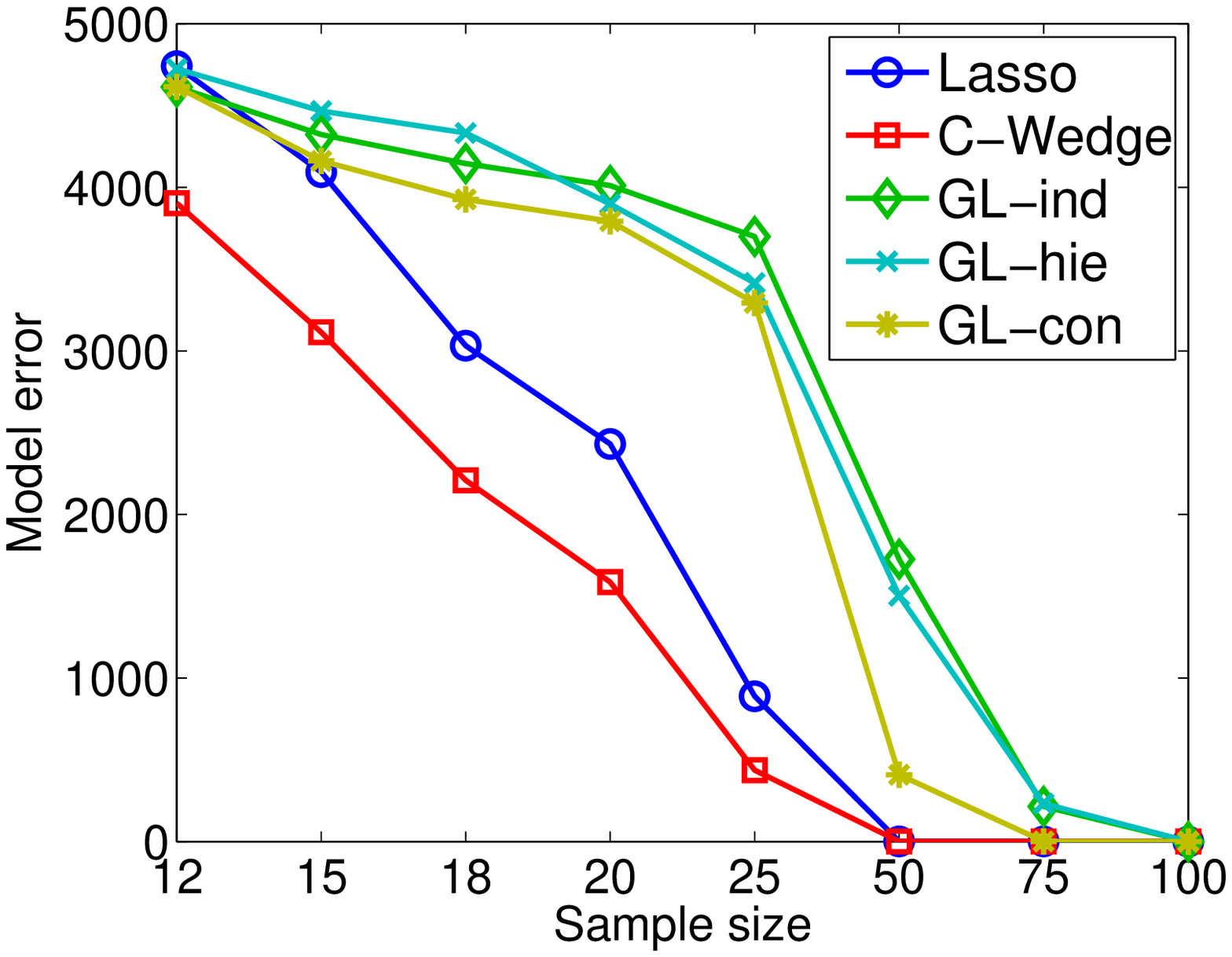} \\
(c) & (d) \\
  \end{tabular}
  \caption {Comparison between different penalty methods: (a) Box vs. Lasso; (b,c) Wedge vs. Hierarchical group Lasso; (d)
Composite wedge. See text for more information}
  \label{fig:all}
\end{center}
\end{figure}

\vspace{.2truecm}
\noindent {\bf Box.} In the first experiment the model is $10$-sparse, where
each nonzero component, in a random position, is an integer uniformly sampled in
the interval $[-10, 10]$. We wish to show that the more
accurate the prior information about the model is, the more precise
the estimate will be. We use a box penalty (see Theorem \ref{thm:box})
constructed ``around'' the model, imagining that an oracle tells us
that each component $|\beta_i^\ast|$ is bounded within an
interval. We consider three boxes $B[a, b]$ of different sizes, namely
$a_i = (r-|\beta_i^\ast|)_+$ and $b_i = (|\beta_i^\ast| - r)_+$ and
radii $r = 5, 1$ and $0.001$, which we denote as Box-A, Box-B and
Box-C, respectively. We compare these methods with the Lasso -- see Figure \ref{fig:all}-a.
As expected, the three box penalties perform better. Moreover,
as the radius of a box diminishes, the amount of information about the
true model increases, and the performance improves.


\vspace{.2truecm}
\noindent{\bf Wedge.}
In the second experiment, we consider a regression vector, whose components are
nonincreasing in absolute value and only a few are nonzero.
Specifically, we choose a $10$-sparse vector: $\beta^*_j = 11-j$, if $j \in \N_{10}$ and zero otherwise.
We compare the Lasso, which makes no use of such ordering information, with the
wedge penalty $\Omega(\beta|W)$ (see Theorem \ref{thm:sp})
and the hierarchical group Lasso in \cite{binyu},
which both make use of such information. For the group Lasso we choose
$\Omega(\beta) = \sum_{\ell \in \N_{100}} \| \beta_{|{J}_\ell}\|_2$,
with ${J}_\ell = \{\ell,\ell+1,\dots,100\}$, $\ell \in \N_{100}$.
These two methods are referred to as ``Wedge'' and ``GL-lin'' in Figure \ref{fig:all}-b, respectively.
As expected both methods improve over the Lasso, with ``GL-lin'' being the best of the two.
We further tested the robustness of the methods, by adding two additional nonzero components with value of $10$
to the vector $\beta^*$
in a random position between $20$ and $100$. This result, reported in Figure \ref{fig:all}-c, indicates that
``GL-lin'' is more sensitive to such a perturbation.

\vspace{.2truecm}
\noindent{\bf Composite wedge.}
Next we consider a more complex experiment, where the regression vector is sparse within different contiguous regions
$P_1,\dots,P_{10}$,
and the $\ell_1$ norm on one region is larger than the $\ell_1$ norm on the next region.
We choose sets $P_i = \{10(i-1) + 1, \ldots, 10i\}$,
$i\in\mathbb{N}_{10}$ and generate a $6$-sparse vector $\beta^\ast$
whose $i$-th nonzero element has value $31-i$ (decreasing) and
is in a random position in $P_i$, for $i
\in\mathbb{N}_{6}$. We encode this prior knowledge by choosing $\Omega(\beta|\La)$ with $\Lambda = \left \lbrace \lambda \in
\mathbb{R}^{100} : \|\lambda_{P_i}\|_1 \geq \|\lambda_{P_{i+1}}\|_1,~i \in \N_{9}
\right \rbrace$. This method constraints the sum of the sets to be nonincreasing and
may be interpreted as the composition of the wedge set with an average
operation across the sets $P_i$, which may be computed using Proposition \ref{prop:comb}
.  This method, which is referred to as ``C-Wedge''
in Figure \ref{fig:all}-d, is compared to the Lasso and to three other
versions of the group Lasso.  The first is a standard group Lasso with
the nonoverlapping groups $J_i = P_i$, $i \in \N_{10}$, thus
encouraging the presence of sets of zero elements, which is useful
because there are $4$ such sets. The second is a variation of the
hierarchical group Lasso discussed above with $J_i = \cup_{j=i}^{10}
P_j$, $i
\in \mathbb{N}_{10}$. A problem with these approaches is that the $\ell_2$ norm is applied at
the level of the individual sets $P_i$, which does not promote sparsity within these sets. To
counter this effect we can enforce contiguous nonzero patterns within
each of the $P_i$, as proposed by \cite{Jenatton}. That is, we
consider as the groups the sets formed by all sequences of $q \in
\N_9$ consecutive elements at the beginning or at the end of each of the sets
$P_i$, for a total of $180$ groups.  These three groupings will be referred to as ``GL-ind'', ``GL-hie'`, ``GL-con''
in Figure \ref{fig:all}-d, respectively. This result indicates the advantage of ``C-Wedge'' over the other methods
considered. In particular, the group Lasso methods fall behind our method and the Lasso,
with ``GL-con'' being slightly better than ``GL-ind'' and ``GL-hie''.
Notice also that all group Lasso methods gradually diminish the model
error until they have a point for each dimension, while our method and the Lasso
have a steeper descent, reaching zero at a number of points which is less than half the number of dimensions.

\begin{figure}[t]
\begin{center}
  \begin{tabular}{cc}
    \includegraphics[width=0.38\textwidth]{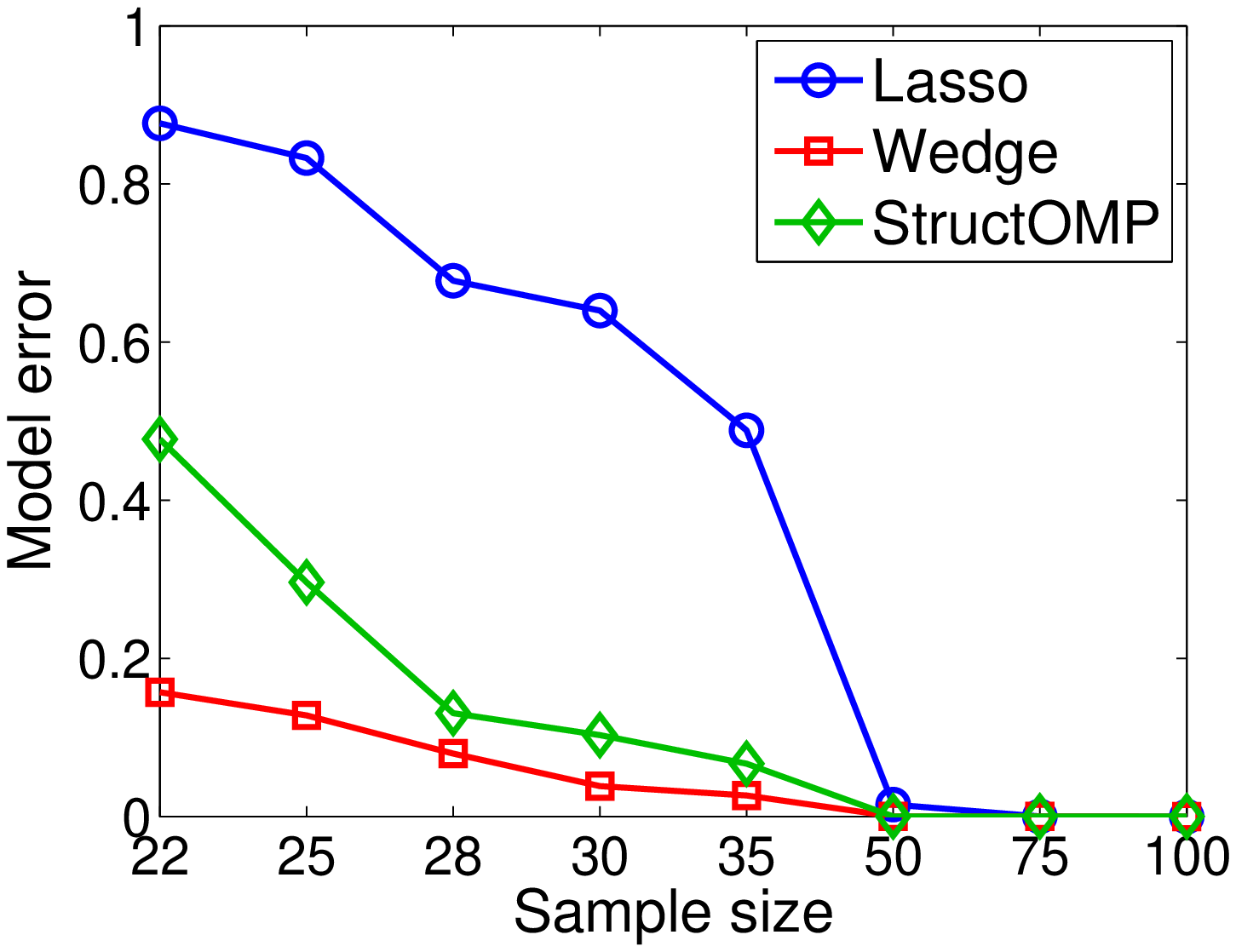} &
    \includegraphics[width=0.38\textwidth]{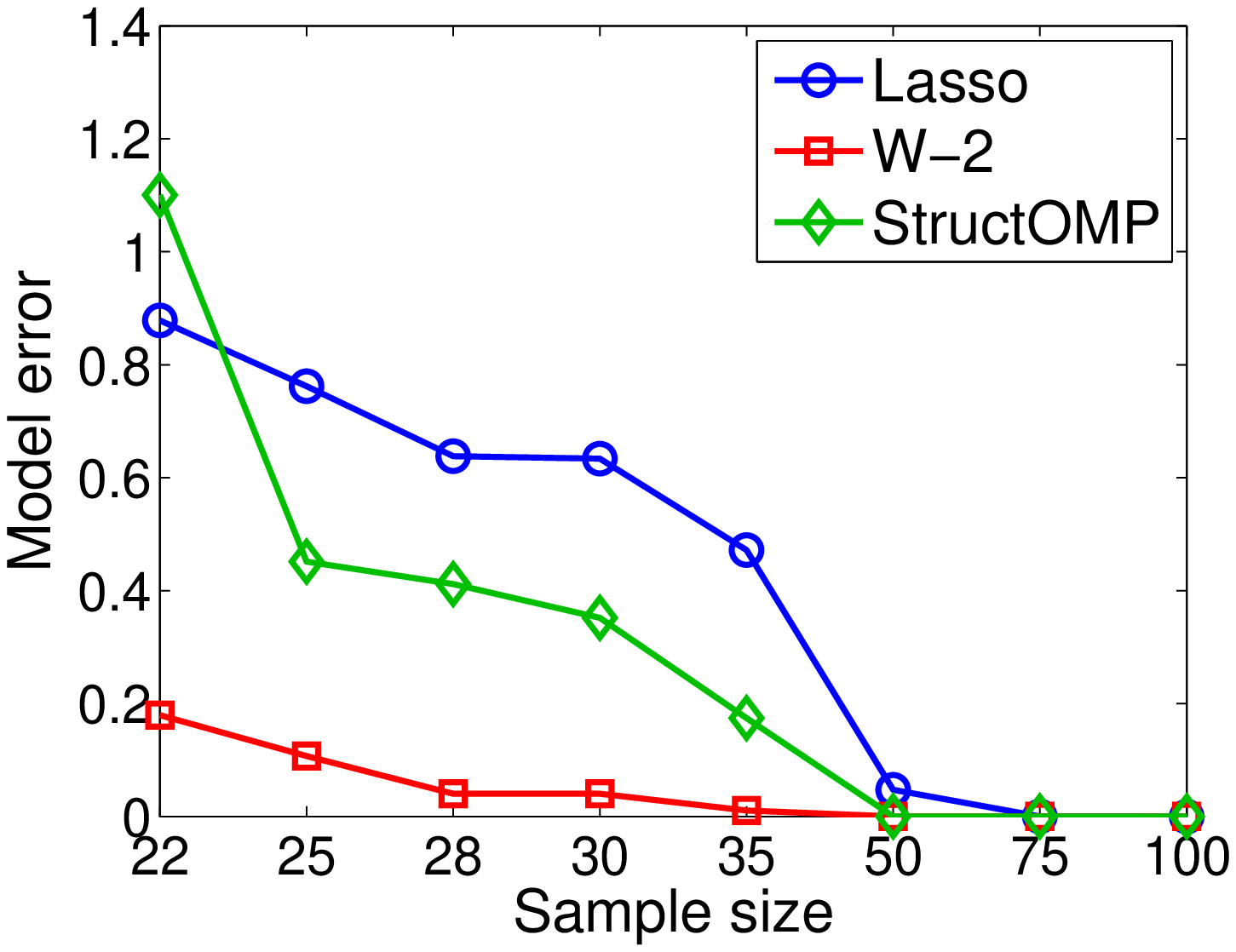} \\
    $(a)$ & $(b)$ \\
    \includegraphics[width=0.38\textwidth]{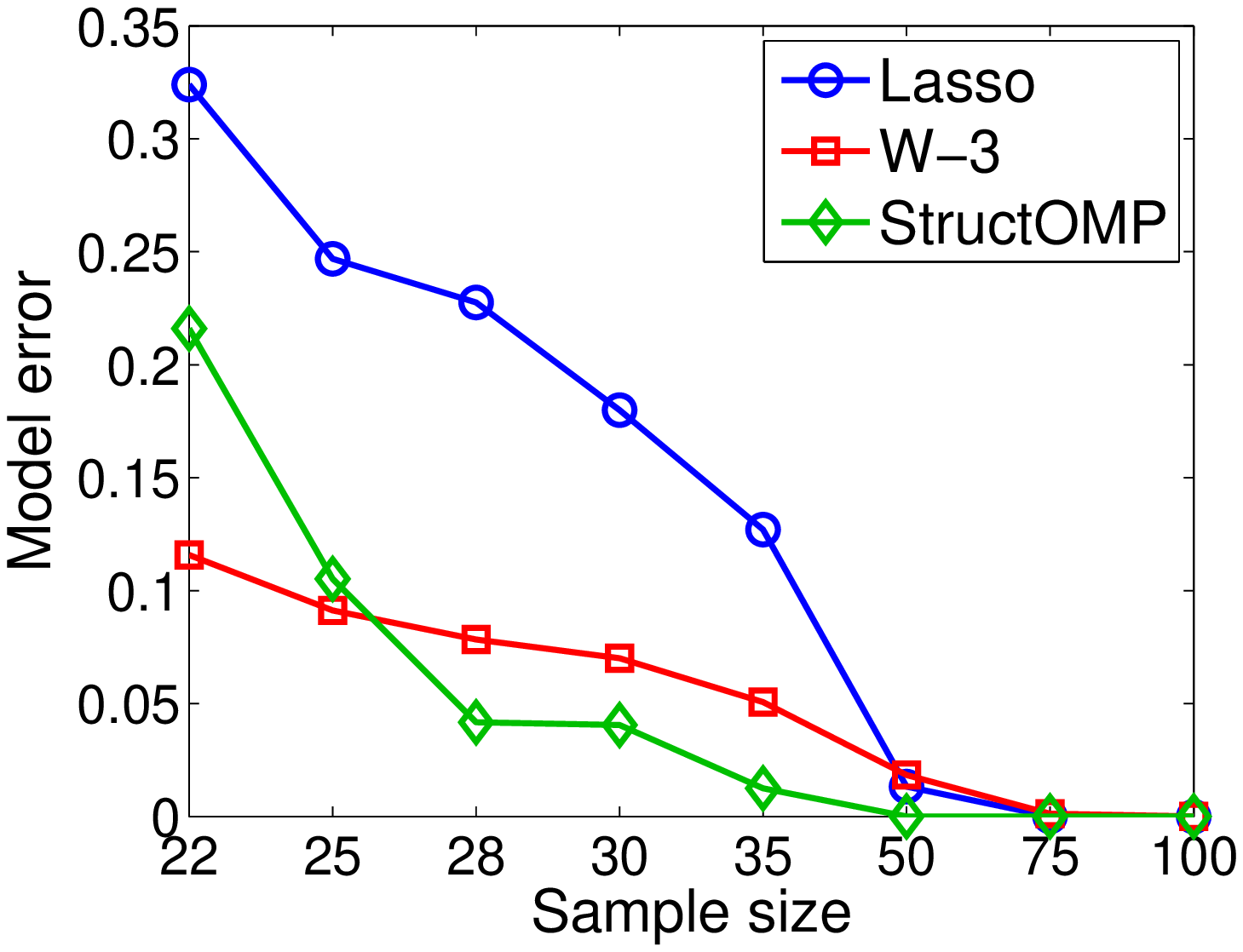} &
    \includegraphics[width=0.38\textwidth]{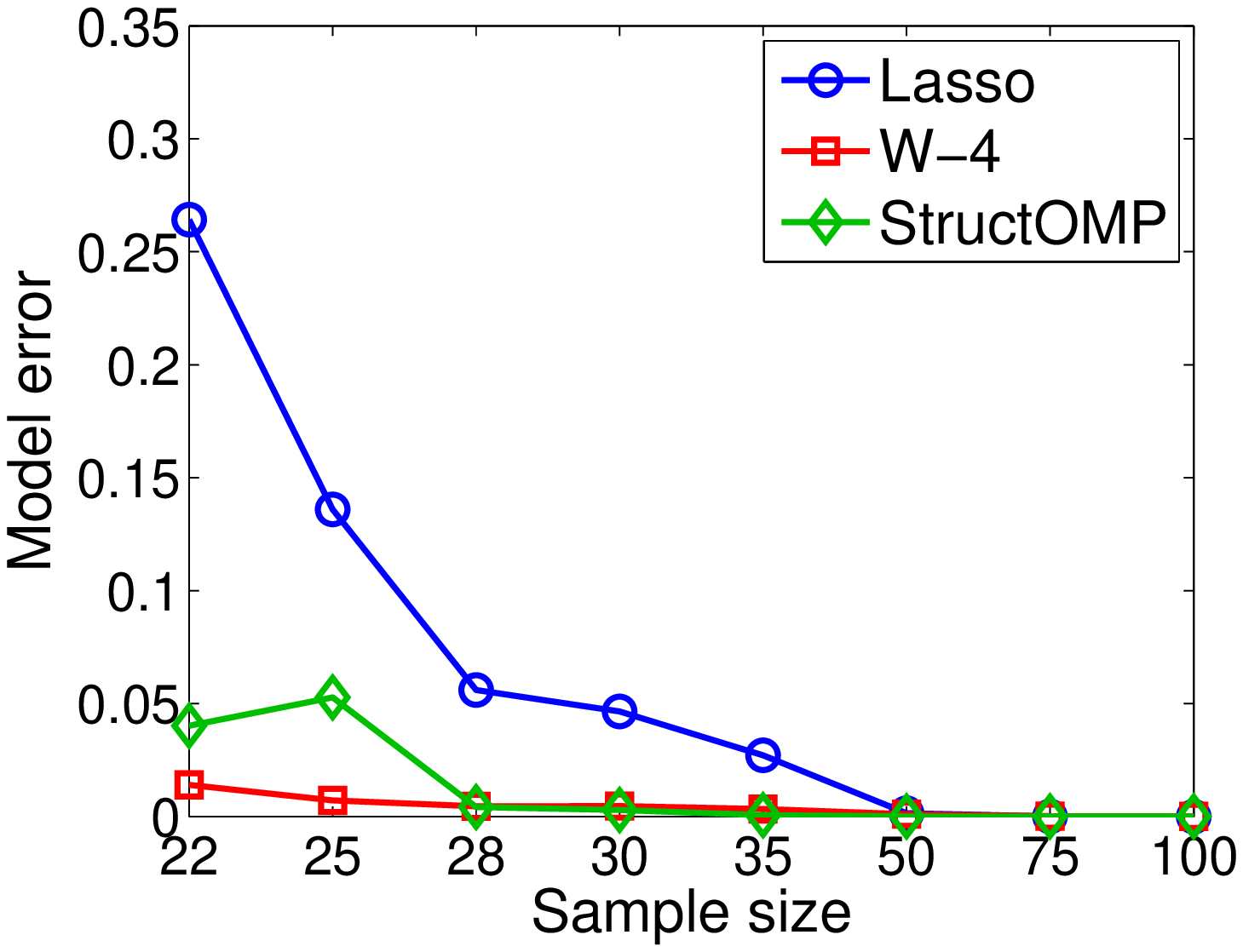} \\
    $(c)$ & $(d)$ \\
  \end{tabular}
  \caption {Comparison between StructOMP and penalty
  $\Omega(\beta|W^k)$, $k=1, \ldots, 4$, used for several polynomial
  models: $(a)$ degree $1$, $(b)$ degree $2$, $(c)$ degree $3$; $(d)$
  degree $4$.}
  \label{fig:polynomials}
  \end{center}
\end{figure}

\vspace{.2truecm}
\noindent
\textbf{Polynomials}. The constraints on the finite differences (see
equation \eqref{eq:k-wedge}) impose a structure on the sparsity of the model. To
further investigate this possibility we now consider some models whose
absolute value belong to the sets of constraints $W^k$, where $k=1,
\ldots, 4$. Specifically, we evaluate the polynomials $p_1(t) =
-(t+5)$, $p_2(t) = (t+6)(t-2)$, $p_3(t) = -(t+6.5)t(t-1.5)$ and
$p_4(t) = (t+6.5)(t-2.5)(t+1)t$ at $100$ equally spaced ($0.1$)
points starting from $-7$. We take the positive part of each component
and scale it to $10$, so that the results can be seen in Figure
\ref{fig:silhouette.poly}. The roots of the polynomials has been chosen
so that the sparsity of the models is either $18$ or $19$.

We solve the interpolation problem using our method with the penalty
$\Omega(\beta | W^k)$, $k = 1, \ldots, 4$, with the objective of
testing the robustness of our method: the constraint set $W^k$ should
be a more meaningful choice when $|\beta^\ast|$ is in it, but the exact
knowledge of the degree is not necessary. This is indeed the case:
``W-k'' outperforms the Lasso for every $k$, but among these methods
the best one ``knows'' the degree of $|\beta^\ast|$. For clarity, in
Figures \ref{fig:polynomials} we included only the best method.

One important feature of these sparsity patterns is the number of
contiguous regions: $1$, $2$, $2$ and $3$ respectively. This prior
information cannot be exploited with convex optimization techniques, so
we tested our method against StructOMP, proposed by \cite{huang2009},
a state of the art greedy algorithm. It relies on a complexity
parameter which depends on the number of contiguous regions of the
model, and which we provide exactly to the algorithm. The performance
of ``W-k'' is comparable or better than StructOMP.

\begin{figure}[th]
\begin{center}
  \begin{tabular}{cc}
    \includegraphics[width=75mm,height=28.08mm]{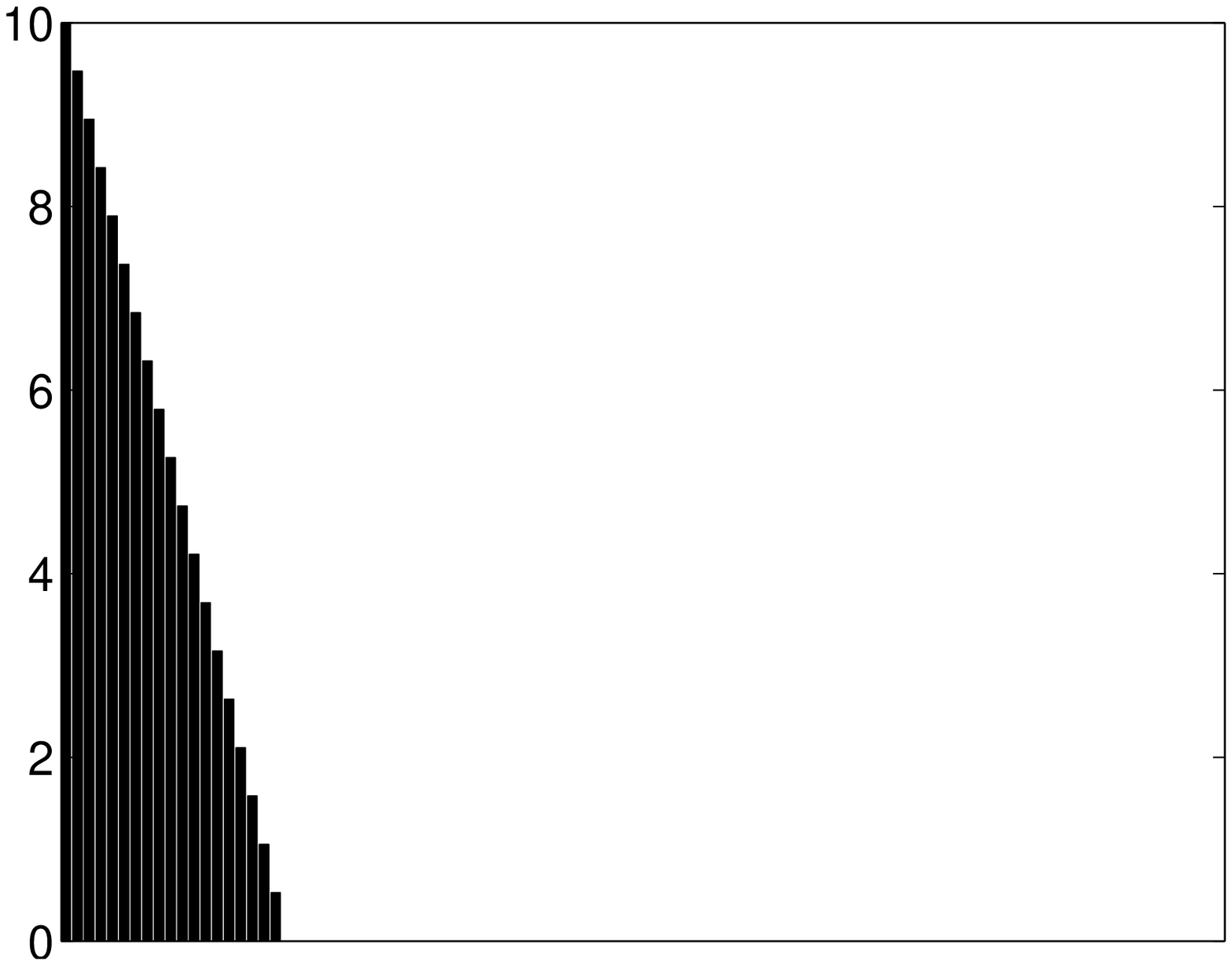} &
    \includegraphics[width=75mm,height=28.08mm]{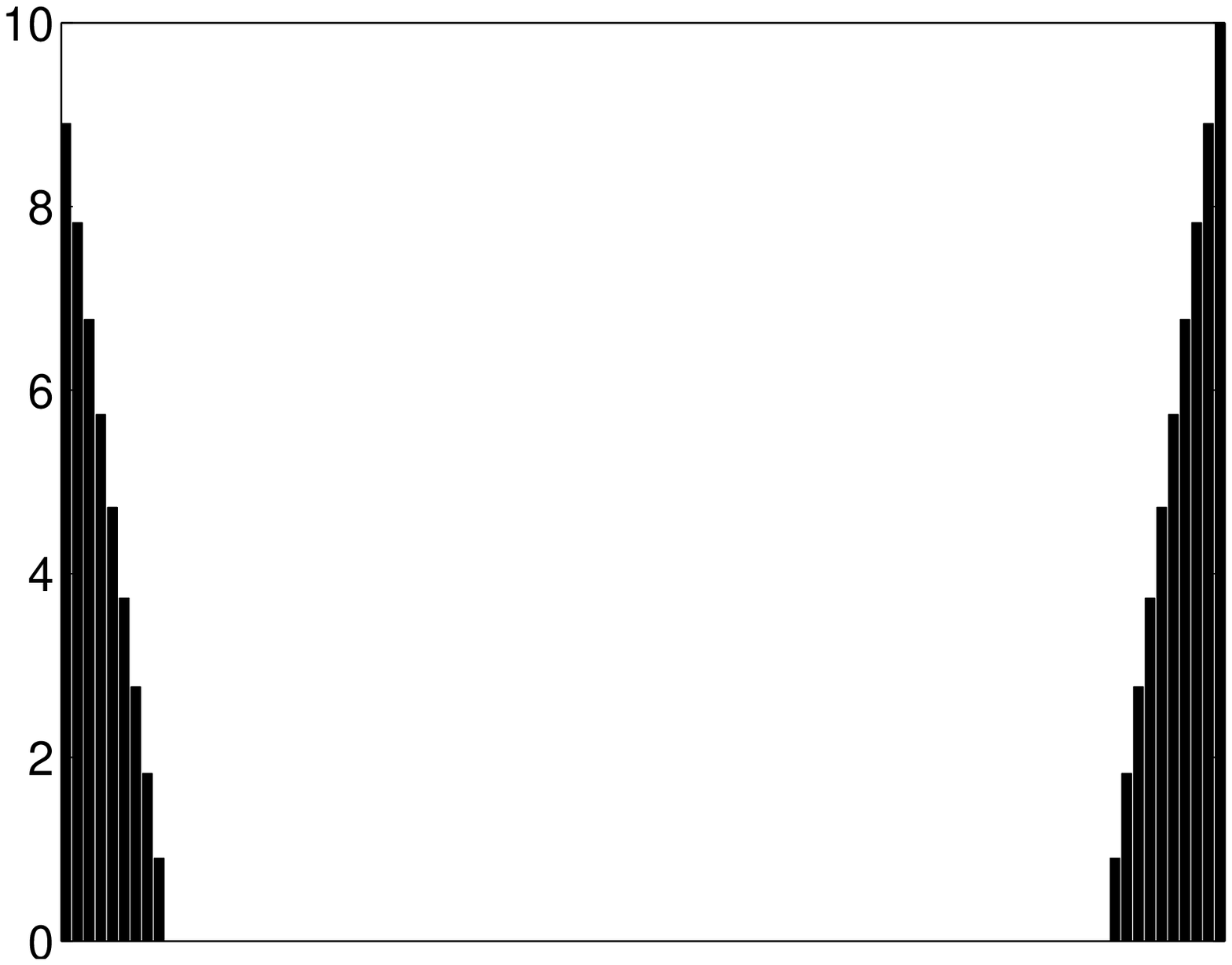} \\
    $(a)$ & $(b)$ \\
    \includegraphics[width=75mm,height=28.08mm]{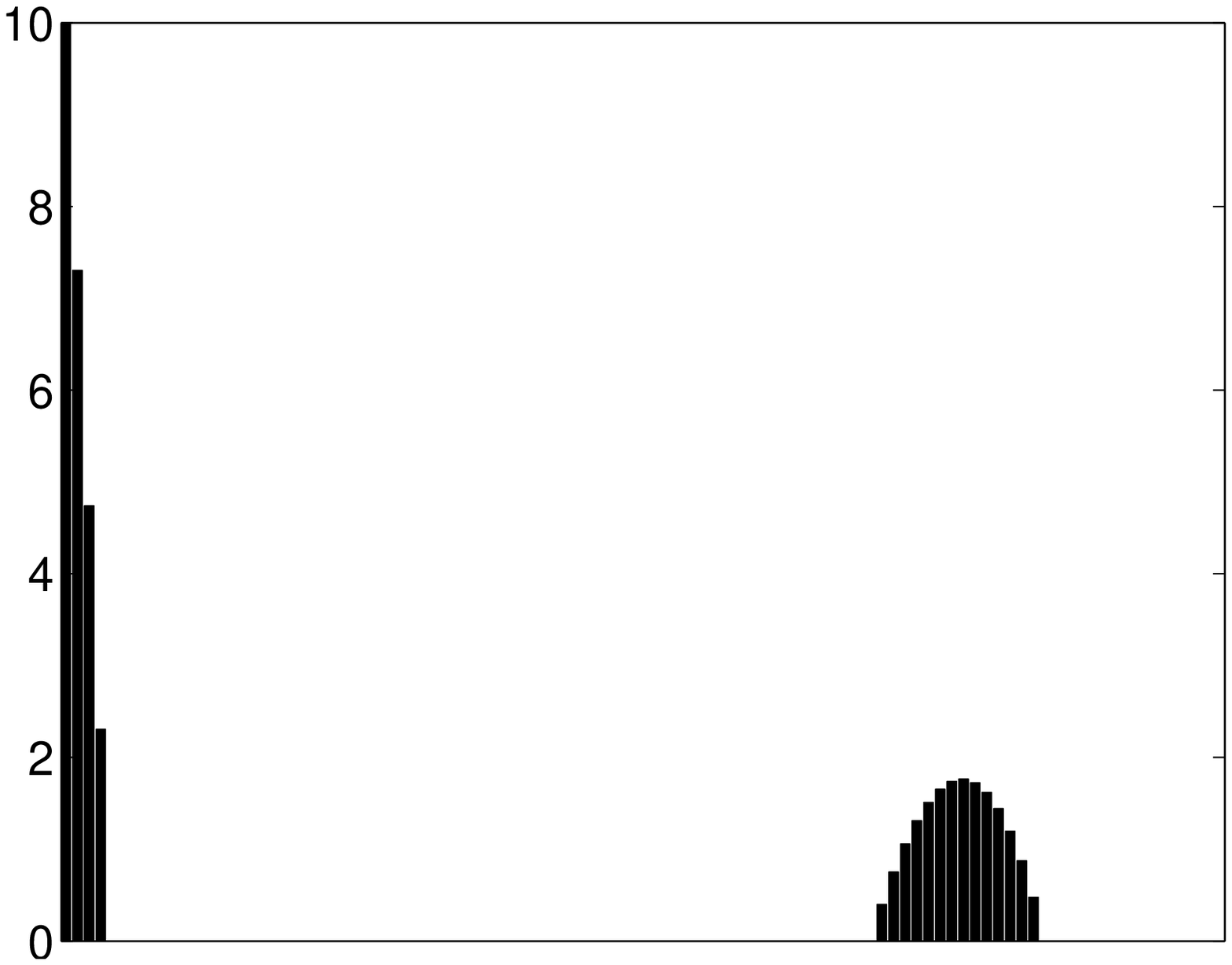} &
    \includegraphics[width=75mm,height=28.08mm]{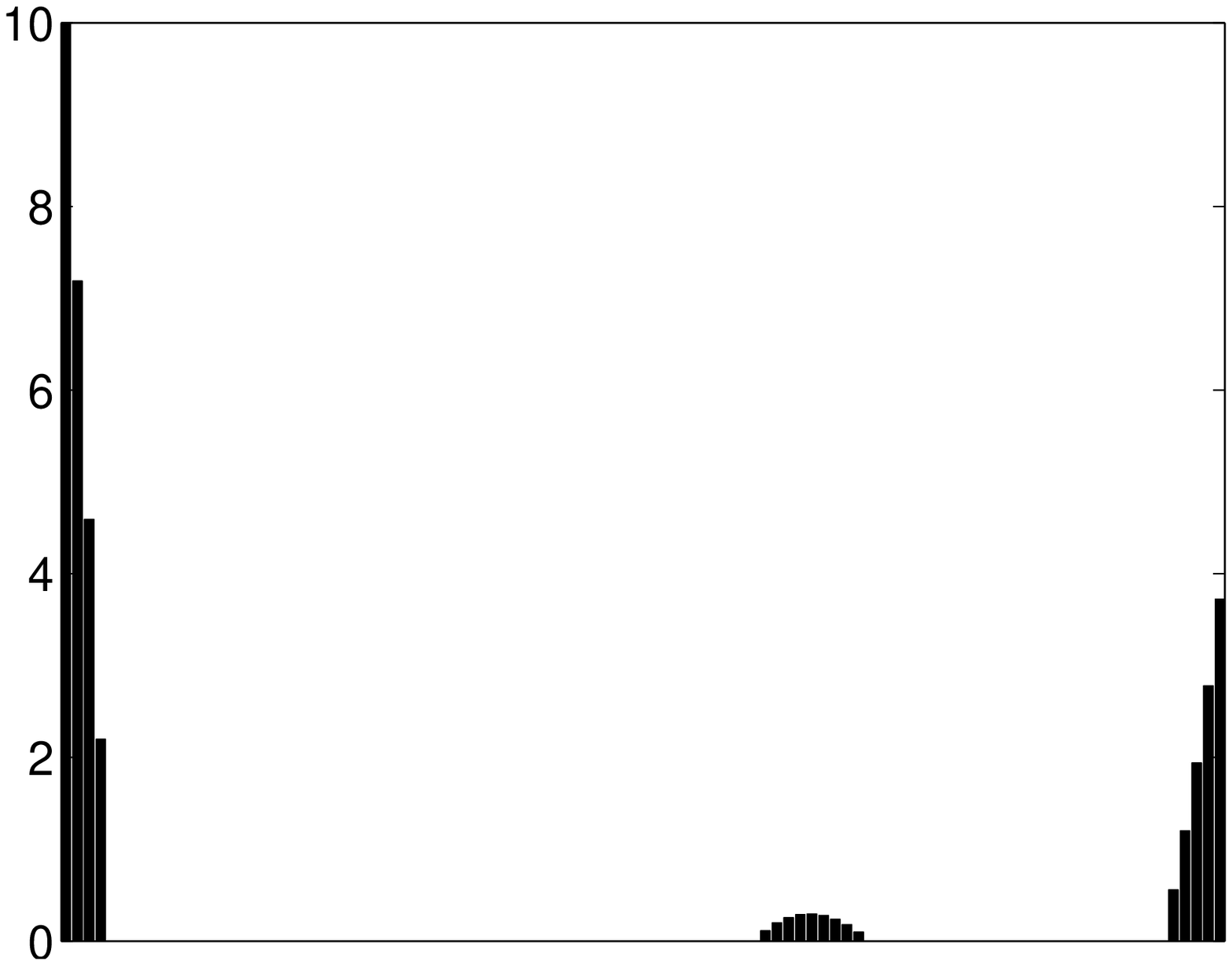} \\
    $(c)$ & $(d)$ \\
  \end{tabular}
  \caption {Silhouette of the polynomials by number of degree: $(a)$
  $k=1$, $(b)$ $k=2$, $(c)$ $k=3$, $(d)$ $k=4$.}
  \label{fig:silhouette.poly}
\end{center}
\end{figure}

As a way of testing the methods on a less artificial setting, we
repeat the experiment using the same sparsity patterns, but replacing
each nonzero component with a uniformly sampled random number between
$1$ and $2$. In Figure \ref{fig:poly.unif} we can see that, even if
now the models manifestly do not belong to $W^k$, we still have an
advantage because the constraints look for a limited number of
contiguous regions. We found that in this case StructOMP has difficulties, probably due to the randomness of the model.

\begin{figure}[th]
\begin{center}
  \begin{tabular}{cc}
    \includegraphics[width=0.38\textwidth]{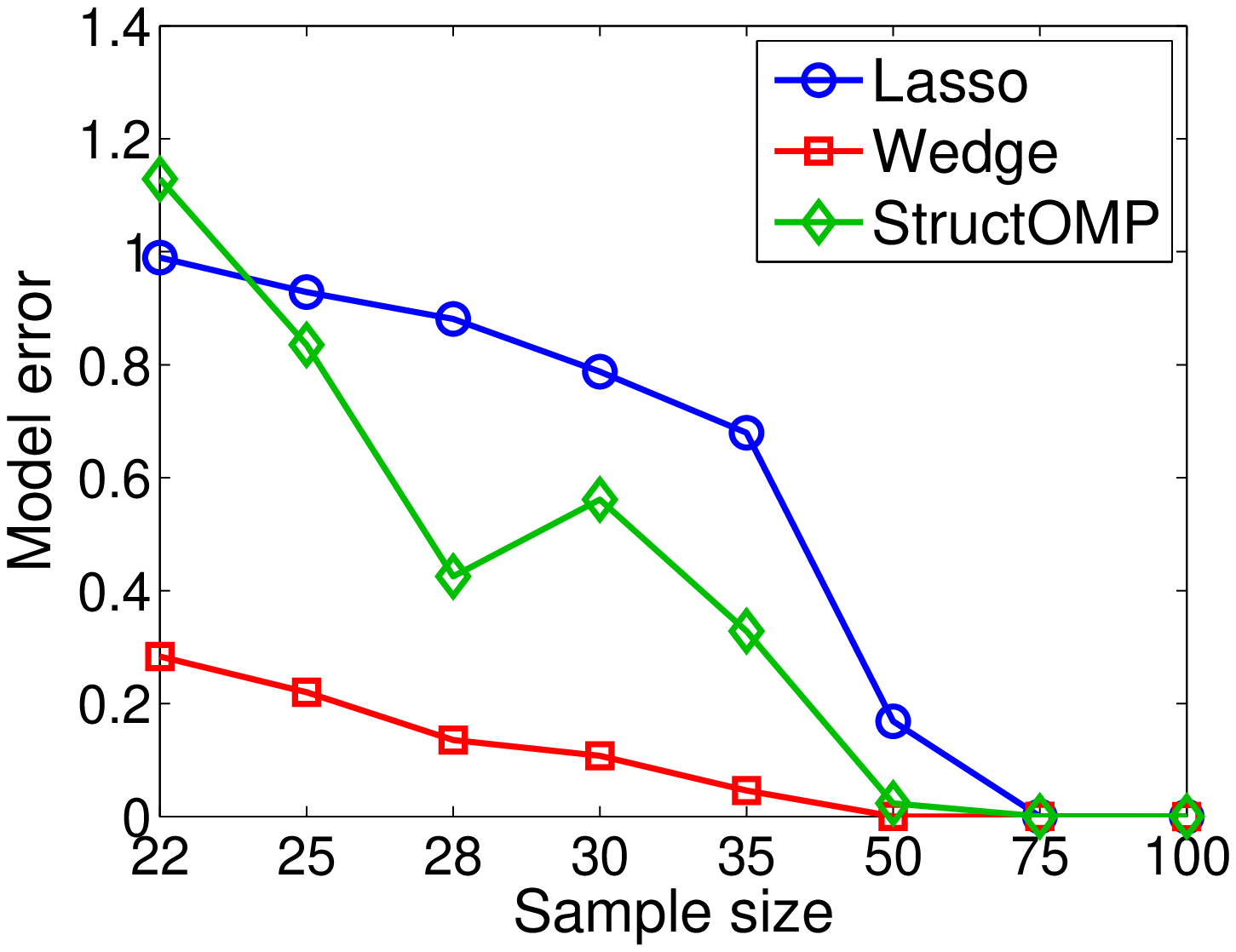} &
    \includegraphics[width=0.38\textwidth]{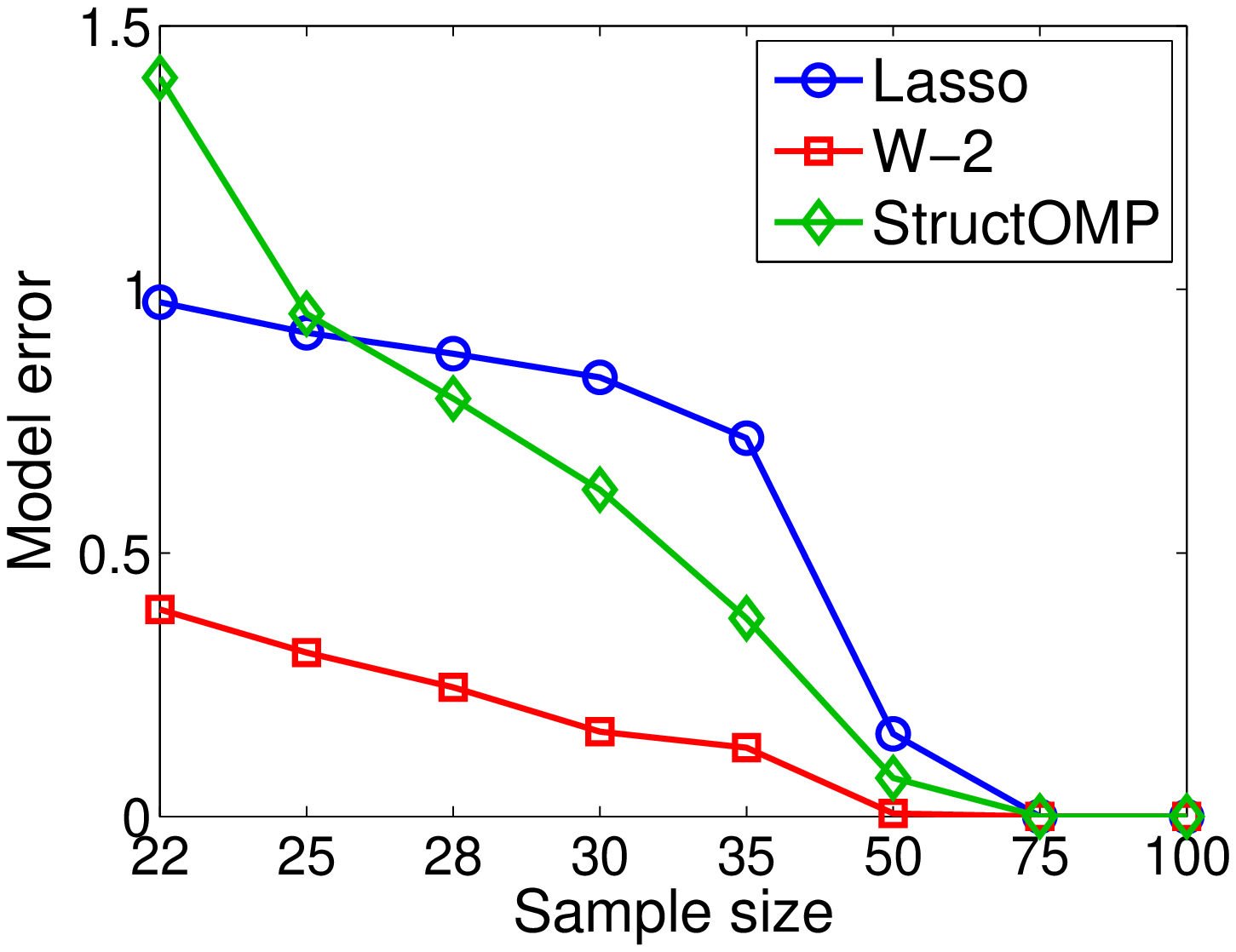} \\
    $(a)$ & $(b)$ \\
    \includegraphics[width=0.38\textwidth]{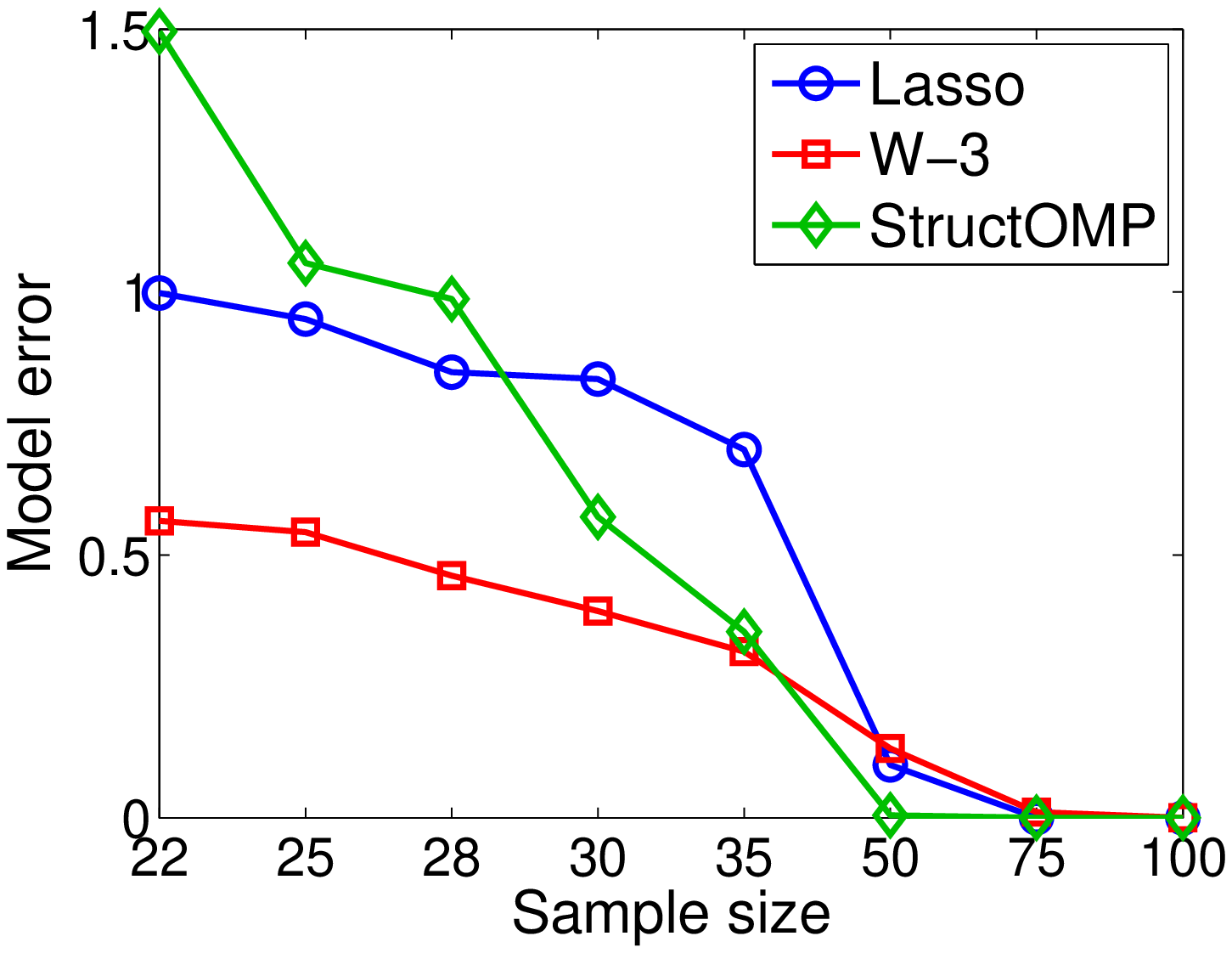} &
    \includegraphics[width=0.38\textwidth]{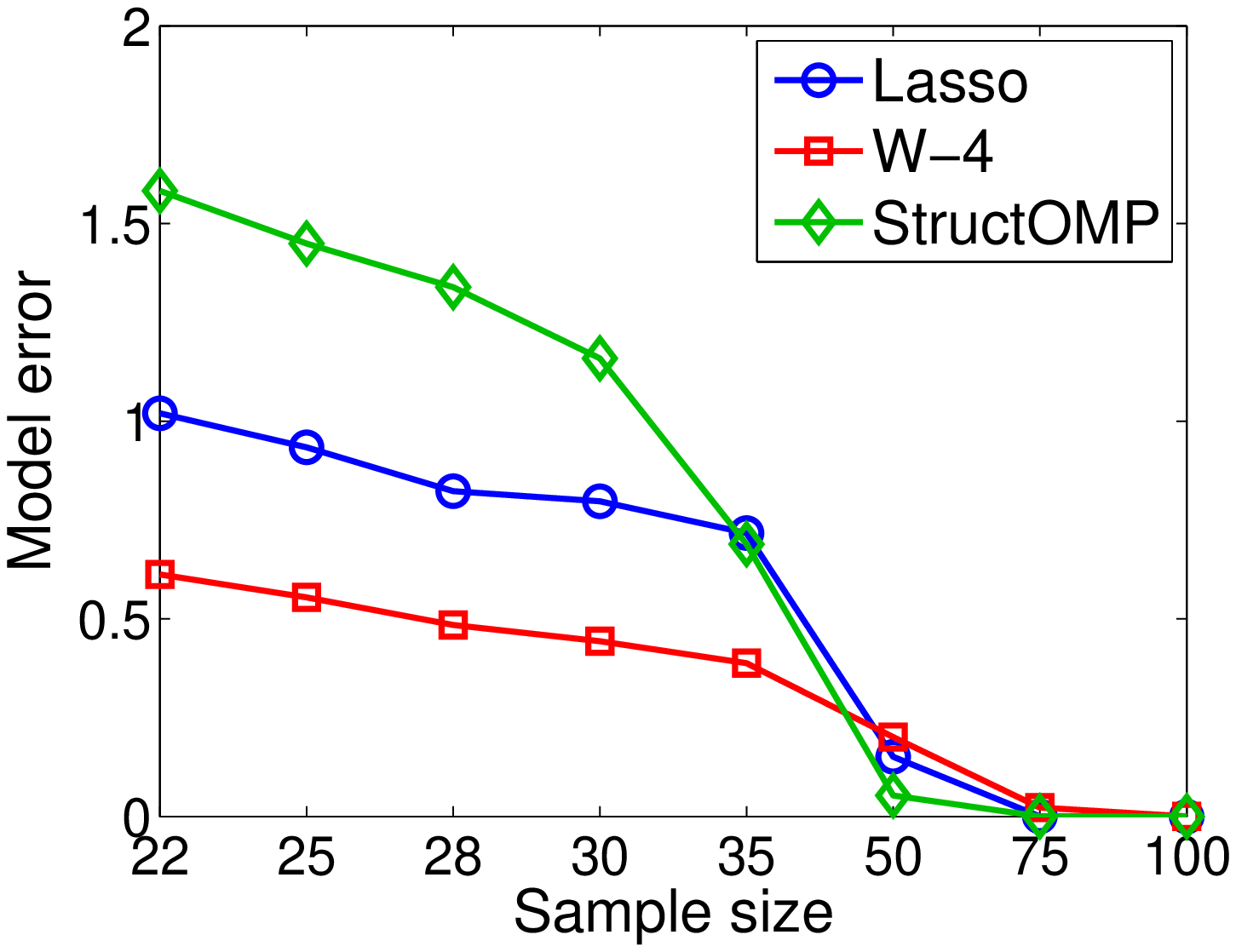} \\
    $(c)$ & $(d)$ \\
  \end{tabular}
  \caption {Comparison between StructOMP and penalty
  $\Omega(\beta|W^k)$, $k=1, \ldots, 4$, used for several polynomial
  models with random values between the roots: $(a)$ degree $1$, $(b)$
  degree $2$, $(c)$ degree $3$; $(d)$ degree $4$.}
  \label{fig:poly.unif}
\end{center}
\end{figure}

Finally, Figure \ref{fig:exp.poly.stem} displays the regression vector
found by the Lasso and the vector learned by ``W-2'' (left) and by the
Lasso and ``W-3'' (right), in a single run with sample size of $15$
and $35$, respectively. The estimated vectors (green) are superposed to the
true vector (black). Our method provides a better
estimate than the Lasso in both cases.
We found that the estimates of StructOMP are too variable for it to be
meaningful to include one of them here.

\begin{figure}[th]
  \centering
  \begin{tabular}{cc}
  \includegraphics[width=75mm,height=28.08mm]{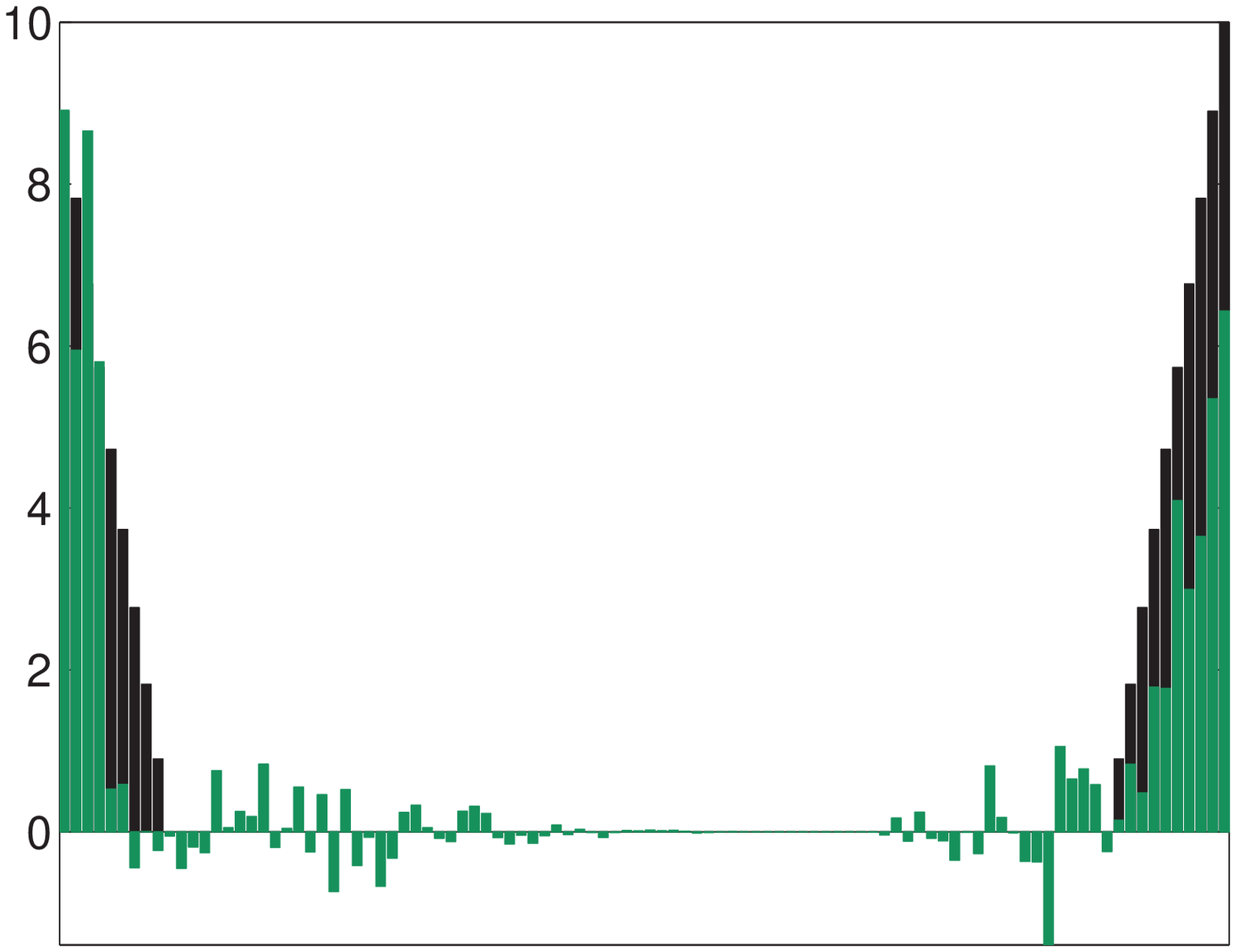} &
  \includegraphics[width=75mm,height=28.08mm]{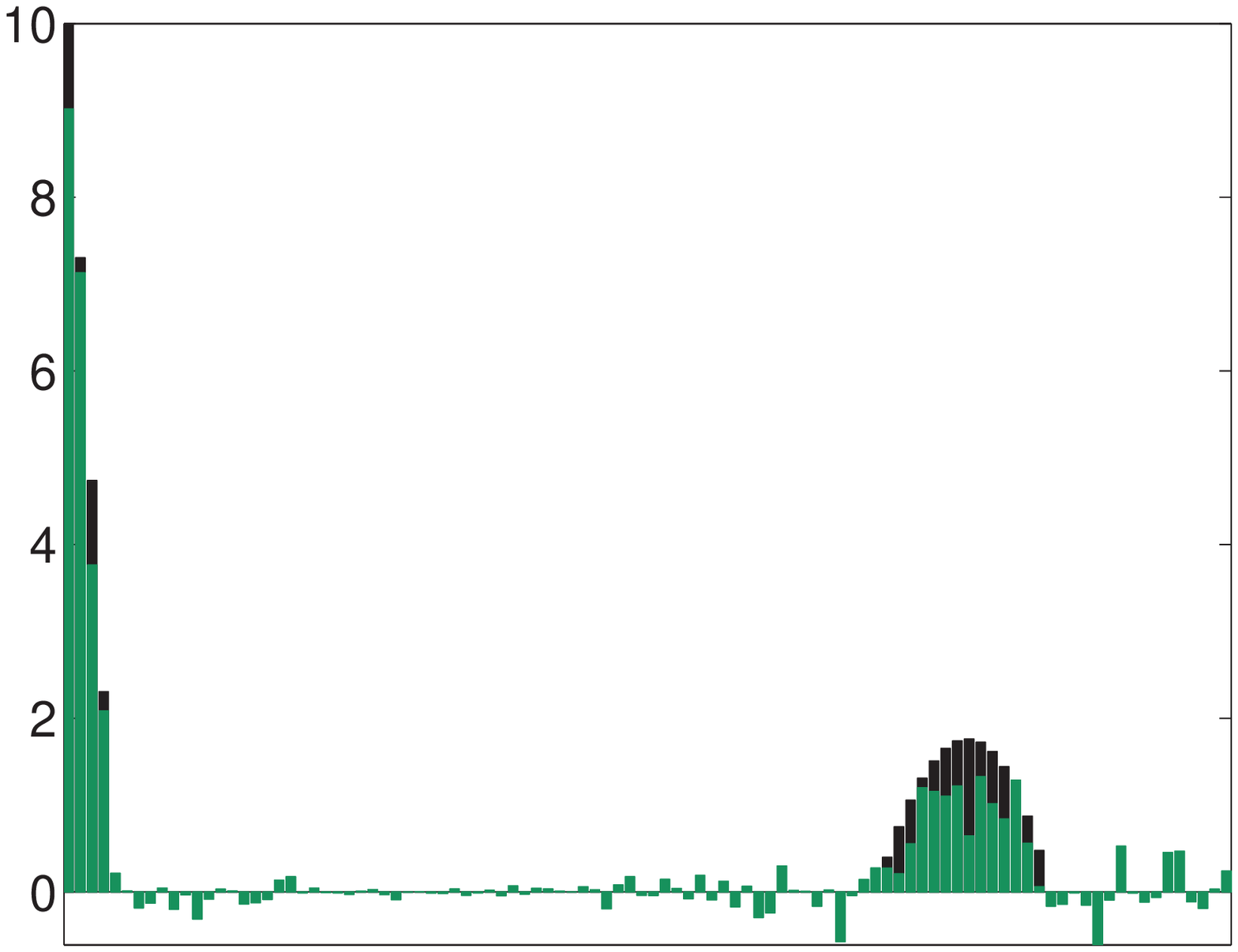} \\ 
  \includegraphics[width=75mm,height=28.08mm]{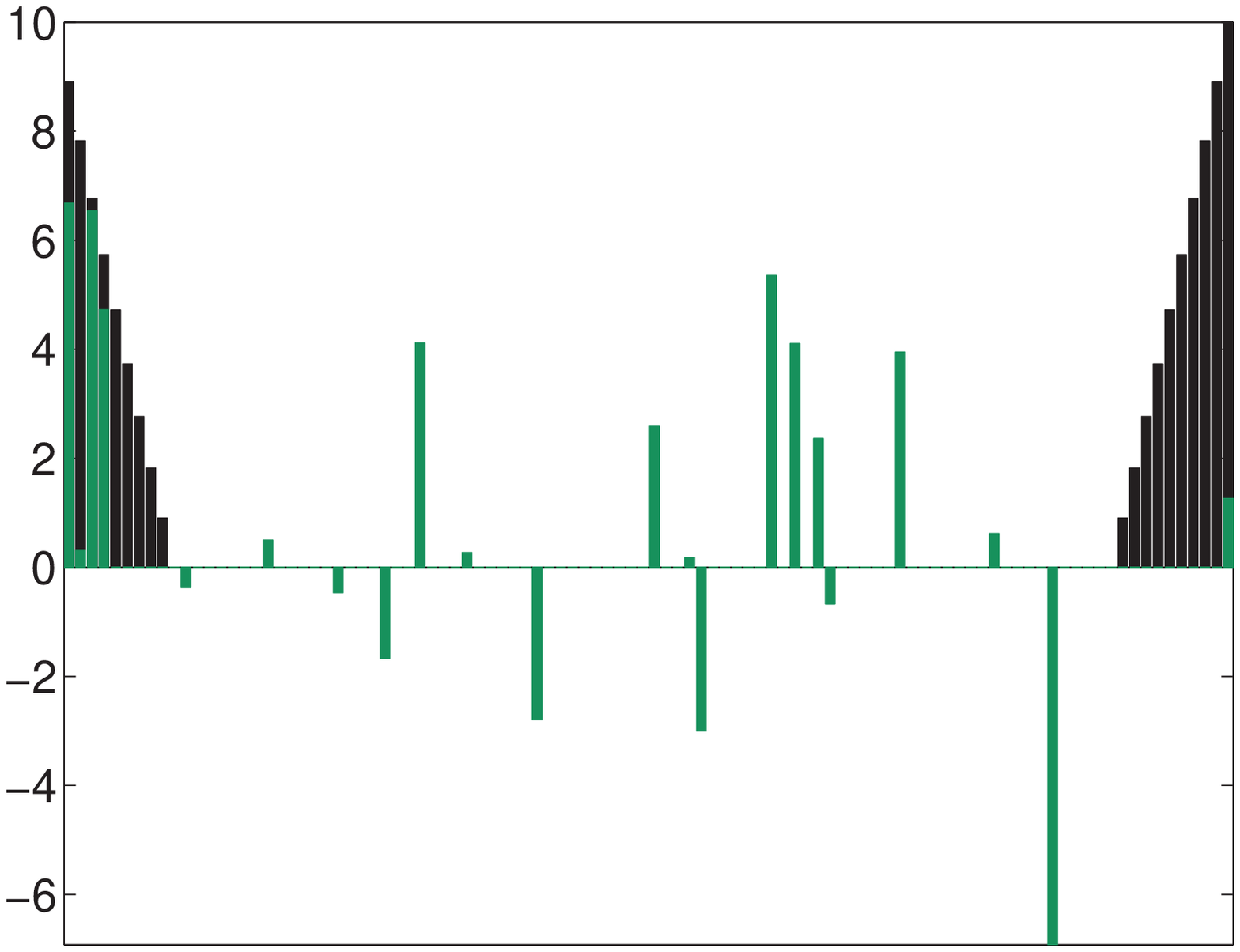} &
  \includegraphics[width=75mm,height=28.08mm]{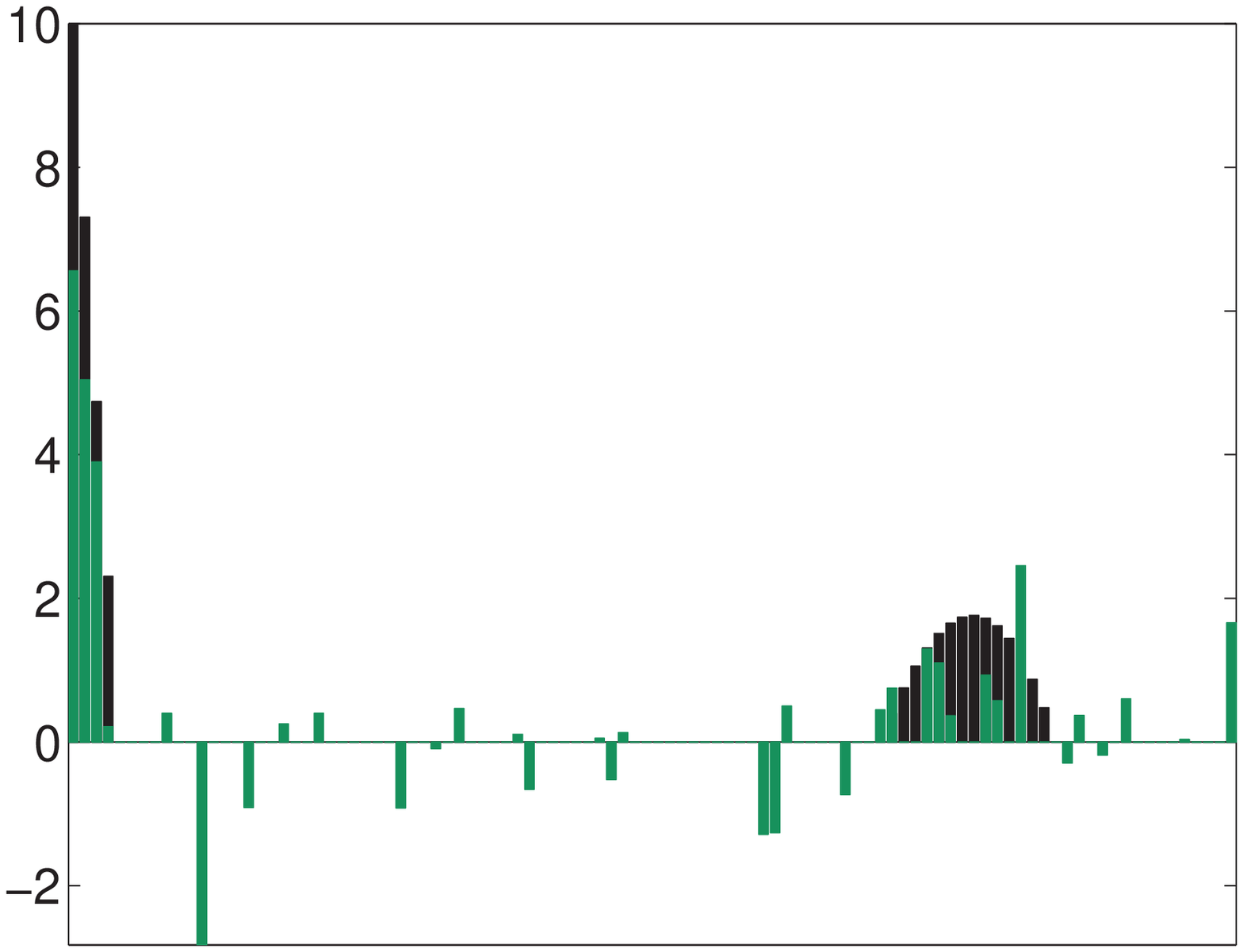} \\
  \end{tabular}
  \caption{Lasso vs. penalty $\Omega(\cdot|\La)$ for Convex (left) and Cubic (Right); see text for more information.}
  \label{fig:exp.poly.stem}
\end{figure}

\section{Conclusion}
We proposed a family of penalty functions that can be used to model
structured sparsity in linear regression. We provided
theoretical, algorithmic and computational information about this new
class of penalty functions. Our theoretical observations highlight the
generality of this framework to model structured sparsity.
An important feature of our approach is that it can deal with
richer model structures than current approaches while maintaining convexity of the penalty function.
Our practical experience indicates that these penalties perform
well numerically, improving over state of the art penalty methods for structure sparsity,
suggesting that our framework is promising for applications.

The methods developed here can be extended in different
directions. We mention here several possibilities. For example,
for any $r > 0$, it readily follows that \beq \|\beta\|_p^p =
\inf\left\{\frac{r}{r+1} \sum_{i \in \N_n}
\frac{\beta_i^2}{\lam_i} + \frac{1}{r} \lam_i^r: \lam \in \R_{++}^n\right\} 
\eeq where
$p= 2r/(r+1)$ and $\|\beta\|_p$ is the usual $\ell^p$-norm on
$\R^n$. This formula leads us to consider the same optimization
problem over a constraint set $\La$. Note that if $p \rightarrow
0$ the left hand side of the above equation converges to the
cardinality of the support of the vector $\beta$.

Problems associated with multi-task learning \cite{AEP,argyriou2010spectral} demand matrix analogs
of the results discussed here. In this regard, we propose the
following family of unitarily invariant norms on $d \times n$
matrices. Let $k = \min(d,n)$ and $\sigma(B) \in \R_{+}^k$ be the
vector formed from the singular values of $B$. When $\La$ is a
nonempty convex set which is invariant under permutations our
point of view in this paper suggests the penalty
$$
\|B\|_{\La} = \Omega(\sigma(B)|\La).
$$
The fact that this is a norm, follows from the von Neumann
characterization of unitarily invariant norms. When $\La =
\R_{++}^k$ this norm reduces to the trace norm \cite{argyriou2010spectral}.

Finally, the ideas discussed in this paper can be used in the
context of kernel learning, see \cite{bach,KolYua10,Lanckriet,feature_space,Suzuki} and references therein. Let $K_\ell$, $\ell \in \N_n$
be prescribed reproducing kernels on a set $\calX$, and $H_\ell$
the corresponding reproducing kernel Hilbert spaces with norms
$\|\cdot\|_\ell$. We consider the problem
$$ \min \left\{ \sum_{i\in \N_m} \left(y_i - \sum_{\ell\in \N_n} f_\ell(x_i)\right)^2
+ \rho \Omega^2\Big((\|f_\ell\|_\ell: \ell \in \N_n) | \La\Big) :
f_\ell \in H_{\ell}, \ell \in \N_n\right\}
$$
and note that the choice $\La = \R_{++}^n$ corresponds to multiple
kernel learning.


All the above examples deserve a detailed analysis and we hope to
provide such in future work.

\subsubsection*{Acknowledgements}
We are grateful to A. Argyriou for valuable discussions, especially
concerning the proof of Theorem \ref{thm:aa} and Theorem \ref{aa:gen}
in the appendix. We also wish to thank Luca Baldassarre, Mark
Herbster, Andreas Maurer,
Raphael Hauser, Alexandre Tsybakov and Yiming Ying for useful discussions.
This work was supported by Air Force Grant AFOSR-FA9550, EPSRC Grants
EP/D071542/1 and EP/H027203/1, NSF Grant ITR-0312113, Royal Society
International Joint Project Grant 2012/R2, as well as by the IST
Programme of the European Community, under the PASCAL Network of
Excellence, IST-2002-506778.
%
%
\appendix
\section{Appendix}
In this appendix we describe in detail a result due to J.M. Danskin, which we use in the proof of Proposition \ref{prop:0}.
\begin{definition}
Let $f$ be a real-valued function defined on an open subset $X$
of~$\R^n$ and $u \in \R^n$.~The directional derivative of $f$ at
$x \in X$ in the ``direction'' $u$ is denoted by $(D_u f)(x)$ and
is defined as
$$
(D_uf)(x) := \lim_{t \rightarrow 0} \frac{f(x+tu) - f(x)}{t}
$$
if the limit exists. When the limit is taken through nonnegative
values of $t$, we denote the corresponding right directional
derivative by $D^+_u$.
\end{definition}
Let $Y$ be a compact metric space, $F: X \times Y \rightarrow \R$ a continuous function on its domain and define the
function $f: X \rightarrow \R$ at $x \in X$ as
$$
f(x) = \min\left\{F(x,y): y \in Y \right\}.
$$
We say that $F$ is Danskin function if, for every $u \in \R^n$, the function $F'_u: X \times Y \rightarrow \R$
defined at $(x,y) \in X \times Y$ as $F'_u(x,y) = (D_u F(\cdot,y))(x)$ is continuous on $X \times Y$. Our notation
is meant to convey the fact that the directional derivative is taken relative to the first variable of $F$.
\begin{theorem}
If $X$ is an open subset of $\R^n$, $Y$ a is compact metric space, $F: X \times Y$ is a Danskin function, $u \in \R^n$ and $x \in X$, then
$$
(D^+_uf)(x) = \min \left\{F'_u(x,y): y \in Y_x\right\}
$$
where $Y_x := \{y: y \in Y ,~F(x,y) = f(x)\}$.
\label{thm:dan}
\end{theorem}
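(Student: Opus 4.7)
The plan is to establish the equality by proving two matching inequalities, namely
$$
(D^+_u f)(x) \le \min\{F'_u(x,y): y \in Y_x\} \quad\text{and}\quad \liminf_{t \downarrow 0}\frac{f(x+tu)-f(x)}{t} \ge \min\{F'_u(x,y): y \in Y_x\},
$$
which together (since the right-hand side is a specific real number) will yield existence of $(D^+_u f)(x)$ and the claimed formula.

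For the upper bound, I would fix any $y^\star \in Y_x$ and exploit $f(x+tu) \le F(x+tu,y^\star)$ together with $f(x)=F(x,y^\star)$. Dividing by $t>0$ and letting $t\downarrow 0$, the right-hand side tends to $F'_u(x,y^\star)$ by definition of the directional derivative of $F(\cdot,y^\star)$ at $x$. Taking the infimum over $y^\star \in Y_x$ (which is attained since $Y_x$ is closed by continuity of $F$, hence compact) gives the first inequality.

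The lower bound is the main obstacle and occupies most of the work. Let $t_k \downarrow 0$ realize the liminf, and, using compactness of $Y$ and continuity of $F$, select $y_k \in Y$ with $f(x+t_k u) = F(x+t_k u, y_k)$. After passing to a subsequence, which I still index by $k$, assume $y_k \to \bar y \in Y$. A short argument using joint continuity of $F$ gives $f(x) = \lim_k f(x+t_k u) = F(x,\bar y)$, so $\bar y \in Y_x$. Now estimate
$$
\frac{f(x+t_k u) - f(x)}{t_k} \;\ge\; \frac{F(x+t_k u, y_k) - F(x, y_k)}{t_k}.
$$
Here I would invoke the hypothesis that $F'_u$ exists and is jointly continuous on $X \times Y$: the function $s \mapsto F(x+su, y_k)$ is then continuously differentiable on a neighborhood of $[0,t_k]$ with derivative $F'_u(x+su, y_k)$, so by the fundamental theorem of calculus
$$
\frac{F(x+t_k u, y_k) - F(x, y_k)}{t_k} = \frac{1}{t_k}\int_0^{t_k} F'_u(x+su, y_k)\,ds.
$$
Joint continuity of $F'_u$ on the compact set $\{x+su : s \in [0,t_1]\} \times Y$ supplies uniform continuity, so $F'_u(x+su, y_k) \to F'_u(x,\bar y)$ uniformly in $s \in [0,t_k]$ as $k \to \infty$. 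Hence the averaged integral tends to $F'_u(x,\bar y) \ge \min\{F'_u(x,y): y \in Y_x\}$, finishing the lower bound.

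Combining the two bounds shows that the one-sided limit defining $(D^+_u f)(x)$ exists and equals $\min\{F'_u(x,y): y \in Y_x\}$. The trickiest step is ensuring that the minimizers $y_k$ do cluster inside $Y_x$ and that the difference quotients can be replaced by values of $F'_u$ uniformly; this is precisely where the hypotheses that $Y$ is compact and that $F'_u$ is jointly continuous are indispensable.
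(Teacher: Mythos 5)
Your proposal is correct and follows essentially the same route as the paper's proof: the same upper bound via $f(x+tu)\le F(x+tu,y^\star)$ for $y^\star\in Y_x$, and the same lower bound via a liminf-realizing sequence $t_k$, minimizers $y_k$ clustering (by compactness and continuity of $f$) at a point of $Y_x$. The only difference is that you replace the difference quotient $\bigl(F(x+t_ku,y_k)-F(x,y_k)\bigr)/t_k$ by an averaged integral of $F'_u$ via the fundamental theorem of calculus, whereas the paper uses the mean value theorem to write it as $F'_u(x+\sigma_k u,y_k)$; both steps lean on the same joint continuity of $F'_u$ from the Danskin hypothesis, so this is a cosmetic variation rather than a different argument.
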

\begin{proof}
If $x \in X$, $y \in Y_x$ and $u \in \R^n$ then, for all positive $t$, sufficiently small, we have that
$$
\frac{f(x+tu) - f(x)}{t} \leq \frac{F(x+tu,y)-F(x,y)}{t}.
$$
Letting $t \rightarrow 0^+$, we get that
\beq
\limsup_{t \rightarrow 0^+} \frac{f(x+tu)-f(x)}{t} \leq \min\left\{F'_u(x,y): y \in Y_x\right\}.
\label{eq:up}
\eeq
Next, we choose a sequence $\{t_k: k \in \N\}$ of positive numbers such that $\lim_{k \rightarrow \infty} t_k = 0$ and
$${\lim_{k \rightarrow \infty}} \frac{f(x+t_ku)-f(x)}{t_k} =
\liminf_{t \rightarrow 0^+} \frac{f(x+tu)-f(x)}{t}.$$
From the definition of the function $f$, there exists a $y_k \in Y$ such that $f(x+t_k u) = F(x+t_ku,y_k)$. Since $Y$ is a compact metric space,
there is a subsequence $\{y_{k_\ell}: \ell \in \N\}$ which converges to some $y_\infty \in Y$. It readily follows
from our hypothesis that the function $f$ is continuous on $X$.
Indeed, we have, for every $x_1,x_2 \in X$, that
$$
|f(x_1)-f(x_2)| \leq \max \left\{ |F(x_1,y)-F(x_2,y)|: y \in Y\right\}.
$$
Hence we conclude that $y_\infty \in Y_x$. Moreover, we have that
$$\frac{f(x+t_ku) - f(x)}{t_k} \geq \frac{F(x+t_ku,y_k)- F(x,y_k)}{t_k}.
$$
By the mean value theorem, we conclude that there is positive number $\sigma_k < t_k$ such that the
$$\frac{f(x+t_ku) - f(x)}{t_k} \geq F'_u(x+\sigma_ku,y_k).
$$
We let $\ell \rightarrow \infty$ and use the hypothesis that $F$ is a Danskin function to conclude that
$$
\liminf_{t \rightarrow 0^+} \frac{f(x+tu)-f(x)}{t} \geq F'_u(x,y_\infty) \geq \min\left\{F'_u(x,y): y \in Y_x\right\}.
$$
Combining this inequality with \eqref{eq:up} proves the result.
\end{proof}


We note that \cite[p.~737]{Bert} describes a result which is attributed to Danskin without reference. That
result differs from the result presented above. The result in \cite[p.~737]{Bert} requires the hypothesis of convexity
on the function $F$. The theorem above and its proof is an adaptation of Theorem 1 in \cite{danskin}.

We are now ready to present the proof of Proposition \ref{prop:0}.

\vspace{.2truecm}
\noindent {\bf Proof of Proposition \ref{prop:0}}
The essential part of the proof is an application of Theorem \ref{thm:dan}.
To apply this result, we start with a $\beta \in (\R\backslash\{0\})^n$ and introduce a neighborhood of this vector defined as
$$
X(\beta) = \left\{\alpha: \alpha \in \La, \|\alpha-\beta\|_\infty < \frac{\beta_{\min}}{2}\right\},
$$
where $\beta_{\min} = \min \{|\beta_i| : i \in \N_n\}$. Theorem \ref{thm:dan} also requires us to specify a compact subset $Y(\beta)$ of
$\R^n$. We construct this set in the following way. We choose a fixed $\blam \in \La$ and
a positive $\epsilon > 0$. From these constants we
define the constants
\begin{eqnarray}
\nonumber
c(\beta) & = & \sum_{i \in \N_n} \left(\frac{(|\beta_i|+\beta_{\min}/2)^2}{\blam_i} + \blam_i\right), \\ \nonumber \\ \nonumber
a(\beta) & = &\frac{\beta_{\min}^2}{4(c(\beta)+\epsilon)},\\ \nonumber
\\ \nonumber
b(\beta) & = & \max(a(\beta),c(\beta)+\epsilon).
\end{eqnarray}
With these definitions,
we choose our compact set $Y(\beta)$ to be
$Y(\beta) = \La_{a(\beta),b(\beta)}$. To apply Theorem \ref{thm:dan}, we use the fact, for any $\alpha \in X(\beta)$, that
\beq
\label{eq:rest}
\Omega(\alpha|\La) = \min\{\Gamma(\alpha,\lam):\lam \in Y(\beta)\}.
\eeq
Let us, for the moment, assume the validity of this equation and proceed with the remaining details of the proof.
As a consequence of this equation, we conclude that there exists a vector $\lam(\beta)$ such that
$\Omega(\beta|\La) = \Gamma(\beta,\lam(\beta))$. Moreover, when $\beta \in (\R \backslash \{0\})^n$ the function $\Gamma_\beta: \R_{++}^n \rightarrow \R$, defined for
$\lambda \in \R_{++}^n$, as $\Gamma_\beta(\lambda) =
\Gamma(\beta,\lam)$ is strictly convex on its domain and so, $\lam(\beta)$ is unique.

By construction, we know, for every $\alpha \in X(\beta)$, that
$$
\max\left\{\left|\lam_i(\alpha) - \frac{a(\beta)+b(\beta)}{2}\right|: i\in \N_n \right\} \leq \frac{a(\beta)+b(\beta)}{2}.
$$
From this inequality we shall establish that $\lam(\beta)$ depends continuously on $\beta$.
To this end, we choose any sequence $\{\beta^k : k \in \N\}$ which converges to $\beta$ and from the
above inequality we conclude that the sequence of vectors $\lam(\beta^k)$ is bounded. However this sequence
can only have one cluster point, namely $\lam(\beta)$, because $\Gamma$ is continuous. Specifically,
if $\lim_{k \rightarrow \infty} \lam(\beta^k) = {\tilde \lam}$, then, for every $\lam \in
\La$, it holds that $\Gamma(\beta^k,\lam(\beta^k)) \leq \Gamma(\beta^k,\lam)$
and, passing to the limit $\Gamma(\beta,{\tilde \lam}) \leq \Gamma(\beta,\lam)$, implying that ${\tilde \lam}=\lam(\beta)$.

Likewise, equation \eqref{eq:rest} yields the formula for the partial
derivatives of $\Omega(\cdot|\La)$. Specifically, we identify $F$ and
$f$ in Theorem \ref{thm:dan} with $\Gamma$ and $\Omega(\cdot|\La)$,
respectively, and note that $$
\frac{\partial \Omega}{\partial \beta_i}(\beta|\La) = \min \left\{\frac{\partial \Gamma}{\partial \beta_i}(\beta,\lam): \lam \in \La,~\Gamma(\beta,\lam)=\Omega(\beta|\La)\right\} = \frac{\partial \Gamma}{\partial \beta_i}(\beta,\lam(\beta))  = 2 \frac{\beta_i}{\lam_i(\beta)}.
$$

Therefore, the proof will be completed after we have established equation \eqref{eq:rest}.
To this end, we note that if $\lam=(\lam_i: i \in \N_n) \in \La \backslash Y(\beta)$ then there
exists $j \in \N_n$ such that
either $\lam_j < a(\beta)$ or $\lam_j > b(\beta)$. Thus, we have, for every $\alpha \in X(\beta)$, that
$$
\Gamma(\alpha,\lam) \geq \frac{1}{2} \left(\frac{\alpha_j^2}{\lam_j} + \lam_j\right)
\geq \frac{1}{2} \min\left(\frac{\beta_{\min}^2}{4a(\beta)},b(\beta)\right)
= \frac{c(\beta) +\epsilon}{2} \geq \Omega(\alpha|\La) + \frac{\epsilon}{2}.
$$
This inequality yields equation \eqref{eq:rest}.
\qed
\vspace{.2truecm}

We end this appendix by extracting the essential features of the
convergence of the alternating algorithm as described in Section
\ref{sec:algo}. We start with two compact sets, $X \subseteq \R^n$
and $Y \subseteq \R^m$, and a strictly convex function $F: X
\times Y \rightarrow \R$. Corresponding to $F$ we introduce two
additional functions, $f: X \rightarrow \R$ and $g:Y \rightarrow
\R$ defined, for every $x \in X,y \in Y$ as
$$
f(x) = \min \{F(x,y'): y' \in Y\},~~~~~g(y) = \min\{F(x',y): x' \in X\}.
$$
Moreover, we introduce the mappings $\phi_1: Y \rightarrow X$ and $\phi_2: X \rightarrow Y$, defined, for
every $x \in X$, $y \in Y$, as
$$
\phi_1(y) = {\rm argmin} \{F(x,y): x \in X\},~~~~\phi_2(x) = {\rm argmin} \{F(x,y): y \in Y\}.
$$
\begin{lemma}
The mappings $\phi_1$ and $\phi_2$ are continuous on their respective domain.
\end{lemma}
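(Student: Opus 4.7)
The plan is to establish continuity of $\phi_1$ by a standard subsequence argument built on compactness of $X$, continuity of $F$, and uniqueness of the minimizer guaranteed by strict convexity; the argument for $\phi_2$ is symmetric. Implicitly I will use that $F$, being strictly convex on the product of compact sets $X \times Y$, is continuous on the interior and, in the setting in which this lemma is applied, continuous on all of $X \times Y$ (as is the case for $E_\epsilon$ in Section \ref{sec:algo}); so I treat continuity of $F$ as given.

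First, I would fix $y \in Y$ and a sequence $\{y_k : k \in \N\} \subset Y$ with $\lim_{k \to \infty} y_k = y$, and set $x_k := \phi_1(y_k)$. Since $X$ is compact, $\{x_k\}$ has at least one cluster point; let $\{x_{k_j}\}$ be a subsequence with $\lim_{j \to \infty} x_{k_j} = x^\ast \in X$. By definition of $\phi_1$, for every $x \in X$ we have
$$
F(x_{k_j}, y_{k_j}) \leq F(x, y_{k_j}).
$$
Passing to the limit $j \to \infty$ and invoking continuity of $F$ on $X \times Y$, this yields $F(x^\ast, y) \leq F(x, y)$ for every $x \in X$. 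Hence $x^\ast$ minimizes $F(\cdot, y)$ on $X$. Strict convexity of $F$ implies that $F(\cdot, y)$ is strictly convex on the convex set $X$, so its minimizer is unique; therefore $x^\ast = \phi_1(y)$.

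This identifies $\phi_1(y)$ as the \emph{only} cluster point of the bounded sequence $\{x_k\}$. Combined with the compactness of $X$, this forces $\lim_{k \to \infty} x_k = \phi_1(y)$ (otherwise one could extract a further subsequence bounded away from $\phi_1(y)$ with, by compactness, its own cluster point, contradicting uniqueness). Since $\{y_k\}$ was an arbitrary sequence converging to $y$, the mapping $\phi_1$ is continuous at $y$, and since $y$ was arbitrary, it is continuous on $Y$. The identical reasoning, swapping the roles of $X$ and $Y$, establishes continuity of $\phi_2$ on $X$.

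The argument is routine; the only subtlety is making sure strict convexity of $F$ on $X \times Y$ really implies strict convexity of each restriction $F(\cdot, y)$ and $F(x, \cdot)$, which is immediate by considering a convex combination in the first (resp.\ second) argument only. No other delicate point arises, so I do not anticipate a genuine obstacle.
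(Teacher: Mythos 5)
Your proof is correct and follows essentially the same route as the paper's: extract a convergent subsequence of minimizers via compactness of $X$, pass to the limit in the inequality $F(\phi_1(y_{k}),y_{k}) \leq F(x,y_{k})$, and invoke strict convexity to identify the unique cluster point as $\phi_1(y)$. The only difference is that you spell out the final "unique cluster point plus compactness implies convergence of the whole sequence" step and the continuity of $F$ in slightly more detail than the paper does.
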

\begin{proof}
We prove that $\phi_1$ is continuous. The same argument applies to $\phi_2$. Suppose that $\{y^k: k \in \N\}$ is a sequence
in $Y$ which converges to some point $y \in Y$. Then, since $F$ is jointly strictly convex, the sequence
$\{\phi_1(y^k): k \in \N\}$ has only one cluster point in $X$, namely $\phi_1(y)$. Indeed, if there is a subsequence $\{\phi_1(y^{k_\ell}); \ell \in \N\}$ which converges to ${\tilde x}$, then by definition, we have, for every $x \in X$, $\ell \in \N$, that $F(\phi_1(y^{k_\ell}),y^{k_\ell}) \leq F(x,y^{k_\ell})$. From this inequality it follows that $F({\tilde x},y) \leq
F(x,y)$. Consequently, we conclude that ${\tilde x} = \phi_1(y)$. Finally, since $X$ is compact, we conclude that
the $\lim_{k \rightarrow \infty} \phi_1(y^k) = \phi_1(y)$.
\end{proof}
As an immediate consequence of the lemma, we see that $f$ and $g$ are continuous on their respective domains, because,
for every $x \in X, y \in Y$, we have that $f(x) = F(x,\phi_2(x))$ and $g(y) = F(\phi_1(y),y)$.

We are now ready to define the alternating algorithm.
\begin{definition}
Choose any $y_0 \in {\rm int}(Y)$ and, for every $k \in \N$, define the iterates
$$
x^k = \phi_1(y^{k-1})
$$
and
$$y^k = \phi_2(x^k).
$$
\end{definition}
\begin{theorem}
If $F: X \times Y \rightarrow \R$ satisfies the above hypotheses
and it is differentiable on the interior of its domain, and there
are compact subsets $X_0 \subset {\rm int}(X)$, $Y_0 \subseteq
{\rm int}(Y)$ such that, for all $k \in \N$, $(x^k,y^k) \in X_0
\times Y_0$, then the sequence $\{(x^k,y^k): k \in \N\}$ converges
to the unique minimum of $F$ on its domain. \label{aa:gen}
\end{theorem}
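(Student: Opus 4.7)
The plan is to show that $F(x^k,y^k)$ decreases monotonically, extract a convergent subsequence from the compact set $X_0 \times Y_0$, use continuity of $\phi_1$ and $\phi_2$ (already established in the preceding lemma) to identify the subsequential limit as a joint stationary point, and finally invoke strict convexity to pin down the unique minimum.

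First I would note, by definition of $\phi_1$ and $\phi_2$, that
\begin{equation*}
F(x^k,y^{k-1}) \geq F(x^k,y^k) \geq F(x^{k+1},y^k),
\end{equation*}
so the sequences $\theta_k := F(x^k,y^{k-1})$ and $\nu_k := F(x^k,y^k)$ are nonincreasing. Both are bounded below because $F$ is continuous on the compact set $X \times Y$, hence they share a common limit $\psi$. Since $(x^k,y^k) \in X_0 \times Y_0$ with $X_0 \times Y_0$ compact, there is a subsequence $\{(x^{k_\ell}, y^{k_\ell}) : \ell \in \N\}$ converging to some $(\tilde x, \tilde y) \in X_0 \times Y_0$, and by continuity of $F$, $F(\tilde x,\tilde y) = \psi$.

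Next I would identify $(\tilde x, \tilde y)$ as a joint minimizer in each coordinate. Since $y^{k_\ell} = \phi_2(x^{k_\ell})$ and $\phi_2$ is continuous, passing to the limit gives $\tilde y = \phi_2(\tilde x)$. Moreover $x^{k_\ell+1} = \phi_1(y^{k_\ell}) \to \phi_1(\tilde y)$ by continuity of $\phi_1$, and
\begin{equation*}
F(\phi_1(\tilde y), \tilde y) = \lim_{\ell \to \infty} F(x^{k_\ell+1}, y^{k_\ell}) = \psi = F(\tilde x, \tilde y),
\end{equation*}
so by uniqueness of the minimizer (strict convexity of $F(\,\cdot\,,\tilde y)$) we obtain $\tilde x = \phi_1(\tilde y)$. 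Consequently $F(\tilde x,\tilde y) \leq F(x,\tilde y)$ for all $x \in X$ and $F(\tilde x,\tilde y) \leq F(\tilde x, y)$ for all $y \in Y$. Because $(\tilde x, \tilde y)$ lies in $X_0 \times Y_0 \subset \mathrm{int}(X) \times \mathrm{int}(Y)$ and $F$ is differentiable there, both partial gradients vanish at $(\tilde x, \tilde y)$, so $\nabla F(\tilde x, \tilde y) = 0$. Strict convexity of $F$ then forces $(\tilde x, \tilde y)$ to be the unique global minimizer of $F$ on $X \times Y$.

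Finally, since every convergent subsequence of $\{(x^k,y^k)\}$ in the compact set $X_0 \times Y_0$ must share this same limit, the entire sequence converges to it. The only subtle point in this plan is the transition from ``joint coordinatewise minimizer'' to ``joint global minimizer''; this is exactly where differentiability on the interior together with strict convexity is used, and it is why the hypothesis $(x^k,y^k) \in X_0 \times Y_0 \subset \mathrm{int}(X) \times \mathrm{int}(Y)$ is essential --- without it a coordinatewise minimum at the boundary need not be critical.
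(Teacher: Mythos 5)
Your proof is correct and follows essentially the same route as the paper's: the monotone sandwich $\nu_k \leq \theta_k \leq \nu_{k-1}$, extraction of a convergent subsequence, continuity of $\phi_1,\phi_2$ to identify the limit as a coordinatewise minimizer, and differentiability plus strict convexity to upgrade this to the unique global minimum. You actually spell out two steps the paper leaves implicit (why $\tilde x = \phi_1(\tilde y)$, and why uniqueness of the subsequential limit gives convergence of the full sequence), which is a welcome addition.
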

\begin{proof}
First, we define, for every $k \in \N$, the real numbers $\theta_k=F(x^k,y^{k-1})$ and $\nu_k = F(x^k,y^k)$. We observe,
for all $k \geq 2$, that
$$
\nu_{k} \leq \theta_k \leq \nu_{k-1}.
$$
Therefore, there exists a constant $\psi$ such that $\lim_{k \rightarrow \infty} \theta_{k} = \lim_{k \rightarrow \infty} \nu_k = \psi$. Suppose, there is a subsequence $\{x^{k_\ell}: \ell \in \N\}$ such that $\lim_{\ell \rightarrow \infty} x^{k_\ell} = x$.
Then $\lim_{\ell \rightarrow \infty} \phi_2(x^{k_\ell}) = \phi_2(x) =:y$.
Observe that $\nu_k = f(x^k)$ and $\theta_{k+1} = g(y^k)$. Hence we conclude that
$$
f({x}) = g({y}) = \psi.
$$
Since $F$ is differentiable, $(x,y)$ is a stationary point of $F$
in ${\rm int}(X) \times {\rm int}(Y)$. Moreover, since $F$ is
strictly convex, it has a unique stationary point which occurs at
its global minimum.
\end{proof}

\bibliography{representer,panel,additional-bib}
\end{document}